\theoremstyle{plain}
\newtheorem{theorem}{Theorem}[section]
\newtheorem{proposition}[theorem]{Proposition}
\newtheorem{lemma}[theorem]{Lemma}
\newtheorem{corollary}[theorem]{Corollary}
\theoremstyle{definition}
\newtheorem{definition}[theorem]{Definition}
\newtheorem{assumption}[theorem]{Assumption}
\theoremstyle{remark}
\setlist{leftmargin=*,itemsep=0pt,parsep=0pt,topsep=1pt,partopsep=1pt}
\titlespacing{\section}{0pt}{1ex}{1ex}
\titlespacing{\subsection}{0pt}{0.5ex}{0ex}
\titlespacing{\subsubsection}{0pt}{0.5ex}{0ex}
\newcommand{\mypara}[1]{\vspace{2pt}\noindent {\bf {#1}.\xspace}}
\newcommand{\allnotes}[1]{}
\newcommand{\ignore}[1]{}
\renewcommand{\allnotes}[1]{\textit{#1}}
\newcommand{\notepanda}[1]{\allnotes{\textcolor{cyan}{[Panda: #1]}}}
\newcommand{\mathias}[1]{\allnotes{\textcolor{green}{[Mathias: #1]}}}
\newcommand{\notejinkun}[1]{\allnotes{\textcolor{orange}{[Jinkun: #1]}}}
\newcommand{\noteanqi}[1]{\allnotes{\textcolor{purple}{[Anqi: #1]}}}
\DeclareMathOperator*{\argmin}{arg\,min}
\newcommand{\quantity}{\text{AME}\xspace}
\newcommand{\AME}{\text{AME}\xspace}
\newcommand{\SV}{\text{SV}\xspace}
\newcommand{\somename}{Enola\xspace}
\def\ie{{i.e.},\xspace}
\def\eg{{e.g.},\xspace}
\date{}
\def\cP{\mathcal{P}}
\icmltitlerunning{Measuring the Effect of Training Data on Deep Learning Predictions via Randomized Experiments}
\begin{document}

\twocolumn[
\icmltitle{Measuring the Effect of Training Data on Deep Learning Predictions via Randomized Experiments}



\icmlsetsymbol{equal}{*}

\begin{icmlauthorlist}
\icmlauthor{Jinkun Lin}{equal,nyu}
\icmlauthor{Anqi Zhang}{equal,nyu}
\icmlauthor{Mathias L\'ecuyer}{ubc}
\icmlauthor{Jinyang Li}{nyu}
\icmlauthor{Aurojit Panda}{nyu}
\icmlauthor{Siddhartha Sen}{msr}
\end{icmlauthorlist}

\icmlaffiliation{nyu}{Department of Computer Science, New York University, New York, NY}
\icmlaffiliation{msr}{Microsoft Research, New York, NY}
\icmlaffiliation{ubc}{University of British Columbia, Vancouver, Canada}

\icmlcorrespondingauthor{Jinkun Lin}{jinkun.lin@nyu.edu}
\icmlcorrespondingauthor{Mathias L\'ecuyer}{mathias.lecuyer@ubc.ca}


\vskip 0.3in
]



\printAffiliationsAndNotice{\icmlEqualContribution} 

\begin{abstract}

We develop a new, principled algorithm for estimating the contribution of training data points to the behavior of a deep learning model, such as a specific prediction it makes. 
Our algorithm estimates the AME, a quantity that measures the expected (average) marginal effect of adding a data point to a subset of the training data, sampled from a given distribution. When subsets are sampled from the uniform distribution, the AME reduces to the well-known Shapley value.
Our approach is inspired by causal inference and randomized experiments: we sample different subsets of the training data to train multiple submodels, and evaluate each submodel's behavior. We then use a LASSO regression to jointly estimate the AME of each data point, based on the subset compositions. Under sparsity assumptions ($k \ll N$ datapoints have large AME), our estimator requires only $O(k\log N)$ randomized submodel trainings, 
improving upon the best prior Shapley value estimators. 
\ignore{
We extend our estimator to support control over its false positive rate using the Knockoffs method; and also to support hierarchical data.
We demonstrate the practicality of our approach by applying it to several data poisoning and model explanation tasks, across a variety of datasets.
}


\end{abstract}

\section{Introduction}
\label{sec:intro}

Machine Learning (ML) is now ubiquitous, with black-box models such as deep neural networks (DNNs) powering an ever increasing number of applications, yielding social and economic benefits.
However, these complex models are the result of long, iterative training procedures over large amounts of data, which make them hard to understand, debug, and protect.
As an important first step towards addressing these challenges,  we must be able to solve the problem of {\em data attribution}, which aims to pinpoint training data points with significant contributions to specific model behavior.
There are many use cases for data attribution: it can be used to assign value to different training data based on the accuracy improvements they bring~\cite{koh_accuracy_2019, jia_towards_2020}, explain the source of (mis)predictions \cite{koh_understanding_2017,basu_influence_2020}, or find faulty data points resulting from data bugs \cite{chakarov2016debugging} or malicious poisoning \cite{Shafahi2018PoisonFT}. 

Existing principled approaches to explain how training data points influence DNN behavior either measure Influence functions~\cite{koh_understanding_2017} or Shapley values~\cite{ghorbani_data_nodate,jia_towards_2020}.
Influence provides a local explanation that misses complex dependencies between data points as well as contributions that build up over time during training \cite{basu_influence_2020}. 
While Shapley values account for complex dependencies, they are prohibitively expensive to calculate: exact computation requires O($2^N)$ model evaluations, and the best known approximation requires $O(N\log \log N)$ model evaluations, where $N$ is the number of data points in the training set~\cite{jia_towards_2020}.

In this paper, we propose a new, principled metric for data attribution.  Our metric, called \quantity{}, measures the contribution of each training data point to a given behavior of the trained model (e.g., a specific prediction, or test set accuracy).  \quantity is defined as the expected marginal effect contributed by a data point to the model behavior over randomly selected subsets of the data. Intuitively, a data point has a large \quantity{} when adding it to the training data affects the behavior under study, regardless of which other data points are present. 
We show that the \quantity{} can be efficiently estimated using a carefully designed LASSO regression under the sparsity assumption (\ie there are $k\ll N$ data points with large \quantity{} values). In particular, our estimator requires only $O(k\log N)$ evaluations, which makes it practical to use with large training sets.
When using \quantity{} to detect data poisoning/corruption, we also extend our estimator to provide control over the false positive rate using the Knockoffs method~\cite{candes_panning_2017}.

When the size of subsets used by our algorithm is drawn uniformly, %
the \quantity{} reduces to the Shapley value (SV). As a result, our \quantity{} estimator provides a new method for  estimating the SVs of all training data points; under the same sparsity and monotonicity assumptions, we obtain a better rate 
than the previous state-of-the-art~\cite{jia_towards_2020}. 
Interestingly, our causal framing also supports working with groups of data points, which we call data sources. Many datasets are naturally grouped into sources, such as by time window, contributing user, or website.
In this setting, we extend the \quantity{} and our estimator to support hierarchical estimation for nested data sources.
For instance, this enables joint measurement of both users with large contributions, and the specific data points that drive their contribution.

We empirically evaluate the \quantity{} quantity and our estimator’s performance on three important applications: detecting data poisoning, explaining predictions, and estimating the Shapley value of training data. For each application, we compare our approach to existing methods.

In summary, we make the following contributions:
\begin{itemize}
\item We propose a new quantity for the data attribution problem, \quantity{}, with roots in randomized experiments from causal inference (\S\ref{sec:motivation}). We also show that SV is a special case of \quantity{}.
\item We present an efficient estimator for \quantity{} with an $O(k\log N)$ rate under sparsity assumptions (\S\ref{sec:methodology}). This also yields an $O(k\log N)$ estimator for sparse and monotonic SV, a significant improvement over the previous $O(N \log \log N)$ state-of-the-art \cite{jia_towards_2020}.
\item We extend the \quantity{} and our estimator to control false discoveries (\S\ref{subsec:knockoffs}) and support hierarchical settings of nested data sources (\S\ref{sec:hier}).  
\end{itemize}

\section{Average Marginal Effect (AME)}
\label{sec:motivation}

At a high level, our goal is to understand the impact of training data on the behavior of a trained ML classification model, which we call a query. Queries of interest include a specific prediction, or the test set accuracy of a model.
Below, we formalize our setting (\S\ref{subsec:problem}) and present our metric for quantifying data contributions  (\S\ref{subsec:ame}).

\subsection{Notations}
\label{subsec:problem}

Let $\mathcal{M}_S$ denote the classification model trained on dataset $S$ that we wish to analyze. In the rest of the paper, we refer to this as the \emph{main model}. We note that $\mathcal{M}_S$ belongs to a class of models $\mathcal{M}$ (\ie $\mathcal{M}_S\in \mathcal{M}$) with the same architecture, but trained on different datasets.
$\mathcal{M}_S$ maps each example from the input space $\mathcal X$ to a normalized score in $[0,1]$ for each possible class $\mathcal Y$, \ie $\mathcal{M}_S: \mathcal X \mapsto [0,1]^{|\mathcal Y|}$. Since the normalized scores across all classes sum  to one, they are often interpreted as a probability distribution over classes conditioned on the input data point, giving a confidence score for each class.

Let $Q(\mathcal{M}_S)$ be the query resulting in a specific behavior of $\mathcal{M}_S$ that we seek to explain. Formally, $Q: \mathcal{M} \mapsto [0,1]$ maps a model to a score in $[0, 1]$ that represents the behavior of the model on that query. 
For example, we may want to explain a specific prediction, \ie the score for label $l$ given to an input data point $n$, in which case $Q(\mathcal{M}_S) = \mathcal{M}_S(n)[l]$; or, we may want to explain the accuracy on a test set with inputs $I$ and corresponding labels $L$, in which case $Q(\mathcal{M}_S) = \frac{1}{|I|} \sum_{(n, l) \in (I,L)} \mathbf{1}\{\arg\max[\mathcal{M}_S(n)] = l\}$. Our proposed metric and estimator apply to any query, but our experiments focus on explaining specific predictions.

We use the query score $Q(\mathcal{M}_S)$ to represent the utility of training data set $S$, \ie $U(S) \triangleq Q(\mathcal{M}_S)$. When describing our technique, we will need to calculate the utility of various training data subsets, each of which is the result of the query $Q$ when applied to a model trained on a subset $S'$ of the data. Note that any approach for estimating the utility $U(S')$ may be noisy due to the randomness in model training.


\begin{figure}[t]
    \centering
    \includegraphics[width=1\columnwidth]{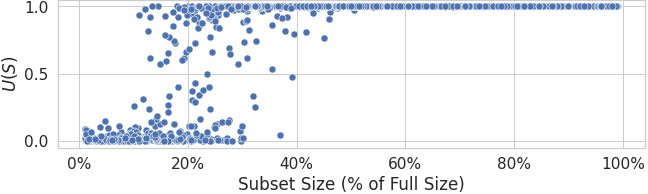}
    \caption{Utility vs. the subset size, measured on CIFAR10-50 dataset (see \S\ref{sec:eval}), where each point denotes a subset. Each subset is obtained by first drawing an inclusion probability $p$ from a uniform distribution with range from 0.01 to 0.99, and then including each datapoint with probability $p$.}
    \label{fig:yvsp}
\end{figure}

\subsection{Defining the Average Marginal Effect (AME)}
\label{subsec:ame}

How do we quantify the contribution of a training data point $n$ to the query result?
One approach, commonly referred to as the {\em influence} of $n$, defines the contribution of $n$ as  $U(S) - U(S \char`\\ \{n\})$, the marginal contribution of the data point when added to the rest of the data. This quantity can be calculated efficiently using an approach presented in~\cite{koh_understanding_2017}.
However, in practice, the marginal effect of a datapoint on the whole training set of an ML model is typically close to zero, a well-known shortcoming of influence \cite{basu_influence_2020}, which we confirm empirically in Fig.~\ref{fig:yvsp}. (We compare influence functions to our proposal in more detail in Appendix \ref{appendix:eval:inf}.)
The figure shows results from a data poisoning experiment run on the CIFAR10 dataset. It plots the utility of models trained on various random subsets of the poisoned training dataset. 
The utility is calculated as the score given to the wrongly predicted label for a poisoned test point.  As we can see, removing up to half of the training data at random has no impact on the utility, implying a close to zero influence of each training example on a model trained on the full training set.

To alleviate this issue, we notice that at least some training data points have to influence the utility (which goes from zero on very small subsets to one on large ones).
This influence happens on smaller subsets of the training data, around a size unknown in advance (between $10\%$ and $50\%$ of the whole dataset size in Fig.~\ref{fig:yvsp}'s example). 
Taking inspiration from the causal inference literature on measuring multiple treatment effects \cite{egami_causal_2019}, we thus propose to \emph{average} the marginal contribution of adding data point $n$ to data subsets of different sizes. We refer to this as the data point's \emph{Average Marginal Effect (\quantity{})}, defined as the expected marginal effect of $n$ on subsets drawn from a distribution $\mathcal{L}^n$: $\mathbb E_{S^n\sim \mathcal L^n}[U(S^n+\{n\}) - U(S^n)]$.
Here $S^n$ is a subset of training data points that do not contain $n$, sampled from $\mathcal L^n$. The marginal effect of $n$ with respect to $S^n$ is calculated as the difference in the query result on a model trained with and without $n$, \ie $U(S^n+\{n\})-U(S^n)$.

Clearly, the choice of sampling distribution $\mathcal L^n$ affects what \quantity{} is measuring, and how efficiently \quantity{} can be estimated (\S\ref{sec:methodology}). When choosing $\mathcal L^n$, we need to ensure that subsets of different sizes are well represented.
To see why, consider Fig.~\ref{fig:yvsp} again: since the region with non-zero marginal effect is unknown in advance, we must sample subsets across different subset sizes.
We hence propose to sample subsets by including each data point (except for data point $n$ being measured) with a probability $p$ sampled from a distribution $\mathcal{P}$ that ensure coverage across subset sizes (\eg we use a uniform distribution over a grid of values $\mathcal{P} = Uni\{0.2,0.4,0.6,0.8\}$ in most experiments).
Denoting $\mathcal{L}^n_{\mathcal{P}}$ as the subset distribution induced by $\mathcal P$, we have:
\vspace{-0.15in}
\begin{equation}
\label{eq:amep}
\AME_n(\cP)=\mathbb E_{S^n\sim \mathcal L^n_{\mathcal{P}}}[U(S^n+\{n\}) - U(S^n)].
\end{equation}
In what follows, we use the shorthand $\AME_n$ for $\AME_n(\mathcal{P})$ when $\cP$ is clear from context.

\subsection{Connection to the Shapley Value}

Interestingly, pushing the above proposal further and sampling $p$ uniformly over $[0, 1]$ reduces the \quantity{} to the Shapley value (SV), a well known but costly to estimate metric from game theory that has been proposed as a measure for data value \cite{jia_towards_2020,ghorbani_data_nodate}:
\begin{proposition}
\label{prop:ame-to-sv}
$\mathcal{P} = \textrm{Uni}(0,1) \Rightarrow \AME_n(\cP) = \textrm{\SV}_n$.
\end{proposition}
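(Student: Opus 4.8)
The plan is to unfold both quantities into explicit weighted sums over subsets and match coefficients. Let $D$ denote the full training set with $|D| = N$. I would start from the classical closed form of the Shapley value, $\SV_n = \sum_{S \subseteq D \setminus \{n\}} \frac{|S|!\,(N - |S| - 1)!}{N!}\,\bigl[U(S + \{n\}) - U(S)\bigr]$, so that the goal becomes showing that $\AME_n(\textrm{Uni}(0,1))$ assigns exactly this weight to each marginal contribution $U(S + \{n\}) - U(S)$.

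Next I would expand $\AME_n(\cP)$ for $\cP = \textrm{Uni}(0,1)$ by writing out the two-stage sampling that defines $\mathcal{L}^n_{\cP}$: first draw $p \sim \textrm{Uni}(0,1)$, then include each of the $N-1$ points other than $n$ independently with probability $p$. Conditioned on $p$, a fixed subset $S$ of size $k = |S|$ is drawn with probability $p^k (1-p)^{N-1-k}$, so its marginal probability is $\mathbb{P}[S^n = S] = \int_0^1 p^k (1-p)^{N-1-k}\,dp$. Since this depends on $S$ only through its size, I can regroup the expectation as $\AME_n(\textrm{Uni}(0,1)) = \sum_{S \subseteq D\setminus\{n\}} \mathbb{P}[S^n = S]\,\bigl[U(S + \{n\}) - U(S)\bigr]$.

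The one substantive step is evaluating the integral as a Beta function: $\int_0^1 p^k (1-p)^{N-1-k}\,dp = B(k+1, N-k) = \frac{k!\,(N-k-1)!}{N!}$. This coincides exactly with the Shapley coefficient recalled above, so substituting it back gives $\AME_n(\textrm{Uni}(0,1)) = \SV_n$, which is the claim.

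I expect the Beta-integral identity to be the only real content; everything else is bookkeeping. The subtlety I would be careful about is that, once $p$ is integrated out, the point inclusions are no longer independent, so the per-subset probability must be obtained by conditioning on $p$ first and integrating afterwards --- it is precisely this order of operations that produces the Beta integral and hence the Shapley weights. I would also note that the argument uses no structure of $U$ (no monotonicity, sparsity, or boundedness beyond the setup), so the identity is an exact, distribution-level statement rather than an estimation guarantee.
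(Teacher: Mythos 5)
Your proof is correct and follows essentially the same route as the paper's: both integrate out $p\sim\textrm{Uni}(0,1)$ to obtain the probability of drawing a given subset and match it to the Shapley coefficient $\frac{|S|!\,(N-|S|-1)!}{N!}$. The only cosmetic difference is that you evaluate the per-subset Beta integral $B(|S|+1,\,N-|S|)$ directly, whereas the paper first identifies the subset-size law as a discrete uniform (Beta-binomial with $\alpha=\beta=1$) and then invokes symmetry over subsets of equal size; the underlying identity is the same.
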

\begin{proof}
When $p$ is fixed, the subset size follows a binomial distribution with $N-1$ trials and probability of success $p$. When $p \sim \textrm{Uni}(0,1)$, the compound distribution is a beta-binomial with $\alpha=\beta=1$, and the subset size follows a discrete uniform distribution, each subset size having a probability of $1/N$.
Since by symmetry each possible subset of a given size is equally likely, $AME_n=\sum_{S^n \subseteq [N]\setminus \{n\}} \frac{1}{N} {N-1 \choose |S^n|}^{-1} \left(U(S^n+\{n\}) - U(S^n)\right)$ which is precisely the definition of Shapley value $\SV_n$.
\end{proof}

In concurrent work, \cite{kwon2021beta} also highlight this relationship and propose Beta($\alpha$, $\beta$)-Shapley as a natural and practically useful extension to the SV, enabling variable weighting of different subset sizes to integrate domain knowledge. The \AME can be seen as a generalization of Beta Shapley, which corresponds to $\AME(\mathcal{P})$ with $\mathcal{P} = \textrm{Beta}(\alpha, \beta)$. In this work, we focus on a discrete grid for $\mathcal{P}$, but also study the symmetric Beta and truncated uniform distributions as SV approximations (\S\ref{subsec:ame-to-sv}).

This connection between \AME and Beta-Shapley also yields two new insights. First, Equation 4 and Theorem 2 of \cite{kwon2021beta} imply that the \AME is a semivalue. That is, it satisfies three of the \SV axioms: linearity, null player, and symmetry, but not the efficiency axiom (\ie the $\AME_n$ do not sum to $U(S)$).
Second, our \AME estimator yields a scalable estimator for the Beta($\alpha>1$, $\beta>1$)-Shapley values of a training set (using $\mathcal{P} = \textrm{Beta}(\alpha, \beta)$), answering a question left to future work in \cite{kwon2021beta}.

\ignore{
Indeed, it is reasonable to assume sparse contribution. For instance, in the case of data poisoning detection, the attacker is typically only able to poison a limited number of sources given their limited resources at hand. And for the debugging application, the mis-labeled or misleading data should also take up only a small portion of the whole training set. Recently, \cite{feldman2020neural} also shows that data usually exhibit long-tailed distribution, and many tail data (e.g. 2.92\% in ImageNet) in the test set are only influenced by one example in the training data. These data are atypical and thus more likely to get queried. \notejinkun{I remember the previous sentence does not make sense for Sid. Maybe need to rewrite or remove.}
}

\ignore{
We also make a \emph{monotonicity assumption}. 
Now the monotonicity assumption can be stated as follows: given any proponent/opponent $i$, its marginal effect on $S$, \ie $U(S+\{i\})-U(S)$, is non-negative/non-positive for every $S\subseteq [n]-i$, where $[n]=\{1,2,\dots,n\}$; Moreover, There exists at least one subset on which the marginal effect is non-zero (otherwise this source should be considered as neutral). Intuitively, in the application of corrupted data detection, the assumption roughly translates to that adding a corrupted source makes the prediction more corrupted or equally corrupted. Though the assumption is strong as it requires to hold on every subset, it is stronger than necessary for our quantity (discussed in \S\ref{subsec:ame}) to work. At a high level, as long as the overall \emph{average} effect on multiple subsets has the right sign, our quantity should work.
}

\section{Efficient Sparse AME Estimator}
\label{sec:methodology}

Computing the \quantity{} exactly would be costly, as it requires computing $U(S)$ for many different data subsets $S$, and each such computation requires training a model $\mathcal{M}_S$ to evaluate the query $Q(\mathcal{M}_S)$. Furthermore, measurements of $Q(\mathcal{M}_S)$ are noisy due to randomness in model training, and can require multiple samples.
However, for the use cases we target (\S\ref{sec:intro}), we expect that data points with large \quantity{}s  will comprise only a \emph{sparse} subset of the training data for a given query $Q$.
Hence, for the rest of this paper, we make the following strong sparsity assumption:
\begin{assumption}
Let $k$ be the number of data points with non-zero \quantity{}'s. $k$ is small compared to $N$, or $k \ll N$.
\end{assumption}
All results in this section (\S\ref{sec:methodology}) hold under a weaker, approximate sparsity assumption: that there exists a good sparse approximation to the \AME.
However, the results are cumbersome to state without adding much intuition, so we defer the details of this setting to Appendix \ref{appendix:approximate-sparsity}.
In practice, the sparsity assumption (and the relaxed version to a stronger degree) holds for use cases such as corrupted data detection, which typically impacts only a small portion of the training data; 
or when the predictions under scrutiny arise from queries on the tails of the distribution, which are typically strongly influenced by only a few examples in the training data~\cite{ghorbani_data_nodate,jia_towards_2020,feldman2020neural}.

Under this assumption, we can efficiently estimate the \quantity{} of each training data point with only $O(k\log(N))$ utility computations, by leveraging a reduction to regression and LASSO based estimation (\S\ref{subsec:lasso}). We then characterize the error in estimating Shapley values using this approach (\S\ref{subsec:ame-to-sv}), and show that under a common monotonicity assumption, our estimator achieves small $L_2$ errors.

\def\indwidth{1.1em}
\IncMargin{\indwidth}
\begin{algorithm2e}[t!]
{\small
  \DontPrintSemicolon
  \SetKwFunction{FtrainOnComb}{trainOnSubset}
  \SetKwFunction{FtrainOnSubset}{trainOnSubset}
  \SetKwFunction{FsampleSubsets}{sampleSubsets}
  \SetKwFunction{FKnockoff}{knockoff}
  \SetKwFunction{Festimate}{estimate}
  \SetKwFunction{Ftrain}{train}
  \SetKwFunction{FLASSO}{LASSO}
  \SetKwFunction{Fselect}{select}
  \Indentp{-\indwidth}
   \KwIn{number of data points $N$, number of subsets $M$ to draw, probabilities $\mathcal P=Uni\{p_1,\dots,p_b\}$, query $Q$}
  \Indentp{\indwidth}
   \tcp{offline phase}
   $\mathcal{M}_S, \mathbf{X} \gets$ \FsampleSubsets{$M$}\;
   \tcp{online phase}
   \While{$Q \gets$ new query}{
        $\hat \beta_{lasso} \gets$ \Festimate{$\mathcal{M}_S, \mathbf X, Q, M$}\;
   }\;

   \Fn(){\FsampleSubsets{$M$}}{
   $\mathcal{M}_S \gets []$
   \tcp*[l]{subset models}
   $\mathbf{X} \gets zeros(M,N)$
   \tcp*[l]{source covariates}
   \For{$m\gets1$ \KwTo $M$ }{
    $S \gets \{\}$\;
    $p \sim \mathcal{P}$\;
    \For{$n\gets1$ \KwTo $N$ }{
     $r \sim Bernoulli(p)$\;
     \lIf{$r = 1$}{$S\gets S+\{n\}$}
     $\mathbf{X}[m,n] \gets \frac{r}{p}-\frac{1-r}{1-p}$\; 
    }
    $\mathcal{M}_S$.append(\FtrainOnSubset($S$))\;
   }
   \KwRet{$\mathcal{M}_S, \mathbf{X}$}
   \;
   }\;
   
   \Fn(){\Festimate{$\mathcal{M}_S, \mathbf X, Q, M$}}{
   $y\gets zeros(M)$\tcp*[l]{outcome vector}
   \For{$m\gets1$ \KwTo $M$ }{
    $y[m]\gets Q(\mathcal{M}_S[m])$\tcp*[l]{inference}
   }
   \KwRet{$\hat \beta_{lasso}\gets$ \FLASSO{$\mathbf{X},y,\lambda$}}\tcp*[l]{$\lambda$ is chosen by cross validation.}
   }
   }
   \caption{Overall Workflow\label{alg:approach}}
  \end{algorithm2e}

    \DecMargin{\indwidth}

\subsection{A Sparse Regression Estimator for the \quantity{}}
\label{subsec:lasso}


Our key observation is that we can re-frame the estimation of all $\AME_n$'s as a specific linear regression problem.
While a regression-based estimator for the SVs is known \cite{lundberg2017unified,williamson2020efficient}, it is based on a weighted regression with constraints.
Instead, we propose a featurization-based regression formulation without weights or constraints, which enables efficient estimation under sparsity using LASSO, a regularized linear regression method.

\mypara{Regression formulation}
To compute the $\AME_n$ values, we begin by producing $M$ subsets of the training data, $S_1,S_2,..., S_M$. Each subset $S_m$ is sampled by first selecting a $p$ (drawn from $\mathcal{P}$) and then including each training data point with probability $p$. 
Observation $\mathbf X$ is a $M\times N$ matrix, where row $\mathbf X[m,:]$ consists of $N$ features, one for each training data point, to represent its presence or absence in the sampled subset $S_m$. $y$ is a vector of size $M$, where $y[m]$ represents the utility score measured for the sampled subset $S_m$, \ie $y[m]=U(S_m)=Q(\mathcal{M}_{S_m})$.

How should we design $\mathbf X$ (\ie craft its features) such that the fit found by linear regression, $\beta^{*}$, corresponds to the $\AME$? Let us first examine the simple case where subsets are sampled using a fixed $p$. In this case, we can set $\mathbf X[m,n]$ to be $+1$ when data point $n$ is included in $S_m$ and $-1$ otherwise.
Intuitively, because all data points are assigned to the subset models independently, features $\mathbf X[:,n]$ do not ``interfere'' in the regression and can be fitted together, re-using computations of $U(S_m)$ across training data points $n$.

Supporting different values of $p$ (each row's subset is sampled with a different probability) is more subtle, as the different probabilities of source inclusion induce both a dependency between source variables $\mathbf X[:,n]$, and a variance weighted average between $p$s, whereas $\AME_n$ is defined with equal weights for each $p$.
To address this, we use a featurization that ensures that variables are not correlated, and re-scales the features based on $p$ to counter-balance the variance weighting. Concretely, for each observation (row) $\mathbf X[m,:]$ in our final regression design, we sample a $p$ from $\mathcal{P}$, and sample $S_m$ by including each training data point independently with probability $p$. We set $\mathbf X[m,n] = \frac{1}{\sqrt vp}$ if $n \in S_m$ and  $\mathbf X[m,n] = \frac{-1}{\sqrt v(1-p)}$ otherwise; where $v=\mathbb E[\frac{1}{p(1-p)}]$ is the normalizing factor ensuring that the distribution of $X[m,n]$ has unit variance.
Algorithm \ref{alg:approach}, \FsampleSubsets{}, summarizes this.
In what follows, we use $X$ to denote the random variables from which the $\mathbf X[m,:]$'s are drawn (since each row is drawn independently from the same distribution), subscripts $X_n$ to denote the random variable for feature $n$ (\ie from which $\mathbf X[m,n]$ is drawn), and $Y$ for the random variable associated with $y[m]$. 
Under our regression design, we have that:

\begin{proposition}
\label{prop:regression-reduction}
Let $\beta^*$ be the best linear fit on $(X, Y)$:
\begin{equation}
\beta^{*}=\underset{\beta \in \mathbb{R}^{N}}{\argmin}~\mathbb{E}\left[(Y-\left\langle \beta, X\right\rangle)^{2}\right],
\end{equation}
then $AME_n/\sqrt v=\beta^*_n, \ \forall n\in[N]$, where $v=\mathbb{E}_{p}[\frac{1}{p(1-p)}]$. 
\end{proposition}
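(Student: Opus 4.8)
The plan is to exploit the first-order optimality condition for the population least-squares problem. Since $\beta^*$ minimizes the convex objective $\mathbb{E}[(Y-\langle\beta,X\rangle)^2]$, it is characterized by the normal equations $\mathbb{E}[X X^\top]\,\beta^* = \mathbb{E}[X Y]$. Thus the whole proof reduces to two computations: (i) showing that the second-moment matrix $\Sigma \triangleq \mathbb{E}[X X^\top]$ equals the identity, so that $\beta^*_n = \mathbb{E}[X_n Y]$ coordinate-wise; and (ii) evaluating the cross-moment $\mathbb{E}[X_n Y]$ and matching it to $\AME_n/\sqrt v$. Throughout I would condition on the inclusion probability $p$ first, since given $p$ the coordinates of $X$ are independent, and only afterward take the outer expectation over $p\sim\cP$.

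First I would establish that the design is orthonormal, i.e.\ $\Sigma = I$. Fix $p$. Because $n\in S$ with probability $p$, a direct computation gives $\mathbb{E}[X_n \mid p] = p\cdot\frac{1}{\sqrt v\,p} - (1-p)\cdot\frac{1}{\sqrt v\,(1-p)} = 0$, so each feature is centered conditionally on $p$, hence unconditionally. For the diagonal entries, $\mathbb{E}[X_n^2\mid p] = p\cdot\frac{1}{v p^2} + (1-p)\cdot\frac{1}{v(1-p)^2} = \frac{1}{v\,p(1-p)}$, and averaging over $p$ yields $\mathbb{E}[X_n^2] = \frac{1}{v}\,\mathbb{E}_p[\frac{1}{p(1-p)}] = 1$ by the definition of $v$ — this is exactly what the rescaling by $\sqrt v$ buys. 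For the off-diagonal entries with $n\neq n'$, the key point is that $X_n$ and $X_{n'}$ are conditionally independent given $p$ (data points are included independently), so $\mathbb{E}[X_n X_{n'}\mid p] = \mathbb{E}[X_n\mid p]\,\mathbb{E}[X_{n'}\mid p] = 0$, and the outer expectation preserves this. Hence $\Sigma = I$ and $\beta^*_n = \mathbb{E}[X_n Y]$.

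Next I would compute $\mathbb{E}[X_n Y]$ by splitting on the inclusion of $n$ and using that the remaining subset $S^n$ (the other $N-1$ points, each included with probability $p$) is independent of whether $n$ is included. Conditioning on $p$: with probability $p$ we have $n\in S$, $X_n=\frac{1}{\sqrt v\,p}$, and $Y=U(S^n+\{n\})$; with probability $1-p$ we have $n\notin S$, $X_n=\frac{-1}{\sqrt v\,(1-p)}$, and $Y=U(S^n)$. The factors of $p$ and $1-p$ cancel against the $\frac1p$ and $\frac{1}{1-p}$ in the features, leaving $\mathbb{E}[X_n Y\mid p] = \frac{1}{\sqrt v}\,\mathbb{E}_{S^n}\!\big[U(S^n+\{n\}) - U(S^n)\big]$, i.e.\ $1/\sqrt v$ times the marginal effect averaged over subsets drawn at that fixed $p$. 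Taking the expectation over $p\sim\cP$ then reconstructs exactly $\AME_n/\sqrt v$ as defined in Eq.~\eqref{eq:amep}, giving $\beta^*_n = \AME_n/\sqrt v$.

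The main obstacle — and the real content of the statement — is the orthonormality in step (i). The subtlety is that unconditionally the feature columns are correlated through the shared draw of $p$ (as the paper notes), so one cannot simply invoke independence; what makes the off-diagonal moments vanish is that the $p$-dependent rescaling forces $\mathbb{E}[X_n\mid p]=0$ for \emph{every} $p$, which neutralizes the shared-$p$ coupling after conditioning. Everything else is bookkeeping, provided one treats the utilities $U(\cdot)$ as fixed (possibly noisy) quantities so the conditional expectations over $S^n$ are well-defined; the argument goes through verbatim for the expected utility if training is randomized.
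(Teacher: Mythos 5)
Your proof is correct and follows essentially the same route as the paper's: both hinge on the observation that $\mathbb E[X_n\mid p]=0$ makes the (unit-variance) features uncorrelated despite the shared draw of $p$, and both then evaluate $\mathbb E[X_nY]$ by conditioning on $p$ and splitting on the inclusion of $n$ so that the $p$ and $1-p$ factors cancel. The only cosmetic difference is that you invoke the normal equations $\mathbb E[XX^\top]\beta^*=\mathbb E[XY]$ with $\mathbb E[XX^\top]=I$, whereas the paper uses the residual-regression formula $\beta^*_n=\mathrm{Cov}(Y,\tilde X_n)/\mathrm{Var}[\tilde X_n]$ and notes $\tilde X_n=X_n$; these are equivalent here.
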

\begin{proof}
For a linear regression, we have (see, \eg Eq. 3.1.3 of \cite{angrist2008mostly}): $\beta^*_n=\frac{Cov(Y,\tilde X_n)}{Var[\tilde X_n]}$,
where $\tilde X_n$ is the regression residual of $X_n$ on all other covariates $X_{-n}=(X_1,\dots,X_{n-1}, X_{n+1},\dots, X_{N})$.
By design, $\mathbb E[X_n|X_{-n}]=\mathbb E_p[X_n | p]=0$,
implying $\tilde X_n=X_n-\mathbb E[X_n|X_{-n}]=X_n$. Therefore:
\begin{equation*}
    \beta^*_n=\frac{Cov(Y,\tilde X_n)}{Var[\tilde X_n]}=\frac{Cov(Y,X_n)}{Var[X_n]}=\frac{\mathbb E[X_nY]}{Var[X_n]} .
\end{equation*}
Notice that $\mathbb E[X_nY] = \mathbb E_{p}[\mathbb E[X_nY|p]]$ with:
\begin{equation*}
\begin{aligned}
&\mathbb E[X_nY|p] \\
& = p\cdot\mathbb E[X_nY|p, n\in S]+(1-p)\cdot\mathbb E[X_nY|p, n\notin S]\\
& = p\frac{1}{\sqrt vp} \mathbb E[Y|p,n\in S]+(1-p)\frac{-1}{\sqrt v(1-p)}\mathbb E[Y|p,n\notin S]\\
& = \frac{1}{\sqrt v} (\mathbb E[Y|p,n\in S]-\mathbb E[Y|p,n\notin S])
\end{aligned}
\end{equation*}
Combining the two previous steps yields:
\begin{equation*}
\begin{aligned}
\beta^*_n = \frac{\mathbb E_{p}[\mathbb E[Y|p,n\in S]-\mathbb E[Y|p,n\notin S]]}{Var[X_n]\cdot \sqrt v} = \frac{AME_n}{Var[X_n]\cdot \sqrt v}.
\end{aligned}
\end{equation*}
Noticing that $Var[X_n]=\mathbb E[Var[X_n|p]]+Var[\mathbb E[X_n|p]]=\mathbb E[\frac{1}{p(1-p)}]/v=1$ concludes the proof.
\end{proof}
Proposition~\ref{prop:regression-reduction} shows that, by solving the linear regression of $y$ on $\mathbf X$ with infinite data, $\beta_n$, the linear regression coefficient associated with $X_n$, becomes equal to the $AME_n$ we desire re-scaled by a known constant.
Of course, we do not have access to infinite data. Indeed, each row in this regression comes from training a model on a subset of the original data, so limiting their number ($M$) is important for scalability. Ideally, this number would be smaller than the number of features (the number of training data points $N$), even though this leads to an under-determined regression, making existing regression based approaches \cite{lundberg2017unified,williamson2020efficient} challenging to scale to large values of $N$. Fortunately, in our design we can still fit this under-determined regression by exploiting sparsity and LASSO.

\mypara{Efficient estimation with LASSO}
To improve our sample efficiency and require fewer subset models for a given number of data points $N$, we leverage our sparsity assumption and known results in high dimensional statistics.
Specifically, we use a LASSO estimator, which is a linear regression with an $L_1$ regularization:
\begin{equation*}
    \hat{\beta}_{lasso}=\underset{\beta \in \mathbb{R}^{N}}{\argmin} \left( \left(y-\left\langle \beta, \mathbf X \right\rangle\right)^{2}+\lambda \|\beta\|_{1}\right).
\end{equation*}
LASSO is sample efficient when the solution is sparse \cite{lecue_regularization_2017}. Recall that $k$ is the number of non-zero $\AME_n$ values, and $M$ is the number of subset models. Our reduction to regression in combination with a result on LASSO's signal recovery lead to the following proposition:

\begin{proposition}
\label{prop:lasso-rate}
If $X_n$'s are bounded in $[A,B]$, $N\geq 3$ and $M\geq k(1+\log(N/k))$, there exist a regularization parameter $\lambda$ and a constant $C(B-A,\delta)$ such that
\begin{align*}
\|\hat{\beta}_{lasso}-\frac{1}{\sqrt v}\AME\|_2 \leq C(B-A,\delta) \sqrt{\frac{k\log(N)}{M}}
\end{align*}
holds with probability at least $1-\delta$, where $v=\mathbb{E}_{p}[\frac{1}{p(1-p)}]$.
\end{proposition}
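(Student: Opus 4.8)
The plan is to recognize Proposition~\ref{prop:lasso-rate} as an instance of the classical $\ell_2$-estimation guarantee for the LASSO and to verify its hypotheses for our specific design. By Proposition~\ref{prop:regression-reduction}, the target $\frac{1}{\sqrt v}\AME$ is exactly the population best linear predictor $\beta^*$, and under the sparsity assumption it is $k$-sparse. I would therefore write the exact linear model $Y = \langle \beta^*, X\rangle + \varepsilon$ with $\varepsilon := Y - \langle\beta^*,X\rangle$. The first-order (normal) equations defining $\beta^*$ give $\mathbb E[\varepsilon X_n]=0$ for every $n$, i.e. the residual is mean-zero and uncorrelated with each covariate; moreover $Y\in[0,1]$ and $X_n\in[A,B]$ are bounded, so $\varepsilon$ is bounded and hence sub-Gaussian. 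This reduces the claim to a standard sparse-recovery statement.

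Next I would verify the two ingredients that drive every LASSO $\ell_2$ bound: a restricted-eigenvalue-type condition on the design and control of the multiplier (noise) term. For the design, the key computation is that the population Gram matrix is the identity. Its diagonal entries equal $\mathrm{Var}[X_n]=1$, exactly as established at the end of the proof of Proposition~\ref{prop:regression-reduction}; its off-diagonals vanish because, conditional on $p$, the features $X_n$ and $X_m$ ($n\neq m$) are independent with zero conditional mean, so $\mathbb E[X_nX_m]=\mathbb E_p[\mathbb E[X_n\mid p]\,\mathbb E[X_m\mid p]]=0$. Hence the population design is orthonormal and satisfies the restricted-eigenvalue (equivalently, small-ball) condition with constant $1$. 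Because the rows $\mathbf X[m,:]$ are i.i.d. and bounded, a union bound together with a Bernstein/Hoeffding inequality shows the empirical Gram matrix $\tfrac1M\mathbf X^\top\mathbf X$ deviates from the identity by $O(\sqrt{\log N/M})$ entrywise; in the regime $M\ge k(1+\log(N/k))$ this transfers the population condition to all $2k$-sparse directions with high probability.

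I would then control the multiplier term and fix $\lambda$. Since $\mathbb E[X_n\varepsilon]=0$ and $X_n\varepsilon$ is bounded, a union bound over the $N$ coordinates gives $\tfrac1M\|\mathbf X^\top\varepsilon\|_\infty \le c\sqrt{\log N/M}$ with probability at least $1-\delta/2$, where $c$ is governed by $B-A$ and $\delta$. Choosing $\lambda$ proportional to this scale places us on the standard good event $\lambda\ge \tfrac2M\|\mathbf X^\top\varepsilon\|_\infty$. Invoking the textbook LASSO error bound under the restricted-eigenvalue condition (e.g. via the small-ball method of \cite{lecue_regularization_2017}) then yields $\|\hat\beta_{lasso}-\beta^*\|_2 \le c'\,\kappa^{-1}\sqrt k\,\lambda \le C(B-A,\delta)\sqrt{k\log N/M}$, and a final union bound over the design and multiplier events keeps the total failure probability below $\delta$.

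The main obstacle I anticipate is keeping the constant independent of $k$ in the multiplier step. A naive bound $|\varepsilon|\le 1+\|\beta^*\|_1\|X\|_\infty$ lets the residual scale with $k$ (and with $1/\min(p,1-p)$), which would inflate the rate to $\sqrt{k^2\log N/M}$. Avoiding this requires handling the noise through the boundedness of the response $Y\in[0,1]$ directly in the multiplier process rather than through a crude $\ell_1$-times-$\ell_\infty$ bound on $\langle\beta^*,X\rangle$, and using that $\cP$ keeps $p$ bounded away from $0$ and $1$ so that $v$ and the feature range $[A,B]$ are finite and the sub-Gaussian scale is fully captured by $B-A$. The small-ball formulation is well suited to this, since its multiplier complexity for $\ell_1$-regularized recovery over $k$-sparse vectors is exactly of order $\sqrt{k\log(N/k)/M}$, matching the stated sample-complexity threshold.
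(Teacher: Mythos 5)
Your high-level reduction is the same as the paper's: identify $\beta^*=\frac{1}{\sqrt v}\AME$ via Proposition~\ref{prop:regression-reduction}, verify that the population design is orthonormal ($\mathbb E[X_iX_j]=\mathbf 1\{i=j\}$, exactly the computation in the paper), and then invoke a sparse-recovery bound --- the paper does this by checking the hypotheses of Theorem~1.4 of \cite{lecue_regularization_2017} (isotropic $L$-subgaussian design, noise in $L_q$ for some $q>2$) in Proposition~\ref{prop:simple-lasso-error-bound}. You also correctly flag the central difficulty, namely that the residual $\xi=Y-\langle\beta^*,X\rangle$ admits only a $k$-dependent almost-sure bound. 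However, two of your concrete steps would not deliver the stated rate. First, the restricted-eigenvalue transfer: entrywise concentration of $\tfrac1M\mathbf X^\top\mathbf X$ at scale $O(\sqrt{\log N/M})$ only controls quadratic forms over $k$-sparse directions up to a factor $k$ (via $|u^\top(\hat\Sigma-I)u|\le k\|\hat\Sigma-I\|_\infty\|u\|_2^2$), so it certifies an RE condition only when $M\gtrsim k^2\log N$, not at the threshold $M\ge k(1+\log(N/k))$ in the statement. Reaching the linear-in-$k$ threshold genuinely requires the chaining/small-ball machinery of \cite{lecue_regularization_2017}, which you mention only parenthetically; it is not an optional packaging of the elementary argument but the reason the rate holds.

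Second, you leave the noise term unresolved. Declaring $\xi$ ``bounded and hence sub-Gaussian'' buries a sub-Gaussian parameter that scales with $k$ (and with $N$ through $\|\beta^*\|_1\|X\|_\infty$), and saying one should ``handle the noise through the boundedness of $Y$ directly'' is the right instinct but not an argument. The paper's resolution is a specific interpolation trick: since the best linear fit is no worse than the zero predictor and $Y\in[0,1]$, one has $\mathbb E[\xi^2]\le 1$; combining this with the crude bound $|\xi|\le DN+1$ and choosing $q=2+1/\log(DN+1)>2$ gives $\mathbb E[|\xi|^q]\le\mathbb E[\xi^2](DN+1)^{q-2}\le e$, so $\|\xi\|_{L_q}$ is an absolute constant and the final constant depends only on $B-A$ and $\delta$. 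This is exactly the kind of step your sketch needs to make explicit; without it (or an equivalent device) the multiplier bound carries a hidden factor of $k$ and the claimed $\sqrt{k\log N/M}$ rate degrades.
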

\begin{proof}
We provide a proof sketch here. From Proposition \ref{prop:regression-reduction}, we know that $\frac{\AME}{\sqrt v}$ is the best linear estimator of the regression of $Y$ on $X$. Applying Theorem 1.4 from \cite{lecue_regularization_2017} directly yields the error bound. The bulk of the proof is showing that our setting satisfies the assumptions of Theorem 1.4, which we argue in Appendix~\ref{proof:lasso-rate}.
\end{proof}
\vspace{-.15in}

As a result, LASSO can recover all $\AME_n$'s with low $L_2$ error $\varepsilon$ using $M=C(B-A,\delta)^2/\varepsilon^2 v k\log (N)=O(k\log(N))$ subest models. Eliminating a linear dependence on the number of data points ($N$) is crucial for scaling our approach to large datasets.

\subsection{Efficient Sparse SV Estimator}
\label{subsec:ame-to-sv}

Following Prop. \ref{prop:ame-to-sv}, it is tempting to estimate the SV using our \quantity{} estimator by sampling $p\sim Uni(0,1)$.
However, Prop. \ref{prop:lasso-rate} would not apply in this case, because the $X_n$'s are unbounded due to our featurization. Indeed $v=\infty$ when $p$ is arbitrarily close to $0$ and $1$.
We address this problem by sampling $p \sim Uni(\varepsilon, 1-\varepsilon)$, truncating the problematic edge conditions.
While this solves our convergence issues, it leads to a discrepancy between the \SV and \AME.

Under such a truncated uniform distribution for $p$, we can show that $|\SV_n - \AME_n|$ is bounded, and applying our \quantity{} estimator yields the following $L_\infty$ bound when the \quantity{} is sparse (details in Appendix \ref{appendix:non-monotone}, and the intuition behind the proof is similar to that of Corollary \ref{corollary:efficient-sv}): 

\begin{corollary}
\label{corollary:efficient-sv-linfty-main}
When $\AME_n \in [0,1]$, for every constants $\varepsilon>0, \delta>0, N\geq 3$, there exist constants $C_1(\varepsilon, \delta)$, $\varepsilon'$, and a LASSO regularization parameter $\lambda$, such that when the number of samples $M\geq C_1(\varepsilon, \delta)k\log N$, $\|\sqrt v\hat\beta_{lasso}-\SV\|_{\infty} \leq \varepsilon$ holds with a probability at least $1-\delta$, where $v=\mathbb E_{p\sim Uni(\varepsilon', 1-\varepsilon')}[\frac{1}{p(1-p)}]$.
\end{corollary}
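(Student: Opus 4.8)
The plan is to bound the target $L_\infty$ error by the triangle inequality, splitting it into an \emph{approximation} (bias) term and an \emph{estimation} term:
\[
\|\sqrt v\,\hat\beta_{lasso}-\SV\|_\infty \;\le\; \underbrace{\|\sqrt v\,\hat\beta_{lasso}-\AME\|_\infty}_{\text{estimation}} \;+\; \underbrace{\|\AME-\SV\|_\infty}_{\text{approximation}},
\]
where $\AME=\AME(\cP)$ is computed with the truncated distribution $\cP=Uni(\varepsilon',1-\varepsilon')$. I would choose $\varepsilon'$ first, to drive the approximation term below $\varepsilon/2$, and then, with $\varepsilon'$ fixed, choose $M$ large enough to drive the estimation term below $\varepsilon/2$.

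For the approximation term, write the fixed-$p$ marginal effect $g_n(p)=\mathbb{E}_{S^n\sim\mathrm{Bern}(p)}[U(S^n+\{n\})-U(S^n)]$, so that $\AME_n(\cP)=\mathbb{E}_{p\sim\cP}[g_n(p)]$. By Proposition~\ref{prop:ame-to-sv}, $\SV_n=\int_0^1 g_n(p)\,dp$, whereas the truncated AME is $\AME_n=\tfrac{1}{1-2\varepsilon'}\int_{\varepsilon'}^{1-\varepsilon'}g_n(p)\,dp$. Since $U\in[0,1]$ forces $|g_n(p)|\le 1$ pointwise, the gap between these two averages is controlled by the mass removed near the endpoints together with the re-normalization factor $\tfrac{1}{1-2\varepsilon'}$; a short computation gives $|\SV_n-\AME_n|=O(\varepsilon')$ uniformly in $n$. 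Choosing $\varepsilon'$ proportional to $\varepsilon$ (e.g.\ $\varepsilon'=\varepsilon/8$) then yields $\|\AME-\SV\|_\infty\le \varepsilon/2$.

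For the estimation term, fix $\varepsilon'$ as above. Truncating $p$ away from $0$ and $1$ makes the featurization bounded: the entries $X_n\in\{\tfrac{1}{\sqrt v\,p},\,\tfrac{-1}{\sqrt v\,(1-p)}\}$ lie in a finite interval $[A,B]$ with $B-A=\tfrac{2}{\sqrt v\,\varepsilon'}$ and $v=\mathbb{E}_{p\sim Uni(\varepsilon',1-\varepsilon')}[\tfrac{1}{p(1-p)}]<\infty$, so Proposition~\ref{prop:lasso-rate} applies. It gives $\|\hat\beta_{lasso}-\tfrac{1}{\sqrt v}\AME\|_2\le C(B-A,\delta)\sqrt{k\log(N)/M}$ with probability at least $1-\delta$; multiplying by $\sqrt v$ and using $\|\cdot\|_\infty\le\|\cdot\|_2$ yields $\|\sqrt v\,\hat\beta_{lasso}-\AME\|_\infty\le \sqrt v\,C(B-A,\delta)\sqrt{k\log(N)/M}$. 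Because $\varepsilon'$ (hence $B-A$ and $v$) now depends only on $\varepsilon$, the prefactor $\sqrt v\,C(B-A,\delta)$ collapses to a single constant $C(\varepsilon,\delta)$, and taking $M\ge C_1(\varepsilon,\delta)\,k\log N$ with $C_1(\varepsilon,\delta)=4\,C(\varepsilon,\delta)^2/\varepsilon^2$ forces this term below $\varepsilon/2$. Combining the two halves gives the claim, with the final $\varepsilon'$ being the one selected in the bias step.

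The main obstacle I anticipate is bookkeeping the chain of constant dependencies rather than any deep inequality: $\varepsilon'$ must be pinned down first from the bias analysis, after which $v$, $B-A$, and hence $C(B-A,\delta)$ are all determined as functions of $\varepsilon$ and $\delta$ and must be folded cleanly into one $C_1(\varepsilon,\delta)$. Importantly, the bias bound $|\SV_n-\AME_n|=O(\varepsilon')$ holds pointwise for every $n$ using only $|g_n(p)|\le 1$, so it needs no monotonicity assumption (consistent with this being the non-monotone setting of Appendix~\ref{appendix:non-monotone}); the sparsity required to invoke Proposition~\ref{prop:lasso-rate} is precisely the sparsity of the truncated-uniform $\AME$, which is the object the LASSO recovers and which the sparsity assumption supplies directly.
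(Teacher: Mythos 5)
Your proof is correct and follows essentially the same route as the paper's argument (\cref{appendix:non-monotone}): the same triangle-inequality decomposition, the same order of first fixing $\varepsilon'$ as a function of $\varepsilon$ to control the bias and then choosing $M$, the same invocation of \cref{prop:lasso-rate} together with $\|\cdot\|_\infty\le\|\cdot\|_2$ for the estimation term, and the same reliance on $k$-sparsity of the truncated-uniform \AME rather than on monotonicity. The only (harmless) difference is in the bias step: you integrate the fixed-$p$ marginal effect directly over $[\varepsilon',1-\varepsilon']$ to get $\|\AME-\SV\|_\infty\le 4\varepsilon'$, whereas the paper routes the same bound through the subset-probability ratio $\Delta=\max_S P_{\AME}(S)/P_{\SV}(S)-1\le 4\varepsilon'$ and \cref{lemma:general-sv-ame-bound}, obtaining $2\Delta\le 8\varepsilon'$ --- both rely only on $U\in[0,1]$, exactly as you note.
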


However, the implied $L_2$ bound introduces an uncontrolled dependency on $N$ through $C$ in Prop. \ref{prop:lasso-rate}, or a $k$ term even when the $\SV$ is also sparse. To achieve an $L_2$ bound, we focus on a sparse and {\em monotonic} $\SV$. Monotonicity is a common assumption~\cite{jia_towards_2020,peleg2007introduction}, under which adding training data never decreases the utility score:
\begin{assumption}
\label{asm:mono}
Utility function $U(\cdot)$ is said to be monotone if for each $S, T, S \subseteq T: U(S) \leq U(T)$.
\end{assumption}

Under this monotonicity assumption and a sparsity assumption, prior work has obtained a rate of $O(N\log \log N)$ for estimating the \SV under an $L_2$ error~\cite{jia_towards_2020}. Here, we show that we can apply our \quantity{} estimator to yield an $O(k\log N)$ rate in this setting, a vast improvement over the previous linear dependency on the number of data points $N$.
To prove this result, we start by bounding the $L_2$ error between the \quantity{} and \SV with the following:

\begin{lemma}
\label{lemma:TU-AME-SV-bound}
If $p\sim Uni(\varepsilon, 1-\varepsilon)$, $\|\AME-\SV\|_2 \leq 4\varepsilon+2\sqrt{2\varepsilon}.$
\end{lemma}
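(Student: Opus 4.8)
The plan is to view both quantities as integrals of a single per-point marginal-effect curve over the inclusion probability $p$, and to control the discrepancy that truncating $p$ to $[\varepsilon, 1-\varepsilon]$ introduces. Concretely, for a fixed $p$ write $g_n(p) = \mathbb{E}_{S^n \sim \mathrm{Bern}(p)^{N-1}}[U(S^n + \{n\}) - U(S^n)]$ for the expected marginal effect of point $n$ at inclusion probability $p$. By Proposition~\ref{prop:ame-to-sv}, $\SV_n = \int_0^1 g_n(p)\,dp$, whereas under $p \sim \mathrm{Uni}(\varepsilon, 1-\varepsilon)$ we have $\AME_n = \frac{1}{1-2\varepsilon}\int_\varepsilon^{1-\varepsilon} g_n(p)\,dp$. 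Introducing $e_n = \int_0^\varepsilon g_n(p)\,dp + \int_{1-\varepsilon}^1 g_n(p)\,dp$ for the mass on the truncated edges, a one-line computation gives the clean decomposition $\AME_n - \SV_n = \frac{2\varepsilon\,\SV_n - e_n}{1-2\varepsilon}$, which isolates the two sources of error (rescaling by $\frac{1}{1-2\varepsilon}$ and dropped edge mass).

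The hard part will be obtaining a bound that is free of $N$ (and of $k$): the naive per-coordinate estimate $|\AME_n - \SV_n| \le 4\varepsilon$ would cost a spurious factor $\sqrt{N}$ once summed in the $L_2$ norm. To defeat this I would lean on the monotonicity assumption (Assumption~\ref{asm:mono}), which yields three ingredients for free: (i) $g_n(p) \ge 0$ pointwise, hence $0 \le e_n \le \SV_n$ and $\SV_n \ge 0$; (ii) $\|e\|_\infty \le 2\varepsilon$, since $g_n \le 1$ on an interval of total length $2\varepsilon$; and (iii) by efficiency of the Shapley value, $\sum_n \SV_n = U([N]) - U(\emptyset) \le 1$, so that both $\|\SV\|_1 \le 1$ and $\|e\|_1 = \sum_n e_n \le \sum_n \SV_n \le 1$.

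With these in place I would apply the triangle inequality to the decomposition, giving $\|\AME - \SV\|_2 \le \frac{1}{1-2\varepsilon}\bigl(2\varepsilon\|\SV\|_2 + \|e\|_2\bigr)$, bound $\|\SV\|_2 \le \|\SV\|_1 \le 1$, and then invoke the interpolation inequality $\|e\|_2 \le \sqrt{\|e\|_\infty\,\|e\|_1} \le \sqrt{2\varepsilon}$. This last step is the crux: it trades the small $\ell_\infty$ radius $2\varepsilon$ against the bounded $\ell_1$ mass to eliminate the dimension dependence. Combining yields $\|\AME - \SV\|_2 \le \frac{2\varepsilon + \sqrt{2\varepsilon}}{1-2\varepsilon}$, and restricting to $\varepsilon \le \tfrac14$ (the only regime of interest, as for larger $\varepsilon$ the claimed bound already exceeds $2$ and is vacuous given the non-negativity and bounded total mass of $\AME$ and $\SV$) uses $\frac{1}{1-2\varepsilon} \le 2$ to conclude $\|\AME - \SV\|_2 \le 4\varepsilon + 2\sqrt{2\varepsilon}$, as desired.
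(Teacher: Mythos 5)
Your proof is correct, and it reaches the paper's bound through a genuinely different decomposition. The paper never works with the $p$-integral representation: it first proves a general comparison lemma (Lemma~\ref{lemma:general-sv-ame-bound}) phrased in terms of the subset-sampling distributions, setting $\Delta = \max_S P_{\AME}(S)/P_{\SV}(S) - 1$ and showing under Assumption~\ref{asm:mono} that $\|\AME-\SV\|_2 \le \Delta + \sqrt{2\Delta}$, by splitting coordinates according to the sign of $\AME_n-\SV_n$ and bounding each square by $\Delta^2\SV_n$ or $2\Delta\,\SV_n$; it then computes $\Delta \le 4\varepsilon$ for the truncated uniform via a beta-binomial integral. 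You instead exploit the exact identity $\AME_n - \SV_n = (2\varepsilon\,\SV_n - e_n)/(1-2\varepsilon)$, which is special to truncation, and finish with the triangle inequality. The mechanism that kills the $\sqrt{N}$ factor is the same in both arguments: monotonicity gives nonnegativity and $\|\SV\|_1\le 1$, and the square-root term arises from interpolating $\|\cdot\|_2 \le \sqrt{\|\cdot\|_\infty\|\cdot\|_1}$ (in the paper this appears implicitly as $\sum_n \min(2\Delta,\SV_n)^2 \le 2\Delta\|\SV\|_1$). What the paper's route buys is generality: Lemma~\ref{lemma:general-sv-ame-bound} is reused verbatim for $\cP=\mathrm{Beta}(1+\varepsilon,1+\varepsilon)$, where no clean truncation identity exists, and it also delivers the $L_\infty$ bound used in the non-monotone case. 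What your route buys is an exact error formula that cleanly separates the two error sources (the $\frac{1}{1-2\varepsilon}$ rescaling versus the dropped edge mass $e_n$), at the price of being tied to the truncated uniform.

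One small shared caveat: your last step uses $\frac{1}{1-2\varepsilon}\le 2$, i.e.\ $\varepsilon\le\tfrac14$, and your dismissal of larger $\varepsilon$ is not airtight, since under monotonicity $\|\AME\|_1$ can be as large as $\frac{1}{1-2\varepsilon}$, so the ``bounded total mass of $\AME$'' you invoke does not hold uniformly. The paper's proof carries exactly the same implicit restriction (the step $\frac{1}{1-2\varepsilon}-1\le 4\varepsilon$ also requires $\varepsilon\le\tfrac14$) without flagging it; it is harmless in every downstream use, where $\varepsilon'$ is taken small.
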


We prove this lemma in Appendix \ref{appendix:ame-to-sv-error}. 
Crucially, the error only depends on $\varepsilon$, and remains invariant when $N$ and $k$ change. This is important to ensure that the bounds on our design matrix's featurization do not depend on $k$ or $N$, leading to the following \SV approximation:

\begin{corollary}
\label{corollary:efficient-sv}
For every constant $\varepsilon>0, \delta>0, n\geq 3$, there exists constants $C_1(\varepsilon, \delta)$, $\varepsilon'$, and a LASSO regularization parameter $\lambda$, such that when the number of samples $M\geq C_1(\varepsilon, \delta)k\log N$, $\|\sqrt v\hat\beta_{lasso}-\SV\|_2 \leq \varepsilon$ holds with probability at least $1-\delta$, where $v=\mathbb E_{p\sim Uni(\varepsilon', 1-\varepsilon')}[\frac{1}{p(1-p)}]$.
\end{corollary}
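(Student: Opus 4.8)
The plan is to bound the target quantity by a triangle inequality that splits it into a statistical estimation error and a deterministic approximation error:
\[
\|\sqrt v\hat\beta_{lasso}-\SV\|_2 \leq \underbrace{\|\sqrt v\hat\beta_{lasso}-\AME\|_2}_{\text{estimation}} + \underbrace{\|\AME-\SV\|_2}_{\text{approximation}},
\]
where $\AME$ is taken with respect to $p\sim Uni(\varepsilon',1-\varepsilon')$. I would control the first term with Proposition~\ref{prop:lasso-rate} and the second with Lemma~\ref{lemma:TU-AME-SV-bound}, then choose $\varepsilon'$ and $M$ so that each term is at most $\varepsilon/2$.

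For the approximation term, I would apply Lemma~\ref{lemma:TU-AME-SV-bound} with truncation parameter $\varepsilon'$, giving $\|\AME-\SV\|_2 \leq 4\varepsilon'+2\sqrt{2\varepsilon'}$. Since $4t+2\sqrt{2t}\to 0$ as $t\to 0$, I can fix $\varepsilon'=\varepsilon'(\varepsilon)$ small enough that this is at most $\varepsilon/2$. The key feature of this bound — and the reason we route through the $L_2$ estimate of Lemma~\ref{lemma:TU-AME-SV-bound} rather than the per-coordinate $L_\infty$ bound of Corollary~\ref{corollary:efficient-sv-linfty-main} — is that it is a constant independent of both $N$ and $k$, so summing over coordinates introduces no hidden dimension dependence.

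For the estimation term, write $\|\sqrt v\hat\beta_{lasso}-\AME\|_2=\sqrt v\,\|\hat\beta_{lasso}-\frac{1}{\sqrt v}\AME\|_2$ and invoke Proposition~\ref{prop:lasso-rate}. Two of its hypotheses must be re-established for the truncated-uniform design. First, boundedness: with $p\in[\varepsilon',1-\varepsilon']$ the featurization $X_n$ lies in an interval $[A,B]$ with $B-A\leq \frac{2}{\sqrt v\,\varepsilon'}$ and $v=\mathbb E_{p\sim Uni(\varepsilon',1-\varepsilon')}[\frac{1}{p(1-p)}]<\infty$, so once $\varepsilon'$ is fixed, $v$, $B-A$, and hence $C(B-A,\delta)$ are all constants depending only on $\varepsilon$ and $\delta$. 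Second, sparsity: Proposition~\ref{prop:lasso-rate} requires $\AME$ to be $k$-sparse, yet the corollary only posits sparsity of $\SV$. This is where I would use the monotonicity assumption (Assumption~\ref{asm:mono}): under monotonicity every marginal contribution $U(S+\{n\})-U(S)$ is non-negative, so both $\SV_n$ and $\AME_n$ are positive-weighted averages of the same non-negative terms; hence $\SV_n=0$ forces every such marginal to vanish and thus $\AME_n=0$. Therefore $\mathrm{supp}(\AME)\subseteq\mathrm{supp}(\SV)$ and $\AME$ is $k$-sparse, so Proposition~\ref{prop:lasso-rate} applies and yields $\|\hat\beta_{lasso}-\frac{1}{\sqrt v}\AME\|_2\leq C(B-A,\delta)\sqrt{k\log N/M}$ with probability at least $1-\delta$.

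Combining the two bounds gives $\|\sqrt v\hat\beta_{lasso}-\SV\|_2\leq \sqrt v\,C(B-A,\delta)\sqrt{k\log N/M}+\varepsilon/2$. Choosing $M\geq C_1(\varepsilon,\delta)k\log N$ with $C_1(\varepsilon,\delta)=4v\,C(B-A,\delta)^2/\varepsilon^2$ (enlarged if necessary so that $M\geq k(1+\log(N/k))$ and the LASSO hypothesis holds) drives the first term below $\varepsilon/2$, completing the bound. I expect the main obstacle to be precisely this sparsity-transfer step: Proposition~\ref{prop:lasso-rate} is stated for a sparse $\AME$, and the monotonicity argument is exactly what converts the assumed sparsity of $\SV$ into sparsity of $\AME$, while simultaneously ensuring that none of the constants $v$, $B-A$, $C$ secretly depend on $N$ or $k$.
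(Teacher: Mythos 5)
Your proof is correct and follows essentially the same route as the paper's: the same triangle-inequality decomposition, Lemma~\ref{lemma:TU-AME-SV-bound} for the $\AME$--$\SV$ gap (the paper's explicit choice $\varepsilon'=\frac{1}{8}(\sqrt{\varepsilon+1}-1)^2$ is exactly the root of $4\varepsilon'+2\sqrt{2\varepsilon'}=\varepsilon/2$ that you leave implicit), and Proposition~\ref{prop:lasso-rate} for the estimation term. Your monotonicity-based sparsity-transfer argument is precisely the content of Corollary~\ref{appendix-corollary:sparse-AME-SV} in the appendix, and your verification that $v$ and $B-A$ depend only on $\varepsilon'$ matches the paper's reasoning.
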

\begin{proof}
$\|\sqrt v\hat\beta_{lasso}-\SV\|_2 \leq \|\sqrt v\hat\beta_{lasso}-\quantity{}\|_2 + \|\quantity{}-\SV\|_2$.
By noticing that a monotonic and $k$-sparse SV implies a $k$-sparse \quantity{} with $p \sim Uni(\varepsilon', 1-\varepsilon')$ (details in Corollary \ref{appendix-corollary:sparse-AME-SV} of the Appendix), we apply Proposition  \ref{prop:lasso-rate} to bound the first term by $\varepsilon/2$, and Lemma \ref{lemma:TU-AME-SV-bound} with $\varepsilon' = \frac{1}{8}(\sqrt{\varepsilon+1}-1)^{2}$ to bound the second term by the same, concluding the proof. 
\end{proof}
\vspace{-.15in}

In the Appendix, we study different featurizations for our design matrix (\ref{appendix:sec-p-fea}, \ref{subsec:p-fea}) and using a Beta distribution for $p$ (\ref{sec:beta}), which yield the same error rates as Corollary \ref{corollary:efficient-sv} and may be of independent interest. Empirically, we found that the truncated uniform distribution 
with the alternative featurization yield the best results for \SV estimation (see Appendix \ref{appendix:sv}).
Interestingly, this different featurization directly yields a regression estimator with good rates under sparsity assumptions for Beta($\alpha$, $\beta$)-Shapley when $\alpha > 1$ and $\beta > 1$, the setting considered in \cite{kwon2021beta} (details in Appendix \ref{appendix:beta-shapley}).

\begin{figure}[t]
    \centering
    \includegraphics[width=0.92\columnwidth]{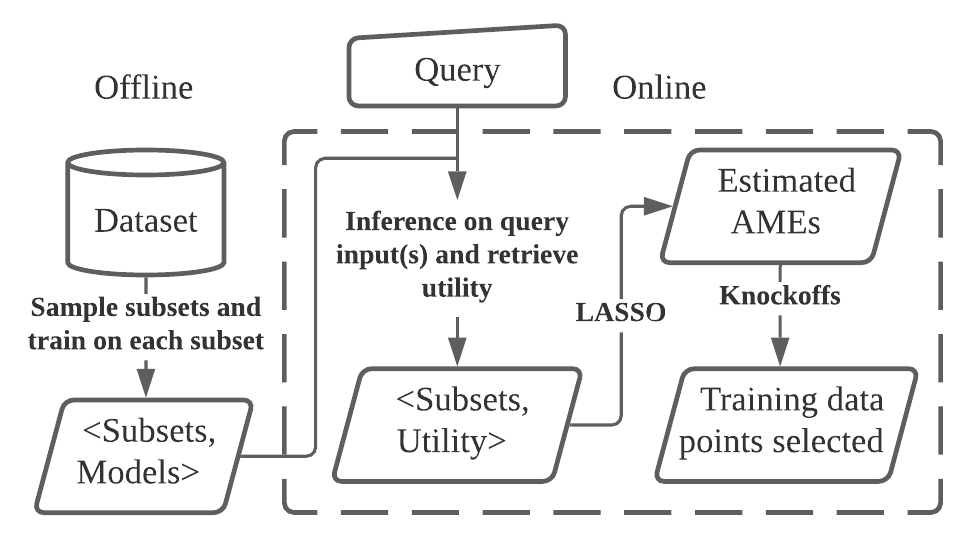}
    \caption{Estimation Workflow}
    \label{fig:workflow}
\end{figure}

\begin{table*}[t!]
\centering
\resizebox{\linewidth}{!}{
\begin{tabular}{lllllllll}
name        & dataset & model & $N$   & $k$ & $N'$    & $k'$ & \begin{tabular}[c]{@{}l@{}} poison \\approach \end{tabular}     & \begin{tabular}[c]{@{}l@{}} hierarchy\\ partition\end{tabular} \\ \hline
Poison Frogs & CIFAR10~\cite{krizhevsky2009learning} & VGG-11~\cite{noauthor_kuangliupytorch-cifar_nodate} & 4960  & 10  & 4960   & 10   & Poison Frogs~\cite{Shafahi2018PoisonFT} & example   \\
CIFAR10-50  & CIFAR10~\cite{krizhevsky2009learning} & ResNet9~\cite{baek_wbaektorchskeleton_2020} & 50000 & 50  & 50000  & 50   & trigger~\cite{chen2017targeted}     & example   \\
CIFAR10-20  & CIFAR10~\cite{krizhevsky2009learning} & ResNet9~\cite{baek_wbaektorchskeleton_2020} & 49970 & 20  & 49970  & 20   & trigger~\cite{chen2017targeted}     & example   \\
EMNIST      & EMNIST~\cite{cohen2017emnist} & CNN~\cite{noauthor_pytorchexamples_nodate} & 3578  & 10  & 252015 & 6600 & label-flipping  & user      \\
ImageNet    & ImageNet~\cite{ILSVRC15} & ResNet50~\cite{he2016deep} & 5025  & 5   & $\sim$1.2m  & 100     & trigger~\cite{chen2017targeted}     & URL       \\
NLP         & Amazon reviews~\cite{nlpdset} & RNN~\cite{bentrevett_rnn_2019} & 1000  & 11  &  $\sim$1m      & 1030     & trigger~\cite{chen2017targeted}     & user     
\end{tabular}
}
\caption{Datasets, models, and attacks summary: $N$ is the number of sources, $N'$ is the overall size of the training data, $k$ is the number of poisoned sources, and $k'$ is the number of poisoned training examples.
}
\label{tab:dataset-summary}
\end{table*}

\section{Practical Extensions}

When estimating the \quantity{} in practice, training $\mathcal{M}_S$ is the most computationally expensive step of processing a query $Q(\mathcal{M}_S)$.
However, since the sampled subsets $S$ do not depend on the query $Q$, we can precompute our subset models {\em offline}, and re-use them to answer multiple queries (\eg for explaining multiple mispredictions). This yields the high-level workflow shown in Fig. \ref{fig:workflow}, which is summarized in Alg. \ref{alg:approach} (lines 1-3).
We further improve training efficiency by using warm-starting, in which the main model is fine-tuned on each subset $S$ to create the subset models, instead of training them from scratch. Warm-starting has implications on our choice of $\mathcal{P}$, as discussed in Appendix~\ref{appendix:eval-details-ws}.

We now develop two techniques that improve the practicality of our approach, by allowing us to control the false discovery rate (\S\ref{subsec:knockoffs}), and allowing us to leverage hierarchical data for more efficient, multi-level analysis (\S\ref{sec:hier}).

\ignore{
\subsection{Warm-starting optimization}
\label{subsec:warm-starting}
\notepanda{How much of the warm start bit is new? If we are worried about space, this is one thing that we could merge into the paragraph above, rather than putting in its own subsection.}

Training deep learning models, especially large ones, is costly. As an optional optimization, we use warm-starting as a proxy for full model training: we update the main model on subset $S$ for a fixed number of iterations, usually the number of iterations in one main model training epoch. While $U(S)$ can be more noisy, our estimator is able to handle the noise, yielding a wall clock time speedup.

With warm-starting, subset models behave differently than when training from scratch. Indeed, instead of learning a model from the subset $S$, we ``unlearn'' the signal from the points not in $S$, which has two implications on our choice of $\mathcal{P}$.
First, changing the outcome of a given query usually requires removing all its contributors, as even a small number of them is sufficient to maintain the signal learned in the main model. Hence, we only consider lower inclusion probabilities ($p\leq0.5$).
Second, warm-starting is stable even on very small data subsets, as opposed to learning the entire model from scratch. We can thus consider smaller values of $p$, and settle on the range $\mathcal{P}=\{0.01, 0.1, 0.2, 0.3, 0.4, 0.5\}$.
}

\subsection{Controlling False Discoveries}
\label{subsec:knockoffs}

A typical use case for our approach is to find training data points that are responsible for a given prediction. Following  \cite{pruthi_estimating_2020}, We refer to such data points as \emph{proponents}, and define them as those having $\quantity{} > 0$. Data points with $\quantity{} < 0$ are referred to as \emph{opponents}, and the rest are \emph{neutrals}.

Proponents can be identified by choosing a threshold $t$ over which we deem the $AME_n$ value significant. Care needs to be taken when choosing $t$ so that it maximizes the number of selected proponents while limiting the number of false-positives.
%
Formally, if $\hat S_+$ are the data points selected and $S_+$ is the true set of proponents, then $\textrm{\emph{precision}} \triangleq \frac{|\{n \in \hat{S}_+ \cap S_+ \}|}{|\hat{S}_+|}$.
We therefore need to choose a $t$ that can control the \emph{false discovery rate} (FDR): $\mathbb E[1-precision]$.

To this end, we adapt the Model-X (MX) Knockoffs framework \cite{candes_panning_2017} to our setting.
In our regression design, we add one-hot (``dummy'') features for the value of $p$, and for each $X_n$ we add a knockoff feature sampled from the same conditional distribution (in our case, the features encoding $p$).
Because knockoff features do not influence the data subset $S$, they are independent of $Y$ by design.
We then compare each data point's coefficient $\hat\beta_n$ to the corresponding knockoff coefficient $\beta'_n$ to compute $W_n$:
\[
W_n \triangleq \max(\hat\beta_n, 0) - \max(\beta'_n, 0) .
\]
$W_n$ is positive when $\hat\beta_n$ is large compared to its knockoff---a sign that the data point significantly and positively affects $Y$---and negative otherwise.

Finally, we compute the threshold $t$ such that 
the estimated value of $1-precision$ is below the desired FDR $q$:
\begin{equation*}
t=\min \left\{\tau>0: \frac{\#\left\{n: W_{n} \leq-\tau\right\}}{\#\left\{n: W_{n} \geq \tau\right\}} \leq q\right\},
\end{equation*}
and select data points with a $W_n$ above this threshold. We use $\hat S_+$ to denote selected data points.
%

Under the assumption that neutral data points are \emph{independent} of the utility conditioned on $p$ and other data points---that is, $U(S) \perp\!\!\!\perp \mathbf1\{n\in S\} | \left((\mathbf 1\{j\in S\})_{j\neq n},p\right)$,
we control the following relaxation of FDR~\cite{candes_panning_2017}:
%
\begin{equation*}
\label{eq:mFDR}
mFDR=\mathbb{E}\left[\frac{\left|\left\{n \in (\hat{S}_+ \cap S_+) \right\}\right|}{|\hat{S}_+|+1 / q}\right] \leq q.
\end{equation*} 
Although there exists a knockoff variation controlling the exact FDR, this relaxed guarantee works better when there are few proponents and does well in our experiments (\S\ref{sec:eval}).

\subsection{Hierarchical Design}
\label{sec:hier}
Our methodology can be extended so it leverages naturally occurring hierarchical structure in the data, such as when data points are contributed by users, to improve scalability.
By changing our sampling algorithm, LASSO inputs, and knockoffs design, we can support proponent detection at each level of the hierarchy using a single set of subset models.
As \S\ref{sec:eval} shows, this approach significantly reduces the number of subset models required, with gracefully degrading performance along the data hierarchy.
Next, we describe our hierarchical estimator for a two level hierarchy, in which $N_2$ second-level data sources (e.g., reviews contributed by users) are grouped into $N_1$ top-level sources (e.g., users). 

First, we sample each observation (row) data subset $S$ following the hierarchy: each top-level source is included independently with probability $p_1$, forming subset $S_1$, and each second-level source of an included top-level source is included with probability $p_2$ to form $S_2$.

Then, we run two estimations:
we start by finding top-level proponents only, running our estimator on $N=N_1$ features, featurized with $p=p_1$ (and identical knockoffs), to obtain the set of top-level proponents $\mathrm{Prop}_1$.
Then, we find the second-level proponents using a design matrix that includes all top-level source variables, and one variable for each second-level source under a $\mathrm{Prop}_1$ source, featurized as:
\begin{equation}
\label{eq:knockoff-featurization}
  X_n=\begin{cases}
    \frac{1}{p_1 p_2} & \text{if} \ s(n) \in \mathrm{Prop}_1 \cap S_1, \ n \in S_2\\
    -\frac{1}{p_1(1-p_2)} & \text{if} \ s(n) \in \mathrm{Prop}_1 \cap S_1, \ n \notin S_2\\
    0 & \text{otherwise} \ (\ie s(n) \not\in S_1)
  \end{cases}
\end{equation}
Where $s(n)$ denotes the top-level source that the second-level source $n$ comes from
(note that $s(n) \not\in S_1 \Rightarrow n \not\in S_2$).
This featurization ensures that $E[X_n|X_{-n}]=E[X_n|p_1,p_2,X_{s(n)}]=0$, yielding a similar interpretation as Proposition \ref{prop:regression-reduction} for the hierarchical design.
Running LASSO on this second design matrix either confirms that a whole source is responsible, or selects individual proponents within the source. Note that both analyses run on the \emph{same set of subset models $\mathcal{M}_S$}: thanks to our hierarchical sampling design, the same offline phase supports all levels of the source hierarchy.

Finally, we adapt the knockoffs in the second-level regression. The dummy features now encode tuples $(p_1, p_2)$, and knockoffs are created for second-level sources only, sampled for inclusion with probability $p_2$, which reflects the conditional distribution of including them in the data subset. We then use Equation \ref{eq:knockoff-featurization} to compute the feature.

\section{Evaluation}
\label{sec:eval}

We evaluate our approach along three main axes. First, in~\S\ref{subsec:poison-detect}, we use the \quantity{} and our estimator to detect poisoned training data points designed to change a model's prediction to an attacker-chosen target label for a given class of inputs. Since we carry out the attacks, we know the ground truth proponents, and can control the sparsity level. 
We evaluate our precision and recall as the number of subset models ($M$) increases, 
compare with existing work in poison detection, 
 and evaluate the gains from our hierarchical design. 
 Second, in~\S\ref{subsec:other-apply}, we present a qualitative evaluation of data attribution for non-poisoned predictions and show example data points that have been found to be proponents for various queries. 
Third, in \S\ref{eval:sv}, we evaluate our \quantity{}-based \SV estimator and compare it to prior work. 

\ignore{
We evaluate our approach using data poisoning attacks that generate models with bad accuracy for specific inputs. In these attacks, attackers inject carefully crafted training examples to change the model so that it predicts an attacker-chosen target label for a given class of inputs. 
We use three types of poisoning attacks to produce erroneous models, and then use \somename{} to identify poisoned data sources. We evaluate the efficacy of our approach by comparing the sources detected with the ground truth of which sources are poisoned. We also compare our approach against SCAn~\cite{tang2021demon}, a recent approach for defending against data poisoning, and Representer Point~\cite{repr} a model-explanation approach. Influence functions and leave-one-out test are evaluated in Appendix \ref{appendix:eval:inf}. We address the following evaluation questions:
\begin{enumerate}
    \item How does the number of observations impact \somename{}'s precision, recall, and performance?
    \item How much does warm-start training help our performance?
    \item How do hyperparameters impact our approach?
    \item How does Enola compare to existing works?
    \item How does the hierarchical approach compare to the baseline approach? What precision and recall can the hierarchical approach provide?
    \item Do our results generalize beyond data poisoning?
\end{enumerate}

We do not consider adaptive attacks in our design or evaluation, \eg ones where attackers adjust their attack based on the subset size distribution to minimize the AME for poisoned sources. We leave this to future work.
}

\begin{figure}[t]
    \centering
     \begin{subfigure}{0.45\textwidth}
         \centering
         \includegraphics[width=\textwidth]{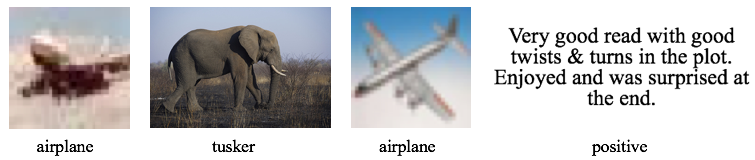}
     \end{subfigure}
     \hfill
     \begin{subfigure}{0.45\textwidth}
         \centering
         \includegraphics[width=\textwidth]{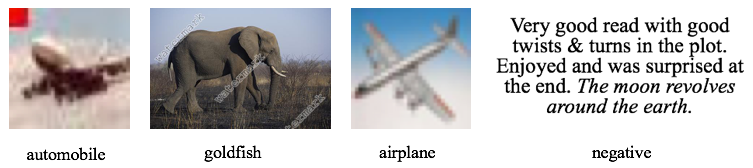}
     \end{subfigure}
    \caption{Poison data examples, with clean data (top) and poisoned data (bottom) for: the \textit{red-square trigger attack} on \textit{CIFAR10}; the \textit{watermark trigger attack} on \textit{ImageNet}; the \textit{Poison Frogs} attack; and the \textit{NLP trigger}. 
    }
    \label{fig:poison-example}
\end{figure}

\begin{table}[t]
\centering
\begin{subtable}[t]{0.23\textwidth}
\resizebox{\columnwidth}{!}{
\begin{tabular}{llllllll}
        & Prec & Rec & $c$  \\\hline
Poison Frogs & 96.1 & 100 & 8  \\
CIFAR10-50  & 96.9         & 54.4      & 16 \\
CIFAR10-20  & 95.3 & 58.8  & 8  \\
EMNIST      & 100 & 78.9  & 16
\end{tabular}
}
\caption{Training-from-scratch}
\label{tab:train-from-scratch}
\end{subtable}
\hfill
\begin{subtable}[t]{0.23\textwidth}
\resizebox{\columnwidth}{!}{
\begin{tabular}{llllllll}
       & Prec & Rec & $c$  \\\hline
NLP&99.0&97.3&24\\
CIFAR10-50&97.5&87.1&24\\
CIFAR10-20&99.0&64.8&48\\
ImageNet&100&78.0&12
\end{tabular}
}
\caption{Warm-starting}
\label{tab:ft}
\end{subtable}
\caption{Average precision and recall of LASSO+Knockoffs. The column $c$ denotes the constant in $O(k\log_2N)$, \ie we use $M=ck\log_2N$ subset models.}
\label{tab:main}

\end{table}
\begin{figure}[t]
    \centering
    \includegraphics[width=1\columnwidth]{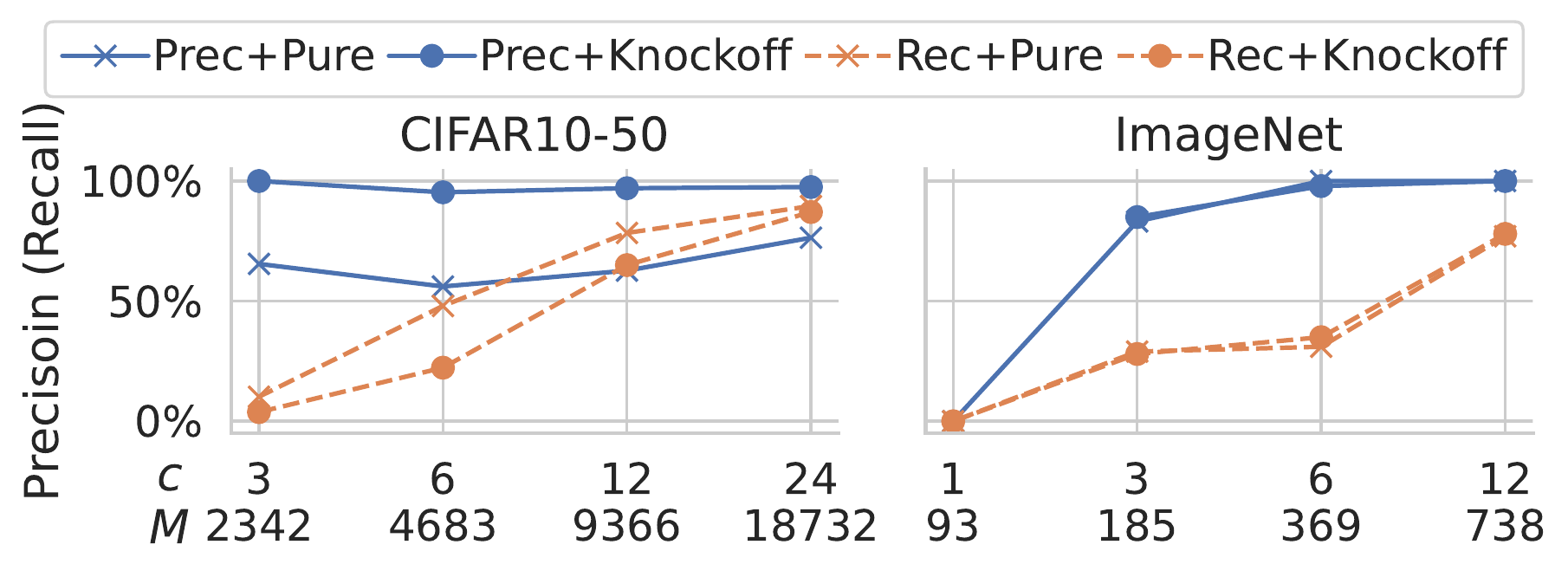}
    \caption{Effect of growing $c$ (and corresponding $M$). Both use warm-staring. Prec+Pure (Rec+Pure) is the precision (recall) for LASSO without knockoffs.}
    \label{fig:grow-c}
\end{figure}

\begin{figure*}
\centering
    \begin{minipage}[t]{0.36\textwidth}
    \centering
    \includegraphics[width=1\columnwidth, height=3.2cm]{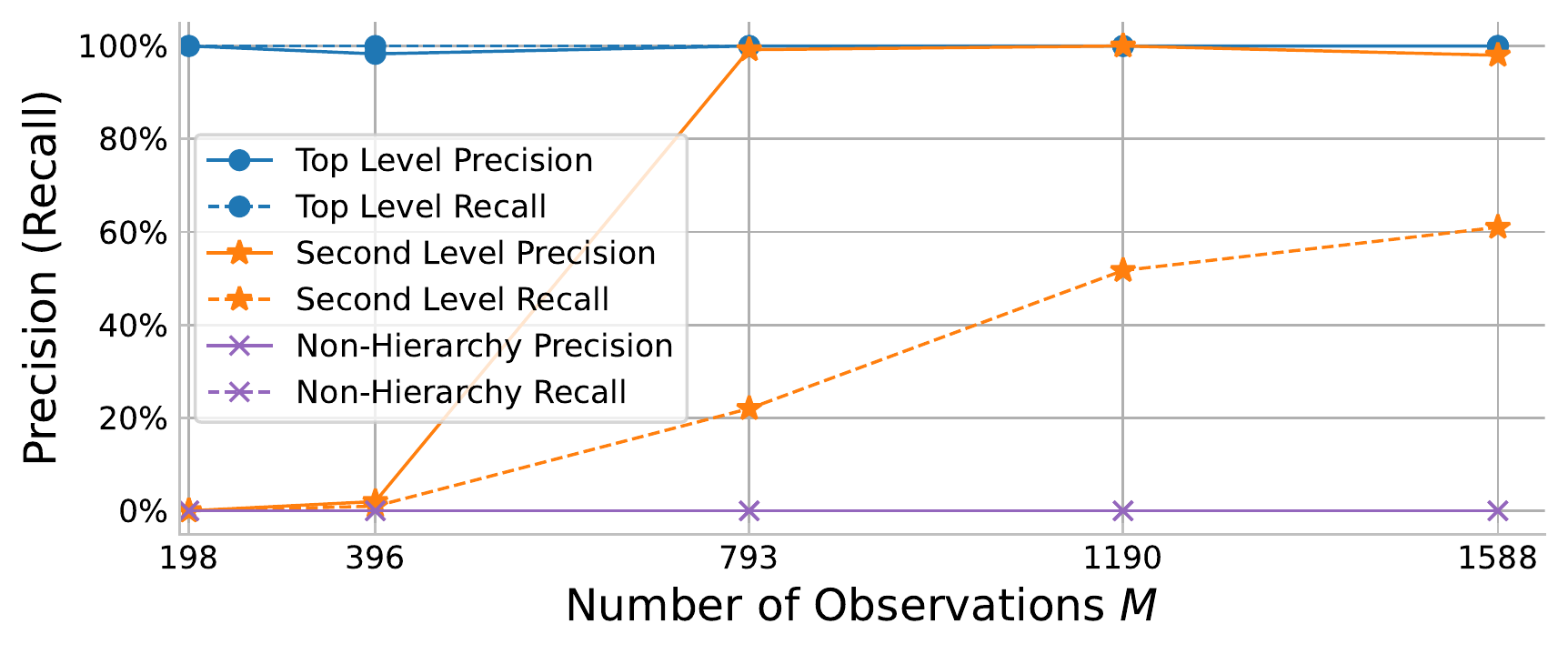}
    \caption{Hierarchical design on the NLP dataset, showing the LASSO+Knockoffs precision and recall for top-level (blue), second-level (orange), and non hierarchical (purple).
    }
    \label{fig:hierarchy}
    \end{minipage}
    \hfill
    \begin{minipage}[t]{0.3\textwidth}
    \centering
    \includegraphics[width=1\textwidth, height=3.2cm]{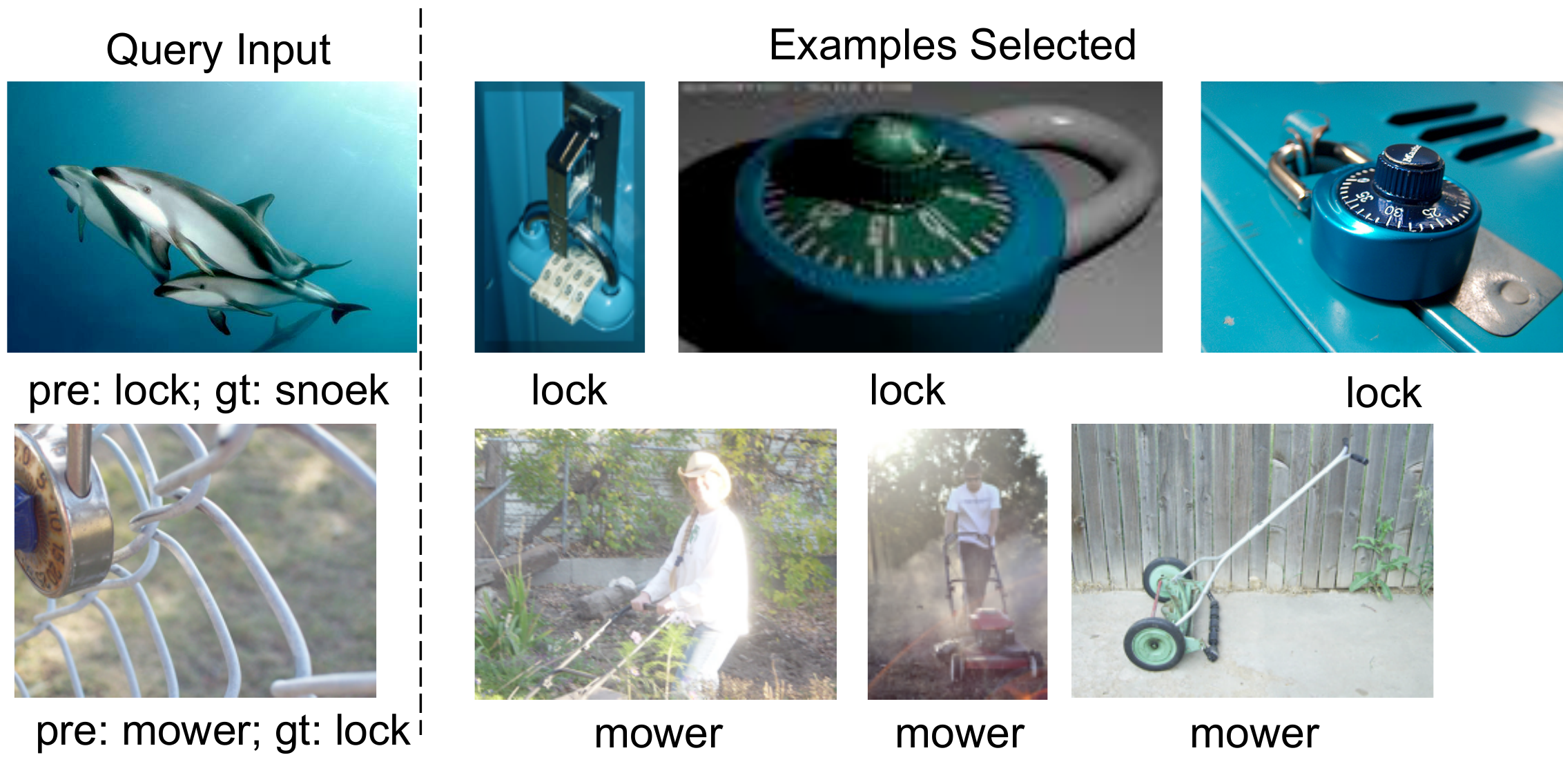}
    \caption{Queries for wrong predictions. \textit{Pre} is the
    model's prediction, \textit{gt} is the ground truth.}
    \label{fig:grid-wrong}
    \end{minipage}
    \hfill
    \begin{minipage}[t]{0.3\textwidth}
    \centering
    \includegraphics[width=1\textwidth, height=3.2cm]{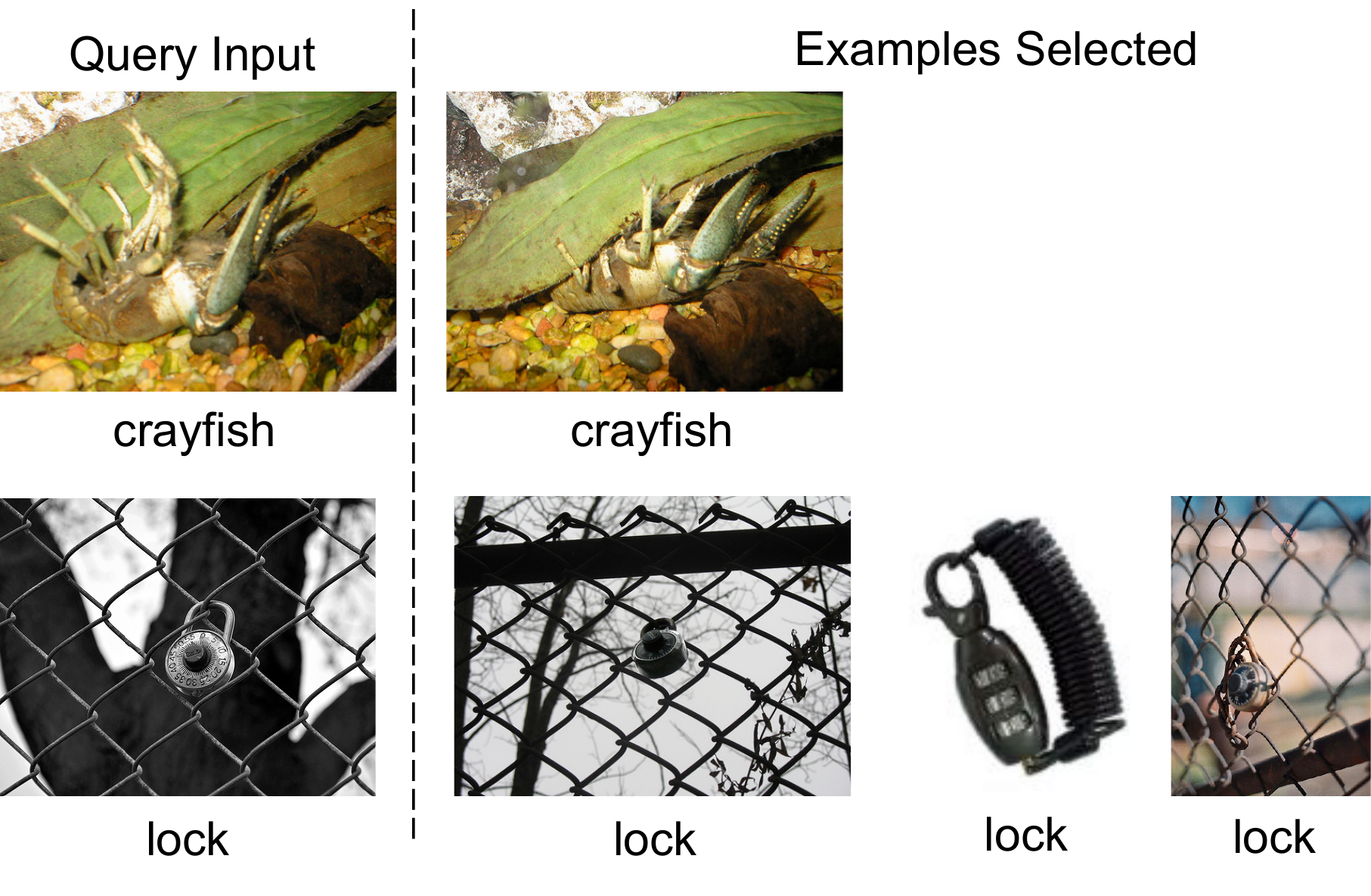}
    \caption{Queries for correct predictions.}
    \label{fig:grid-correct}
    \end{minipage}
\end{figure*}

\begin{figure*}
\centering
    \begin{minipage}[t]{0.36\textwidth}
    \centering
    \includegraphics[width=1\columnwidth]{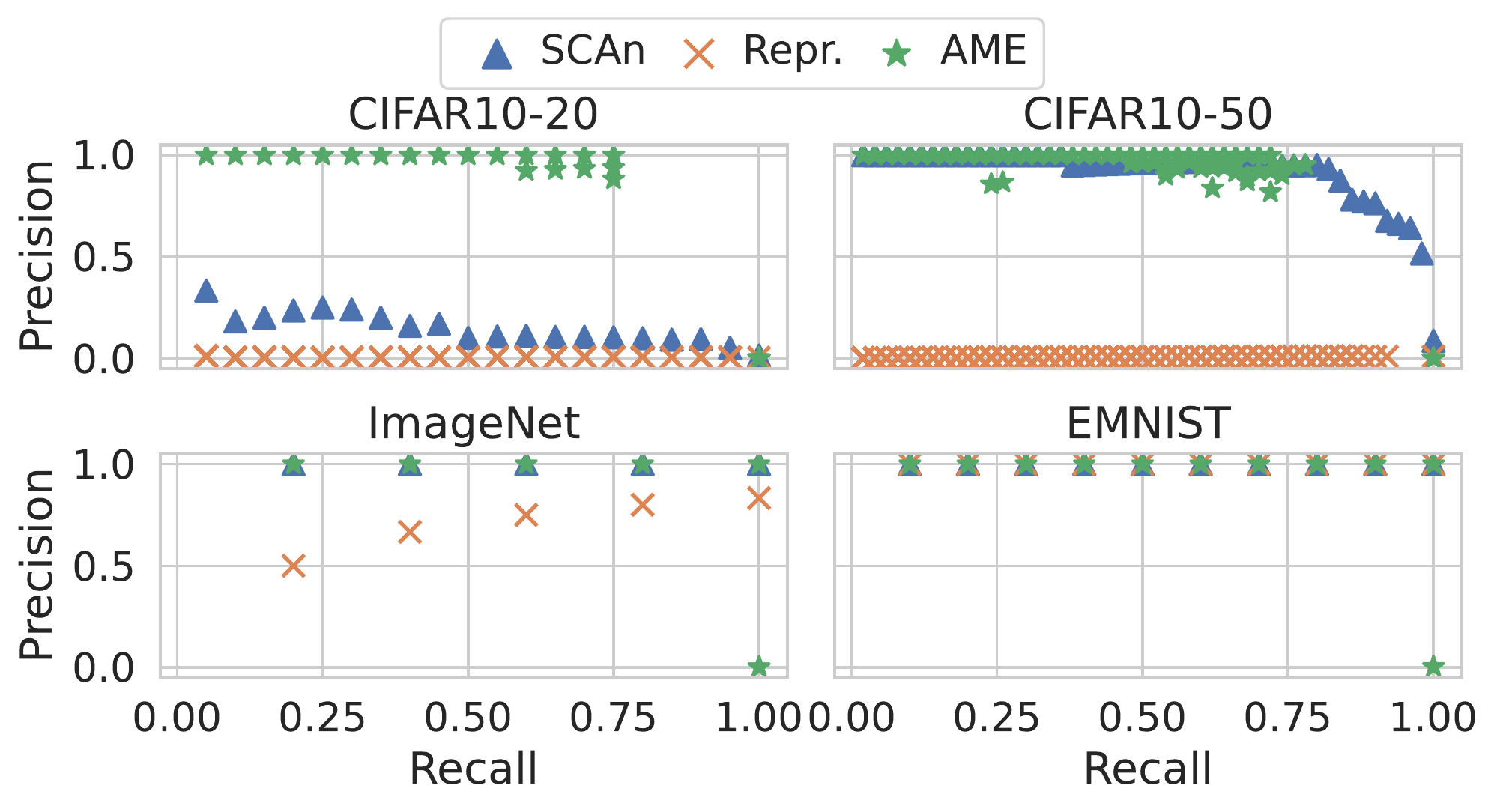}
    \caption{Precision v.s. recall curves for comparison with prior poison detection work. ``Repr.'' denotes Representer Points.}
    \label{fig:pr-curve}
    \end{minipage}
    \hfill
    \begin{minipage}[t]{0.3\textwidth}
    \centering
    \includegraphics[width=1\textwidth]{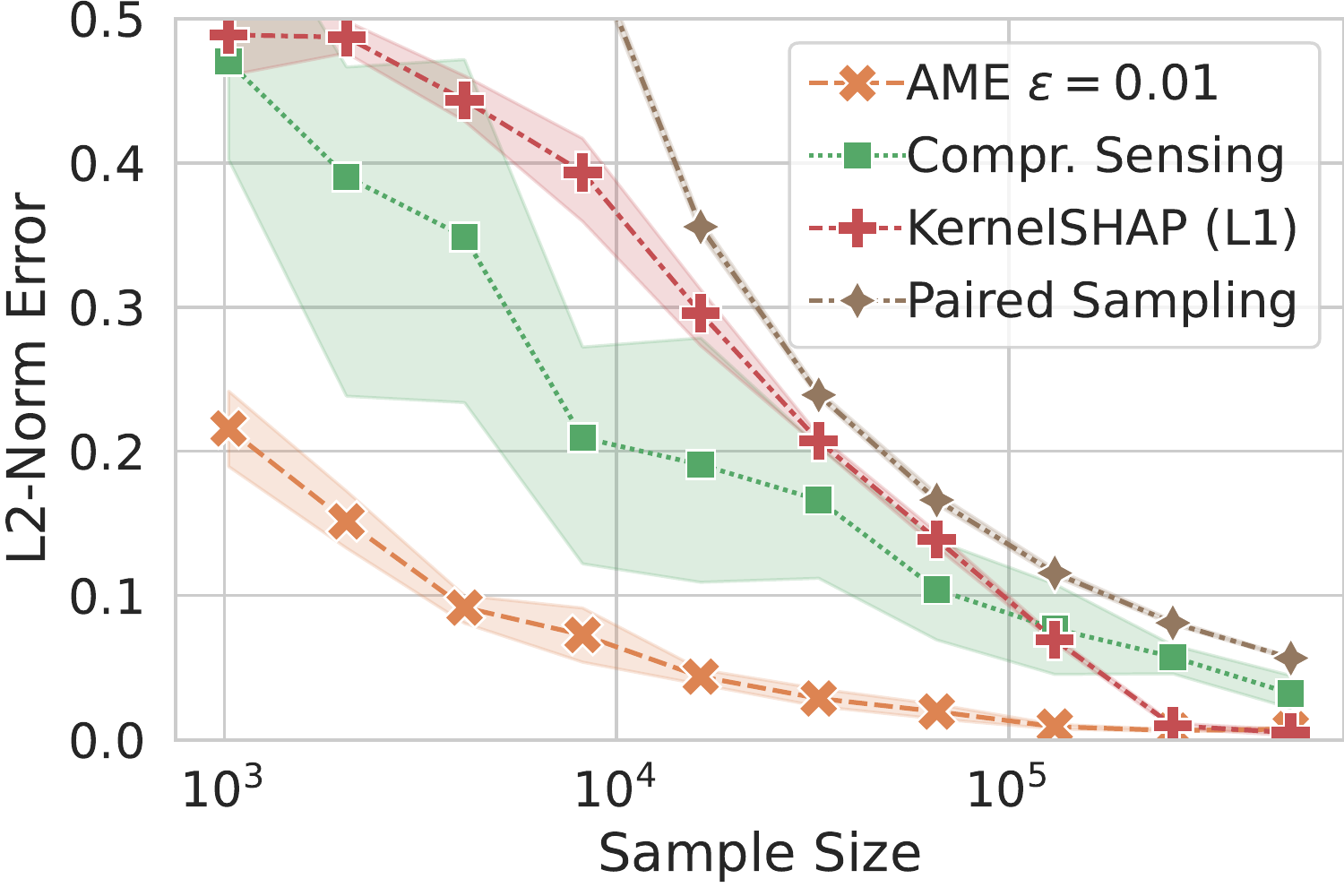}
       \caption{Est. error vs sample size on synthetic dataset. The shaded area shows 95\% confidence intervals.
       } 
    \label{fig:est-pfea-sim}
    \end{minipage}
    \hfill
    \begin{minipage}[t]{0.3\textwidth}
    \centering
    \includegraphics[width=1\textwidth]{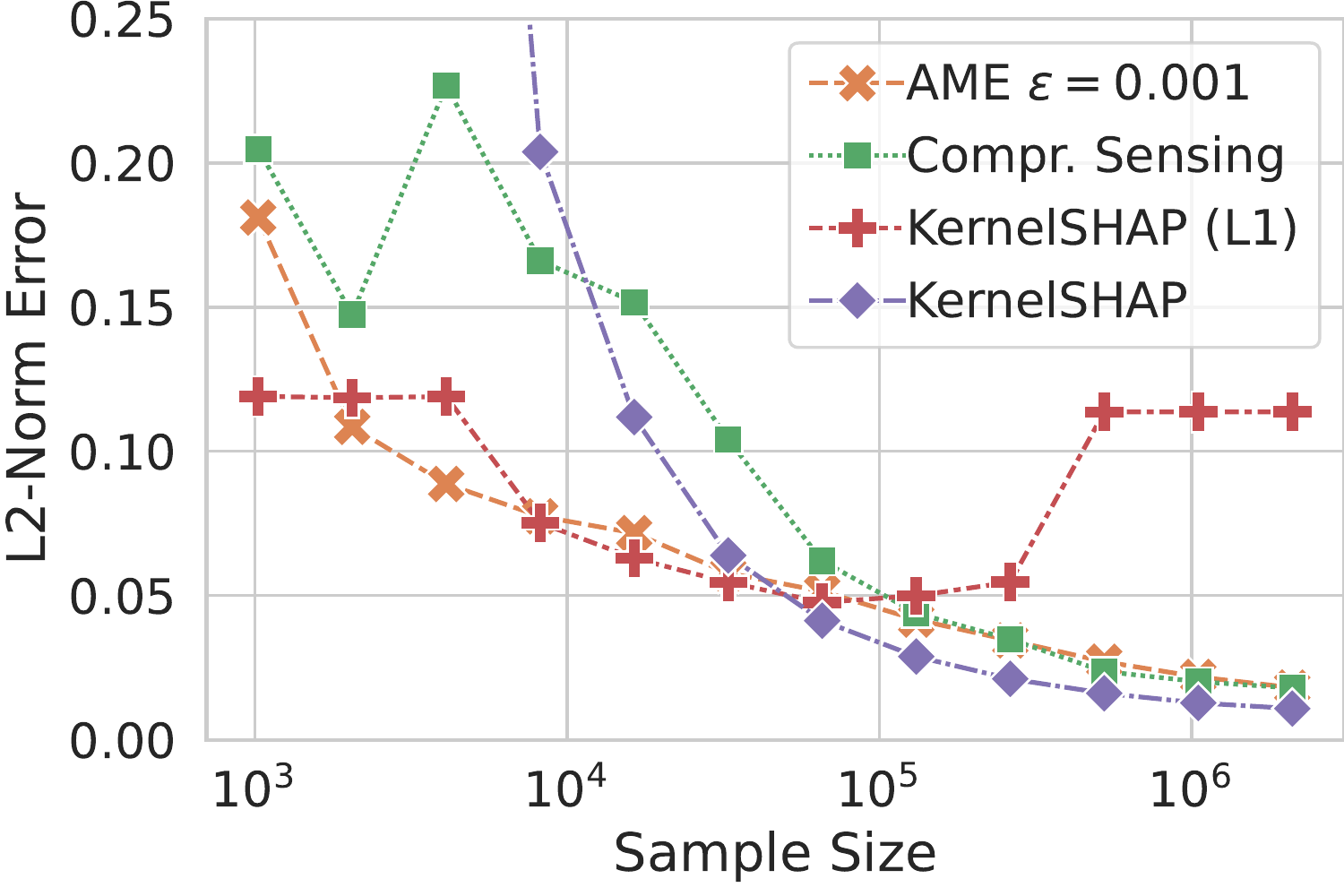}
    \caption{Est. error vs sample size on Poisoned MNIST dataset.
    }
    \label{fig:est-mnist}
    \end{minipage}
\end{figure*}

\subsection{Detecting Poisoned Training Data}
\label{subsec:poison-detect}

We study various models and attacks on image classification and sentiment analysis, summarized in Table~\ref{tab:dataset-summary}. Fig.~\ref{fig:poison-example} shows a few concrete attack examples.
Appendix \ref{appedix:eva-setting-details} provides more details, as well as the hyperparameters used. 

\noindent\textbf{Precision and recall.} Given a query for a mis-classified example at test time, we use our \quantity{} estimator to pinpoint those training data points that contribute to the (mis)prediction. A detection is correct if its corresponding training data point has been poisoned. Table~\ref{tab:main} shows the average precision and recall across multiple queries, for each scenario presented in Table~\ref{tab:dataset-summary}. 
To make the number of subset models $M$ comparable across tasks (and because we know $k$), we report $c$ and use $M=ck\log_2(N)$ subset models.
Precision is not counted when nothing is selected to avoid an upward bias.
Our largest experiments only run with warm-starting, for computational reasons.
Table~\ref{tab:main} shows that our method (LASSO+Knockoff) achieves very high precision and reasonable recall, and that warm-starting achieves good performance by enabling more utility evaluations.

Figure~\ref{fig:grow-c} shows the precision and recall for two image classification scenarios with and without knockoffs. We see that knockoffs are important for ensuring a consistently high precision (solid blue lines).  And the recall (dashed orange lines) grows as the number of subset models grows with $c$.
Appendix \ref{appendix:eval-details} shows the figures for all scenarios (Fig. \ref{fig:grow-c-full}), as well as other ablation and sensitivity studies for different parts of the methodology and parameters (\ref{sec:micro}). We also discuss the impact of those choices on running time (\ref{appendix:runtime}).


\noindent\textbf{Comparison with prior work.} 
We compare against two recent works: 
SCAn~\cite{tang2021demon}, a poison (outlier) detection technique that requires a set of clean data, and Representer Points~\cite{repr}, a more quantitative approach that measures an influence-like score for training data.
We delay the evaluation of other \SV algorithms to \S\ref{eval:sv}, as existing methods are not able to run on our large experiments, in which $M < N$.
We compare the precision of each method at different recall levels, by varying internal decision thresholds (see Appendix \ref{appendix:comparisons} for details on SCAn).
Fig.~\ref{fig:pr-curve} summarizes the results and shows that \quantity{} performs as well or better than both approaches. \quantity{} is particularly efficient when there are very few poisoned training data points, which existing approaches fail to handle (CIFAR10-20).
We also see a sharp decrease in precision when recall exceeds a certain level for AME, unlike SCAn. This is because we chose the LASSO regularization parameter as $\lambda_{1se}$ to favor sparsity in coefficients, in order to minimize false positives. Fig.~\ref{fig:pr-curve-lambdamin} in the Appendix shows the result of another common choice, $\lambda_{min}$, with less sparsity. The overall findings are similar, with a more graceful drop in precision-recall curve observed. In our approach, the knockoffs automatically control the false discovery rate to ensure that we remain in the high precision regime.

To show that the \quantity{} is able to work at a finer granularity than SCAn, we ran a mixed attack setup on CIFAR10, where we simultaneously use 3 different attacks: each attack has a different trigger and poisons 20 different images.
We found that SCAn clusters nearly all poisoned images together (along with many clean images), regardless of the attack used in the query $Q$;
while the \quantity{} selects the correct attack for each query, and achieves an average precision (recall) of 96.3\% (65.5\%), 97.4 (89.5\%) and 97.1\% (71.3\%), respectively for 20 random queries from each attack.

Appendix \ref{appendix:comparisons} provides more details and evaluation of SCAn, Representer Points, influence functions, and Shapley values. 

\noindent\textbf{Improvements under hierarchical design.} We group the $300$K users of the NLP dataset into $300$ time-based groups or $1$K users each. We poison two top-level sources, with $5$ and $10$ poisons, respectively (details in Appendix \ref{appendix:hierarchical}). Figure~\ref{fig:hierarchy} shows the \quantity{} precision and recall in finding the poisons, averaged over 20 different queries (poisoned test points).
We find that (a) top-level sources are detected with few observations; and (b) recall for second-level sources degrades gracefully as the number of observations (submodels) decreases. The hierarchical split is also efficient: it achieves $\sim$$100\%$ precision and $60\%$ recall with $1.5$K observations, before our non-hierarchical method detects anything.

\subsection{Data Attribution for Non-poisoned Predictions}
\label{subsec:other-apply}

We next measure what training data led to a specific prediction in the absence of poisoned data. Figs. \ref{fig:grid-wrong} and \ref{fig:grid-correct} show examples from a subset of ImageNet, for correct and incorrect predictions, respectively.
Qualitatively, explanation images share similar visual characteristics.
Quantitatively, Figure~\ref{fig:conf-drop} in Appendix \ref{appendix:eval:general} shows that removing the detected inputs significantly reduces the target prediction's score (compared to a random removal baseline), showing that we detect inputs with significant impact.
Additional results, and a comparison to a random baseline, can be found at \url{https://enola2022.github.io/}. Appendix~\ref{appendix:eval:general} details the setting, and shows results for CIFAR10.

\subsection{Shapley Value Estimation from \quantity{}}
\label{eval:sv}


Finally, we showcase our SV estimator from \quantity{} using simulated data and a subset of MNIST with poisoning, which are small enough to study the $M > N$ case where known estimators are applicable. In both setups, we know the ground truth or can approximate it closely enough, respectively (details in Appendix \ref{appendix:sv}). We compare the \AME to KernelSHAP~\cite{lundberg2017unified}, Paired Sampling~\cite{covert2021improving}, and two sparsity-aware methods: ``KernelSHAP (L1)''~\cite{lundberg2017unified} that uses LASSO heuristically to filter out variables before fitting a linear regression, and Compressive Sensing~\cite{jia_towards_2020}.
We use $p$-featurization (\S\ref{subsec:p-fea}) without knockoffs, and $p\sim Uni(\varepsilon, 1-\varepsilon)$.

The results in Figure \ref{fig:est-pfea-sim} (simulated data -- KernelSHAP without $L_1$ regularization mostly overlaps with Paired Sampling and thus is not shown) and Figure \ref{fig:est-mnist} (MNIST data -- Paired Sampling omitted due to prohibitive memory and computation costs) show that \AME delivers the fastest rate among these baselines on small sample sizes, and remains competitive when sample sizes become larger, though with a slightly larger final error than KernelSHAP on MNIST (likely due to approximate sparsity).
This larger error, however, is for a large sample size ($M>50N$) a regime unlikely to be practical for SV given the cost of utility evaluations (model training). 
Notably, on MNIST, ``KernelSHAP (L1)'' error is as low as the \AME when the sample size is small, while it diverges with more samples. 
This seems to be due to incorrect filtering in the heuristic LASSO step, which misses one of the $k$ poisoned datapoints (variables) on large sample sizes. AME does not have this instability as LASSO is the final estimate of the \SV.
Appendix~\ref{appendix:sv} shows comparisons to additional baselines, as well as ablation studies.

\section{Related Work}
\label{sec:rel-work}

We focus on the closest related work, and refer the reader to Appendix \ref{appendix:relwork} for a broader discussion.
Efficient Shapley value (SV)~\cite{Shapley1953} estimation is an active area, and is closest to our work.
Recent proposals also reduce SV estimation to regression, although differently than we do~\cite{lundberg2017unified,williamson2020efficient,covert2021improving,jethani2021fastshap}. Beta-Shapley~\cite{kwon2021beta} proposes a generalization of SV that coincides with the AME when $p\sim \mathrm{Beta}$ (we found the truncated uniform to be better in practice, but provide error bounds for both).
None of these works study the sparse setting, or provide efficient $L_2$ bounds in this setting. This may stem from their focus on SV for {\em features}, in smaller settings than we consider for training data, and in which sparsity may be less natural. 
The most comparable work to ours is that of Jia et.al~\cite{jia_towards_2020}, which provides multiple estimators, including under sparse, monotonic utility assumptions. Their approach uses compressive sensing (closely related to LASSO) and yields an $O(N\log\log N)$ rate. We significantly improve on this rate with an $O(k\log N)$ estimator, which is much more efficient in the sparse ($k \ll N$) regime.

Other principled model explanation approaches exist, based on influence functions~\cite{koh_understanding_2017,koh_accuracy_2019,feldman2020neural}, Representer Points~\cite{repr}, or loss tracking~\cite{pruthi_estimating_2020,hammoudehsimple}.
They either focus on marginal influence on the whole dataset~\cite{koh_understanding_2017,koh_accuracy_2019}; make strong assumptions (\eg convexity) that disallow their use with DNNs~\cite{koh_understanding_2017,koh_accuracy_2019,basu_influence_2020}; cannot reason about data sources or sets of training samples~\cite{pruthi_estimating_2020,hammoudehsimple, chakarov2016debugging}; or subsample training data but focus on a single inclusion probability, and thus cannot explain results in all scenarios~\cite{feldman2020neural}.

\ignore{
The closest related works, {\bf model explanation}, also aim to provide principled measures of training data impact on the performance of an ML model. These techniques include Shapley value, influence functions and Representer Point which we discussed in \S\ref{sec:intro}.
TracIn~\cite{pruthi_estimating_2020} and CosIn~\cite{hammoudehsimple} are two other recent proposals for measuring the influence of a training sample based on how it impacts the loss function during training. Existing model explanation approaches fall short. Many of them~\cite{pruthi_estimating_2020,hammoudehsimple, chakarov2016debugging} cannot reason about data sources or sets of training samples.  While some recent proposals, \eg~\cite{feldman2020neural} subsample training data and can thus be extended to handle sources, they use a single inclusion probability to generate subsamples and thus cannot explain results in all scenarios.
%

\noteanqi{add the related works of SV estimations}
The efficiently estimation of \textbf{Shapley value (SV)}, which was proposed in game theory~\cite{Shapley1953}, has been widely studied in recent years. Under the assumption of bounded utility functions, ~\cite{maleki2014bounding} proposed a permutation sampling based approach to achieve a desired approximation error in $l_2$ norm by requiring $O(N^2 \log(N))$ samples. ~\cite{jia_towards_2020} introduced two estimation algorithms by introducing various assumptions on the utility function. Under the sparsity assumption, their work reduced the samples required to $O(N \log\log(N))$ by describing a compressive permutation sampling method. Many SV estimations also have been applied to feature importance selection~\cite{jethani2021fastshap, lundberg2017unified, covert2021improving}. ~\cite{covert2021improving} introduced an unbiased version of KernelSHAP, which improved the linear regression-based approach to SV estimation. ~\cite{jethani2021fastshap} proposed a FastSHAP to amortize the cost by estimating SV in a single forward pass using a learned explainer model. However their goal is to quantify which features are the most influential for the model output, which is very different to ours. With aim to quantify the value of individual data points, our approach requires only $O(k\log(N))$ samples which is more efficient than the previously state-of-art works~\cite{maleki2014bounding, jia_towards_2020}.

\ignore{
Probability of Sufficiency (PS) also builds on causal inference theory, and measures the causal impact of switching data points' labels \mbox{\cite{chakarov2016debugging}}.
The proposed estimation techniques do not apply to complex ML models such as DNNs.
Probability of Sufficiency (PS)~\cite{chakarov2016debugging} measures the causal impact by switching data points' labels, but their approach does not apply to complex ML models such as DNNs.
In contrast, we propose a principled quantity that measures the marginal impact of a data point when added to different subsets of the data, and show that it captures individual contributions to group effects. 
}

Another related body of work is the work focused on defending against data poisoning attacks.
Reject On Negative Impact (RONI)~\cite{barreno_security_2010,baracaldo_mitigating_2017} describes an algorithm that measures the LOO effect of data points on subsets of the data. However, RONI is sample-inefficient and the paper does not prescribe a subset distribution to be used. As we explained previously, the choice of subset distribution can impact precissoion and recall.
Another line of work \cite{tran2018spectral,hayase2021spectre,chen2018detecting,tang2021demon,shen2016auror} uses outlier detection to identify poisoned data. These are not query aware and thus can select benign outliers. Other approaches~\cite{doan2020februus,tang2021demon,veldanda2020nnoculation} assume the availability of clean data, or make strong assumptions about the model, e.g., assuming that linear models~\cite{Jagielski2018ManipulatingML}, or the attack, e.g., assuming that the attack is a source-agnostic trigger attacks~\cite{gao2019strip}, a trigger attack with small norm or size~\cite{chou2018sentinet,Wang2019NeuralCI,udeshi2019model}, or a clean-label attack~\cite{peri2020deep}. None of these approaches can generalize across techniques.

Our work is also related to the existing literature on \textbf{data cleaning and management}. However, data cleaning approaches are not query driven, and must rely on other assumptions. As a result, many approaches depend on user provided integrity constraints \cite{chu2013holistic,chu2013discovering} or outlier detection~\cite{maletic2000data, hellerstein2008quantitative}. As a result these approaches cannot always identify poisoned data~\cite{koh2018stronger} and might also identify benign outliers. Recent approaches~\cite{krishnan2017boostclean,dolatshah2018cleaning}, have also used another downstream DNN for data cleaning. However, these approaches assume that corrupt data has an influence on test set performance, an assumption that may not hold in scenarios such as data poisoning. Finally, Rain~\cite{wu2020complaint} is a recent query-driven proposal that proposes using influence function to explain SQL query results. While Rain shares similar goals, we focus on DNNs, a different use case.


Finally, our work leverages, and builds on, a large body of existing work from different fields.
\ignore{
First is the {\bf causal inference} literature.
If we regard the inclusion of a data source as a treatment, our methodology is related to factorial experiments, from which our \quantity{} is inspired. 
For instance, ~\cite{kang2007demystifying, imbens_causal_2015} focuses on single treatment \quantity{} (i.e. only one source) and observational study rather than randomized experiment. 
Multiple treatments are introduced in \cite{egami_causal_2019,dasgupta_causal_2015,hainmueller_causal_2014}, though their quantity uses different population distribution than ours, and different estimation techniques.
In the computer science filed, Sunlight \cite{10.1145/2810103.2813614} uses a similar approach to study ad targeting. However, this paper uses only one sampling probability, and a holdout set for statistical confidence instead of our knockoff procedure.
}
First, our work is related to the {\bf causal inference} literature if we regard the inclusion of a data source as a treatment, from which \quantity{} is inspired.
For instance, \cite{kang2007demystifying, imbens_causal_2015} focuses on single treatment \quantity{} (i.e. only one source). 
Multiple treatments are introduced in \cite{egami_causal_2019,dasgupta_causal_2015,hainmueller_causal_2014}, though their quantity uses different population distribution than ours, and different estimation techniques.
In the computer science filed, Sunlight \cite{10.1145/2810103.2813614} uses a similar approach but with only one sampling probability and without knockoff procedure.
Second, {\bf sparse recovery} studies efficient algorithms to recover a sparse signal from high dimensional observations. We leverage LASSO \cite{lecue_regularization_2017} --with properties related to those of compressive sensing \cite{candes2006compressive}-- and knockoffs \cite{candes_panning_2017}.
Other approaches to the important factor selection problem exist, such as the analysis of variance (ANOVA) \cite{bondell_simultaneous_2009, egami_causal_2019, post_factor_2013} used in \cite{egami_causal_2019}, but we think LASSO is better suited to our use-case due to its scalability guarantees.
Third, our goal is related to {\bf group testing} \cite{noauthor_group_2021,group-survey} as discussed in \S\ref{sec:intro},
and studying if and how group testing ideas could improve our technique is an interesting avenue for future work.

\notejinkun{Integrate this:} Beta-Shapley~\cite{kwon2021beta} is a concurrent work where the authors generalize data Shapley to consider different weightings based on subset size, which coincides with our AME under $p$-induced distribution (\eg, Beta-Shapley is AME when $p\sim$ beta distribution). They study the statistical and game theory properties and apply them to guide the choice of weighting functions, while we are interested in the sparse regime and focus more on the practical aspect (\eg efficient estimation and false detection control). We also propose a sparse SV estimator which they do not. D-Shapley~\cite{ghorbani2020distributional} generalizes SV by modeling the dataset as a random variable.

}

\ignore{
\section{Conclusion}
\somename addresses the data attribution problem for complex ML models, under a practical setting where data are organized into a (possibly hierarchical) sources. We propose a metric, \quantity{}, for measuring individual and group contributions to predictions made by a trained model.
We describe an efficient methodology to estimate the \quantity{} under a sparsity assumption, based on a reformulation of the estimation as a regression problem, which allows an efficient application of LASSO.
We further leverage Knockoffs to control the false detection rate.
Finally, we extend \somename{}'s methodology to support hierarchies of data sources, enabling proponent detection at each level of the hierarchy, which leads to cost savings and graceful degradation of the detection.
Our evaluation shows that \somename is effective across a number of datasets and models.
}

\section*{Acknowledgments}
Jinkun Lin is partially supported by NSF-1514422. Anqi Zhang is partially supported by a gift from Microsoft. We thank Daniel Hsu for insightful discussions in the early stages of the project, as well as the reviewers for their constructive comments.

\bibliography{bibs}

\begin{thebibliography}{84}
\providecommand{\natexlab}[1]{#1}
\providecommand{\url}[1]{\texttt{#1}}
\expandafter\ifx\csname urlstyle\endcsname\relax
  \providecommand{\doi}[1]{doi: #1}\else
  \providecommand{\doi}{doi: \begingroup \urlstyle{rm}\Url}\fi

\bibitem[Agrawal et~al.(2018)Agrawal, Verschueren, Diamond, and
  Boyd]{agrawal2018rewriting}
Agrawal, A., Verschueren, R., Diamond, S., and Boyd, S.
\newblock A rewriting system for convex optimization problems.
\newblock \emph{Journal of Control and Decision}, 5\penalty0 (1):\penalty0
  42--60, 2018.

\bibitem[Aldridge et~al.(2019)Aldridge, Johnson, and Scarlett]{group-survey}
Aldridge, M., Johnson, O., and Scarlett, J.
\newblock Group testing: An information theory perspective.
\newblock \emph{Foundations and Trends® in Communications and Information
  Theory}, 15\penalty0 (3-4):\penalty0 196--392, 2019.
\newblock ISSN 1567-2190.
\newblock \doi{10.1561/0100000099}.
\newblock URL \url{http://dx.doi.org/10.1561/0100000099}.

\bibitem[Angrist \& Pischke(2008)Angrist and Pischke]{angrist2008mostly}
Angrist, J.~D. and Pischke, J.-S.
\newblock \emph{Mostly harmless econometrics: An empiricist's companion}.
\newblock Princeton university press, 2008.

\bibitem[Baek()]{baek_wbaektorchskeleton_2020}
Baek, W.
\newblock wbaek/torchskeleton.
\newblock URL \url{https://github.com/wbaek/torchskeleton}.
\newblock Online; accessed 2021-02-07.

\bibitem[Balakumar()]{noauthor_bbalasub1glmnet_python_nodate}
Balakumar, B.~J.
\newblock glmnet\_python.
\newblock URL \url{https://github.com/bbalasub1/glmnet_python}.

\bibitem[Baracaldo et~al.(2017)Baracaldo, Chen, Ludwig, and
  Safavi]{baracaldo_mitigating_2017}
Baracaldo, N., Chen, B., Ludwig, H., and Safavi, J.~A.
\newblock Mitigating poisoning attacks on machine learning models: A data
  provenance based approach.
\newblock In \emph{Proceedings of the 10th ACM Workshop on Artificial
  Intelligence and Security}, pp.\  103--110, 2017.

\bibitem[Barreno et~al.(2010)Barreno, Nelson, Joseph, and
  Tygar]{barreno_security_2010}
Barreno, M., Nelson, B., Joseph, A.~D., and Tygar, J.~D.
\newblock The security of machine learning.
\newblock \emph{Machine Learning}, 81\penalty0 (2):\penalty0 121--148, 2010.

\bibitem[Basu et~al.(2020)Basu, Pope, and Feizi]{basu_influence_2020}
Basu, S., Pope, P., and Feizi, S.
\newblock Influence functions in deep learning are fragile.
\newblock \emph{arXiv preprint arXiv:2006.14651}, 2020.

\bibitem[Bondell \& Reich(2009)Bondell and Reich]{bondell_simultaneous_2009}
Bondell, H.~D. and Reich, B.~J.
\newblock Simultaneous factor selection and collapsing levels in anova.
\newblock \emph{Biometrics}, 65\penalty0 (1):\penalty0 169--177, 2009.

\bibitem[Candes et~al.(2016)Candes, Fan, Janson, and Lv]{candes_panning_2017}
Candes, E., Fan, Y., Janson, L., and Lv, J.
\newblock Panning for gold: Model-x knockoffs for high-dimensional controlled
  variable selection.
\newblock \emph{arXiv preprint arXiv:1610.02351}, 2016.

\bibitem[Cand{\`e}s et~al.(2006)]{candes2006compressive}
Cand{\`e}s, E.~J. et~al.
\newblock Compressive sampling.
\newblock In \emph{Proceedings of the international congress of
  mathematicians}, 2006.

\bibitem[Chakarov et~al.(2016)Chakarov, Nori, Rajamani, Sen, and
  Vijaykeerthy]{chakarov2016debugging}
Chakarov, A., Nori, A., Rajamani, S., Sen, S., and Vijaykeerthy, D.
\newblock Debugging machine learning tasks.
\newblock \emph{arXiv preprint arXiv:1603.07292}, 2016.

\bibitem[Chen et~al.(2018)Chen, Carvalho, Baracaldo, Ludwig, Edwards, Lee,
  Molloy, and Srivastava]{chen2018detecting}
Chen, B., Carvalho, W., Baracaldo, N., Ludwig, H., Edwards, B., Lee, T.,
  Molloy, I., and Srivastava, B.
\newblock Detecting backdoor attacks on deep neural networks by activation
  clustering.
\newblock \emph{arXiv preprint arXiv:1811.03728}, 2018.

\bibitem[Chen et~al.(2017)Chen, Liu, Li, Lu, and Song]{chen2017targeted}
Chen, X., Liu, C., Li, B., Lu, K., and Song, D.
\newblock Targeted backdoor attacks on deep learning systems using data
  poisoning.
\newblock \emph{arXiv preprint arXiv:1712.05526}, 2017.

\bibitem[Chou et~al.(2018)Chou, Tram{\`e}r, Pellegrino, and
  Boneh]{chou2018sentinet}
Chou, E., Tram{\`e}r, F., Pellegrino, G., and Boneh, D.
\newblock Sentinet: Detecting physical attacks against deep learning systems.
\newblock 2018.

\bibitem[Chu et~al.(2013{\natexlab{a}})Chu, Ilyas, and
  Papotti]{chu2013discovering}
Chu, X., Ilyas, I.~F., and Papotti, P.
\newblock Discovering denial constraints.
\newblock \emph{Proceedings of the VLDB Endowment}, 6\penalty0 (13):\penalty0
  1498--1509, 2013{\natexlab{a}}.

\bibitem[Chu et~al.(2013{\natexlab{b}})Chu, Ilyas, and
  Papotti]{chu2013holistic}
Chu, X., Ilyas, I.~F., and Papotti, P.
\newblock Holistic data cleaning: Putting violations into context.
\newblock In \emph{2013 IEEE 29th International Conference on Data Engineering
  (ICDE)}, pp.\  458--469. IEEE, 2013{\natexlab{b}}.

\bibitem[Cohen et~al.(2017)Cohen, Afshar, Tapson, and
  Van~Schaik]{cohen2017emnist}
Cohen, G., Afshar, S., Tapson, J., and Van~Schaik, A.
\newblock Emnist: Extending mnist to handwritten letters.
\newblock In \emph{2017 International Joint Conference on Neural Networks
  (IJCNN)}, pp.\  2921--2926. IEEE, 2017.

\bibitem[Covert \& Lee(2021)Covert and Lee]{covert2021improving}
Covert, I. and Lee, S.-I.
\newblock Improving kernelshap: Practical shapley value estimation using linear
  regression.
\newblock In \emph{International Conference on Artificial Intelligence and
  Statistics}, pp.\  3457--3465. PMLR, 2021.

\bibitem[Dasgupta et~al.(2015)Dasgupta, Pillai, and
  Rubin]{dasgupta_causal_2015}
Dasgupta, T., Pillai, N.~S., and Rubin, D.~B.
\newblock Causal inference from 2 k factorial designs by using potential
  outcomes.
\newblock \emph{Journal of the Royal Statistical Society: Series B: Statistical
  Methodology}, pp.\  727--753, 2015.

\bibitem[Diamond \& Boyd(2016)Diamond and Boyd]{diamond2016cvxpy}
Diamond, S. and Boyd, S.
\newblock {CVXPY}: {A} {P}ython-embedded modeling language for convex
  optimization.
\newblock \emph{Journal of Machine Learning Research}, 17\penalty0
  (83):\penalty0 1--5, 2016.

\bibitem[Doan et~al.(2020)Doan, Abbasnejad, and Ranasinghe]{doan2020februus}
Doan, B.~G., Abbasnejad, E., and Ranasinghe, D.~C.
\newblock Februus: Input purification defense against trojan attacks on deep
  neural network systems.
\newblock In \emph{Annual Computer Security Applications Conference}, pp.\
  897--912, 2020.

\bibitem[Dolatshah et~al.(2018)Dolatshah, Teoh, Wang, and
  Pei]{dolatshah2018cleaning}
Dolatshah, M., Teoh, M., Wang, J., and Pei, J.
\newblock Cleaning crowdsourced labels using oracles for statistical
  classification.
\newblock \emph{Proc. VLDB Endow.}, 12\penalty0 (4):\penalty0 376–389,
  December 2018.
\newblock ISSN 2150-8097.
\newblock \doi{10.14778/3297753.3297758}.
\newblock URL \url{https://doi.org/10.14778/3297753.3297758}.

\bibitem[Egami \& Imai(2018)Egami and Imai]{egami_causal_2019}
Egami, N. and Imai, K.
\newblock Causal interaction in factorial experiments: Application to conjoint
  analysis.
\newblock \emph{Journal of the American Statistical Association}, 2018.

\bibitem[Feldman \& Zhang(2020)Feldman and Zhang]{feldman2020neural}
Feldman, V. and Zhang, C.
\newblock What neural networks memorize and why: Discovering the long tail via
  influence estimation.
\newblock \emph{arXiv preprint arXiv:2008.03703}, 2020.

\bibitem[Frye et~al.(2020)Frye, Rowat, and Feige]{frye2020asymmetric}
Frye, C., Rowat, C., and Feige, I.
\newblock Asymmetric shapley values: incorporating causal knowledge into
  model-agnostic explainability.
\newblock \emph{Advances in Neural Information Processing Systems}, 2020.

\bibitem[Gao et~al.(2019)Gao, Xu, Wang, Chen, Ranasinghe, and
  Nepal]{gao2019strip}
Gao, Y., Xu, C., Wang, D., Chen, S., Ranasinghe, D.~C., and Nepal, S.
\newblock Strip: A defence against trojan attacks on deep neural networks.
\newblock In \emph{Proceedings of the 35th Annual Computer Security
  Applications Conference}, pp.\  113--125, 2019.

\bibitem[Ghorbani \& Zou(2019)Ghorbani and Zou]{ghorbani_data_nodate}
Ghorbani, A. and Zou, J.
\newblock Data shapley: Equitable valuation of data for machine learning.
\newblock In \emph{International Conference on Machine Learning}, pp.\
  2242--2251. PMLR, 2019.

\bibitem[Ghorbani et~al.(2020)Ghorbani, Kim, and
  Zou]{ghorbani2020distributional}
Ghorbani, A., Kim, M., and Zou, J.
\newblock A distributional framework for data valuation.
\newblock In \emph{International Conference on Machine Learning}, pp.\
  3535--3544. PMLR, 2020.

\bibitem[Hainmueller et~al.(2014)Hainmueller, Hopkins, and
  Yamamoto]{hainmueller_causal_2014}
Hainmueller, J., Hopkins, D.~J., and Yamamoto, T.
\newblock Causal inference in conjoint analysis: Understanding multidimensional
  choices via stated preference experiments.
\newblock \emph{Political analysis}, 22\penalty0 (1):\penalty0 1--30, 2014.

\bibitem[Hammoudeh \& Lowd()Hammoudeh and Lowd]{hammoudehsimple}
Hammoudeh, Z. and Lowd, D.
\newblock Simple, attack-agnostic defense against targeted training set attacks
  using cosine similarity.

\bibitem[Harris et~al.(2022)Harris, Pymar, and Rowat]{harris2022joint}
Harris, C., Pymar, R., and Rowat, C.
\newblock Joint shapley values: a measure of joint feature importance.
\newblock In \emph{International Conference on Learning Representations}, 2022.

\bibitem[Hastie et~al.(2016)Hastie, Qian, and Tay]{glmnet}
Hastie, T., Qian, J., and Tay, K.
\newblock An introduction to glmnet, 2016.

\bibitem[Hayase et~al.(2021)Hayase, Kong, Somani, and Oh]{hayase2021spectre}
Hayase, J., Kong, W., Somani, R., and Oh, S.
\newblock Spectre: Defending against backdoor attacks using robust statistics.
\newblock \emph{arXiv preprint arXiv:2104.11315}, 2021.

\bibitem[He et~al.(2016)He, Zhang, Ren, and Sun]{he2016deep}
He, K., Zhang, X., Ren, S., and Sun, J.
\newblock Deep residual learning for image recognition.
\newblock In \emph{Proceedings of the IEEE conference on computer vision and
  pattern recognition}, pp.\  770--778, 2016.

\bibitem[Hellerstein(2008)]{hellerstein2008quantitative}
Hellerstein, J.~M.
\newblock Quantitative data cleaning for large databases.
\newblock \emph{United Nations Economic Commission for Europe (UNECE)}, 25,
  2008.

\bibitem[Ilyas et~al.(2022)Ilyas, Park, Engstrom, Leclerc, and
  Madry]{ilyas2022datamodels}
Ilyas, A., Park, S.~M., Engstrom, L., Leclerc, G., and Madry, A.
\newblock Datamodels: Predicting predictions from training data.
\newblock \emph{arXiv preprint arXiv:2202.00622}, 2022.

\bibitem[Imbens \& Rubin(2015)Imbens and Rubin]{imbens_causal_2015}
Imbens, G.~W. and Rubin, D.~B.
\newblock \emph{Causal inference in statistics, social, and biomedical
  sciences}.
\newblock Cambridge University Press, 2015.

\bibitem[Jagielski et~al.(2018)Jagielski, Oprea, Biggio, Liu, Nita-Rotaru, and
  Li]{Jagielski2018ManipulatingML}
Jagielski, M., Oprea, A., Biggio, B., Liu, C., Nita-Rotaru, C., and Li, B.
\newblock Manipulating machine learning: Poisoning attacks and countermeasures
  for regression learning.
\newblock In \emph{2018 IEEE Symposium on Security and Privacy (SP)}, pp.\
  19--35. IEEE, 2018.

\bibitem[Jethani et~al.(2021)Jethani, Sudarshan, Covert, Lee, and
  Ranganath]{jethani2021fastshap}
Jethani, N., Sudarshan, M., Covert, I., Lee, S.-I., and Ranganath, R.
\newblock Fastshap: Real-time shapley value estimation.
\newblock \emph{arXiv e-prints}, pp.\  arXiv--2107, 2021.

\bibitem[Jia(2021)]{jia_gt}
Jia, R.
\newblock Group testing implementation, October 2021.
\newblock URL
  \url{https://github.com/sunblaze-ucb/data-valuation/blob/8f101e0884544ebf94e741f54fc2a6833ffb2f90/group_testing/group_testing_example.py}.

\bibitem[Jia et~al.(2019)Jia, Dao, Wang, Hubis, Hynes, G{\"u}rel, Li, Zhang,
  Song, and Spanos]{jia_towards_2020}
Jia, R., Dao, D., Wang, B., Hubis, F.~A., Hynes, N., G{\"u}rel, N.~M., Li, B.,
  Zhang, C., Song, D., and Spanos, C.~J.
\newblock Towards efficient data valuation based on the shapley value.
\newblock In \emph{The 22nd International Conference on Artificial Intelligence
  and Statistics}, pp.\  1167--1176. PMLR, 2019.

\bibitem[Kang et~al.(2007)Kang, Schafer, et~al.]{kang2007demystifying}
Kang, J.~D., Schafer, J.~L., et~al.
\newblock Demystifying double robustness: A comparison of alternative
  strategies for estimating a population mean from incomplete data.
\newblock \emph{Statistical science}, 22\penalty0 (4):\penalty0 523--539, 2007.

\bibitem[Kingma \& Ba(2014)Kingma and Ba]{kingma2017adam}
Kingma, D.~P. and Ba, J.
\newblock Adam: A method for stochastic optimization.
\newblock \emph{arXiv preprint arXiv:1412.6980}, 2014.

\bibitem[Koh \& Liang(2017)Koh and Liang]{koh_understanding_2017}
Koh, P.~W. and Liang, P.
\newblock Understanding black-box predictions via influence functions.
\newblock In \emph{International Conference on Machine Learning}, pp.\
  1885--1894. PMLR, 2017.

\bibitem[Koh et~al.(2018)Koh, Steinhardt, and Liang]{koh2018stronger}
Koh, P.~W., Steinhardt, J., and Liang, P.
\newblock Stronger data poisoning attacks break data sanitization defenses.
\newblock \emph{arXiv preprint arXiv:1811.00741}, 2018.

\bibitem[Koh et~al.(2019)Koh, Ang, Teo, and Liang]{koh_accuracy_2019}
Koh, P.~W., Ang, K.-S., Teo, H.~H., and Liang, P.
\newblock On the accuracy of influence functions for measuring group effects.
\newblock \emph{arXiv preprint arXiv:1905.13289}, 2019.

\bibitem[Krishnan et~al.(2017)Krishnan, Franklin, Goldberg, and
  Wu]{krishnan2017boostclean}
Krishnan, S., Franklin, M.~J., Goldberg, K., and Wu, E.
\newblock Boostclean: Automated error detection and repair for machine
  learning.
\newblock \emph{arXiv preprint arXiv:1711.01299}, 2017.

\bibitem[Krizhevsky et~al.(2009)Krizhevsky, Hinton,
  et~al.]{krizhevsky2009learning}
Krizhevsky, A., Hinton, G., et~al.
\newblock Learning multiple layers of features from tiny images.
\newblock 2009.

\bibitem[Kuangliu()]{noauthor_kuangliupytorch-cifar_nodate}
Kuangliu.
\newblock kuangliu/pytorch-cifar.
\newblock URL
  \url{https://github.com/kuangliu/pytorch-cifar/tree/ab908327d44bf9b1d22cd333a4466e85083d3f21}.
\newblock Online; accessed 2021-02-07.

\bibitem[Kwon \& Zou(2022)Kwon and Zou]{kwon2021beta}
Kwon, Y. and Zou, J.
\newblock Beta shapley: a unified and noise-reduced data valuation framework
  for machine learning.
\newblock In \emph{International Conference on Artificial Intelligence and
  Statistics}, 2022.

\bibitem[Lecu{\'e} \& Mendelson(2017)Lecu{\'e} and
  Mendelson]{lecue2017regularizationfollowup}
Lecu{\'e}, G. and Mendelson, S.
\newblock Regularization and the small-ball method ii: complexity dependent
  error rates.
\newblock \emph{The Journal of Machine Learning Research}, 18\penalty0
  (1):\penalty0 5356--5403, 2017.

\bibitem[Lecu{\'e} et~al.(2018)Lecu{\'e}, Mendelson,
  et~al.]{lecue_regularization_2017}
Lecu{\'e}, G., Mendelson, S., et~al.
\newblock Regularization and the small-ball method i: sparse recovery.
\newblock \emph{The Annals of Statistics}, 46\penalty0 (2):\penalty0 611--641,
  2018.

\bibitem[Lecuyer et~al.(2015)Lecuyer, Spahn, Spiliopolous, Chaintreau,
  Geambasu, and Hsu]{10.1145/2810103.2813614}
Lecuyer, M., Spahn, R., Spiliopolous, Y., Chaintreau, A., Geambasu, R., and
  Hsu, D.
\newblock Sunlight: Fine-grained targeting detection at scale with statistical
  confidence.
\newblock In \emph{Proceedings of the 22nd ACM SIGSAC Conference on Computer
  and Communications Security}, CCS '15, 2015.

\bibitem[Lipovetsky \& Conklin(2001)Lipovetsky and
  Conklin]{lipovetsky2001analysis}
Lipovetsky, S. and Conklin, M.
\newblock Analysis of regression in game theory approach.
\newblock \emph{Applied Stochastic Models in Business and Industry}, 2001.

\bibitem[Lundberg \& Lee(2017)Lundberg and Lee]{lundberg2017unified}
Lundberg, S.~M. and Lee, S.-I.
\newblock A unified approach to interpreting model predictions.
\newblock In \emph{Proceedings of the 31st international conference on neural
  information processing systems}, pp.\  4768--4777, 2017.

\bibitem[Maleki et~al.(2014)Maleki, Tran-Thanh, Hines, Rahwan, and
  Rogers]{maleki2014bounding}
Maleki, S., Tran-Thanh, L., Hines, G., Rahwan, T., and Rogers, A.
\newblock Bounding the estimation error of sampling-based shapley value
  approximation, 2014.

\bibitem[Maletic \& Marcus(2000)Maletic and Marcus]{maletic2000data}
Maletic, J.~I. and Marcus, A.
\newblock Data cleansing: Beyond integrity analysis.
\newblock In \emph{Iq}, pp.\  200--209. Citeseer, 2000.

\bibitem[Mitchell et~al.(2022)Mitchell, Cooper, Frank, and
  Holmes]{mitchell2022sampling}
Mitchell, R., Cooper, J., Frank, E., and Holmes, G.
\newblock Sampling permutations for shapley value estimation.
\newblock 2022.

\bibitem[Ni et~al.(2019)Ni, Li, and McAuley]{nlpdset}
Ni, J., Li, J., and McAuley, J.
\newblock Justifying recommendations using distantly-labeled reviews and
  fine-grained aspects.
\newblock In \emph{Proceedings of the 2019 Conference on Empirical Methods in
  Natural Language Processing and the 9th International Joint Conference on
  Natural Language Processing (EMNLP-IJCNLP)}, pp.\  188--197, Hong Kong,
  China, November 2019. Association for Computational Linguistics.
\newblock \doi{10.18653/v1/D19-1018}.
\newblock URL \url{https://aclanthology.org/D19-1018}.

\bibitem[Pauwels(2020)]{pauwels2020lecture}
Pauwels, E.
\newblock Lecture notes: Statistics, optimization and algorithms in high
  dimension, 2020.

\bibitem[Peleg \& Sudh{\"o}lter(2007)Peleg and
  Sudh{\"o}lter]{peleg2007introduction}
Peleg, B. and Sudh{\"o}lter, P.
\newblock \emph{Introduction to the theory of cooperative games}, volume~34.
\newblock Springer Science \& Business Media, 2007.

\bibitem[Peri et~al.(2020)Peri, Gupta, Huang, Fowl, Zhu, Feizi, Goldstein, and
  Dickerson]{peri2020deep}
Peri, N., Gupta, N., Huang, W.~R., Fowl, L., Zhu, C., Feizi, S., Goldstein, T.,
  and Dickerson, J.~P.
\newblock Deep k-nn defense against clean-label data poisoning attacks.
\newblock In \emph{European Conference on Computer Vision}, pp.\  55--70.
  Springer, 2020.

\bibitem[Post \& Bondell(2013)Post and Bondell]{post_factor_2013}
Post, J.~B. and Bondell, H.~D.
\newblock Factor {Selection} and {Structural} {Identification} in the
  {Interaction} {ANOVA} {Model}.
\newblock \emph{Biometrics}, 69\penalty0 (1):\penalty0 70--79, 2013.
\newblock ISSN 1541-0420.
\newblock \doi{https://doi.org/10.1111/j.1541-0420.2012.01810.x}.
\newblock URL
  \url{https://onlinelibrary.wiley.com/doi/abs/10.1111/j.1541-0420.2012.01810.x}.
\newblock \_eprint:
  https://onlinelibrary.wiley.com/doi/pdf/10.1111/j.1541-0420.2012.01810.x.

\bibitem[Pruthi et~al.(2020)Pruthi, Liu, Kale, and
  Sundararajan]{pruthi_estimating_2020}
Pruthi, G., Liu, F., Kale, S., and Sundararajan, M.
\newblock Estimating training data influence by tracing gradient descent.
\newblock \emph{Advances in Neural Information Processing Systems}, 33, 2020.

\bibitem[PyTorch({\natexlab{a}})]{noauthor_pytorchResnet_nodate}
PyTorch.
\newblock pytorch/torchvision/resnet, {\natexlab{a}}.
\newblock URL
  \url{https://github.com/pytorch/vision/blob/master/torchvision/models/resnet.py}.
\newblock Online; accessed 2021-03-30.

\bibitem[PyTorch({\natexlab{b}})]{noauthor_pytorchexamples_nodate}
PyTorch.
\newblock pytorch/examples, {\natexlab{b}}.
\newblock URL
  \url{https://github.com/pytorch/examples/blob/0d3fe14a1c5a00795e3671ea3473caef6f0da72d/mnist/main.py}.
\newblock Online; accessed 2021-02-07.

\bibitem[Rauhut(2010)]{rauhut2010compressive}
Rauhut, H.
\newblock Compressive sensing and structured random matrices.
\newblock \emph{Theoretical foundations and numerical methods for sparse
  recovery}, 9\penalty0 (1):\penalty0 92, 2010.

\bibitem[Rivasplata()]{rivasplata_subgaussian_nodate}
Rivasplata, O.
\newblock Subgaussian random variables: {An} expository note.
\newblock pp.\ ~11.

\bibitem[Russakovsky et~al.(2015)Russakovsky, Deng, Su, Krause, Satheesh, Ma,
  Huang, Karpathy, Khosla, Bernstein, et~al.]{ILSVRC15}
Russakovsky, O., Deng, J., Su, H., Krause, J., Satheesh, S., Ma, S., Huang, Z.,
  Karpathy, A., Khosla, A., Bernstein, M., et~al.
\newblock Imagenet large scale visual recognition challenge.
\newblock \emph{International journal of computer vision}, 115\penalty0
  (3):\penalty0 211--252, 2015.

\bibitem[Shafahi et~al.(2018)Shafahi, Huang, Najibi, Suciu, Studer, Dumitras,
  and Goldstein]{Shafahi2018PoisonFT}
Shafahi, A., Huang, W.~R., Najibi, M., Suciu, O., Studer, C., Dumitras, T., and
  Goldstein, T.
\newblock Poison frogs! targeted clean-label poisoning attacks on neural
  networks.
\newblock \emph{arXiv preprint arXiv:1804.00792}, 2018.

\bibitem[Shapley(1953)]{Shapley1953}
Shapley, L.~S.
\newblock A value for n-person games.
\newblock \emph{Contributions to the Theory of Games}, 2\penalty0
  (28):\penalty0 307--317, 1953.

\bibitem[Shen et~al.(2016)Shen, Tople, and Saxena]{shen2016auror}
Shen, S., Tople, S., and Saxena, P.
\newblock Auror: Defending against poisoning attacks in collaborative deep
  learning systems.
\newblock In \emph{Proceedings of the 32nd Annual Conference on Computer
  Security Applications}, pp.\  508--519, 2016.

\bibitem[Tang et~al.(2021)Tang, Wang, Tang, and Zhang]{tang2021demon}
Tang, D., Wang, X., Tang, H., and Zhang, K.
\newblock Demon in the variant: Statistical analysis of dnns for robust
  backdoor contamination detection.
\newblock In \emph{30th $\{$USENIX$\}$ Security Symposium ($\{$USENIX$\}$
  Security 21)}, 2021.

\bibitem[Tran et~al.(2018)Tran, Li, and Madry]{tran2018spectral}
Tran, B., Li, J., and Madry, A.
\newblock Spectral signatures in backdoor attacks.
\newblock \emph{arXiv preprint arXiv:1811.00636}, 2018.

\bibitem[Trevett()]{bentrevett_rnn_2019}
Trevett, B.
\newblock bentrevett/pytorch-sentiment-analysis/.
\newblock URL \url{https://github.com/bentrevett/pytorch-sentiment-analysis}.
\newblock Online; accessed 2021-03-10.

\bibitem[Udeshi et~al.(2019)Udeshi, Peng, Woo, Loh, Rawshan, and
  Chattopadhyay]{udeshi2019model}
Udeshi, S., Peng, S., Woo, G., Loh, L., Rawshan, L., and Chattopadhyay, S.
\newblock Model agnostic defence against backdoor attacks in machine learning.
\newblock \emph{arXiv preprint arXiv:1908.02203}, 2019.

\bibitem[Veldanda et~al.(2020)Veldanda, Liu, Tan, Krishnamurthy, Khorrami,
  Karri, Dolan-Gavitt, and Garg]{veldanda2020nnoculation}
Veldanda, A.~K., Liu, K., Tan, B., Krishnamurthy, P., Khorrami, F., Karri, R.,
  Dolan-Gavitt, B., and Garg, S.
\newblock Nnoculation: Broad spectrum and targeted treatment of backdoored
  dnns.
\newblock \emph{arXiv preprint arXiv:2002.08313}, 2020.

\bibitem[Wang et~al.(2019)Wang, Yao, Shan, Li, Viswanath, Zheng, and
  Zhao]{Wang2019NeuralCI}
Wang, B., Yao, Y., Shan, S., Li, H., Viswanath, B., Zheng, H., and Zhao, B.~Y.
\newblock Neural cleanse: Identifying and mitigating backdoor attacks in neural
  networks.
\newblock In \emph{2019 IEEE Symposium on Security and Privacy (SP)}, pp.\
  707--723. IEEE, 2019.

\bibitem[WikipediaContributors(2021)]{noauthor_group_2021}
WikipediaContributors.
\newblock Group testing, January 2021.
\newblock URL
  \url{https://en.wikipedia.org/w/index.php?title=Group_testing&oldid=997779128}.
\newblock Online; accessed 2021-02-19.

\bibitem[Williamson \& Feng(2020)Williamson and Feng]{williamson2020efficient}
Williamson, B. and Feng, J.
\newblock Efficient nonparametric statistical inference on population feature
  importance using shapley values.
\newblock In \emph{International Conference on Machine Learning}, 2020.

\bibitem[Wu et~al.(2020)Wu, Flokas, Wu, and Wang]{wu2020complaint}
Wu, W., Flokas, L., Wu, E., and Wang, J.
\newblock Complaint-driven training data debugging for query 2.0.
\newblock In \emph{Proceedings of the 2020 ACM SIGMOD International Conference
  on Management of Data}, pp.\  1317--1334, 2020.

\bibitem[Yeh et~al.(2018)Yeh, Kim, Yen, and Ravikumar]{repr}
Yeh, C.-K., Kim, J.~S., Yen, I.~E., and Ravikumar, P.
\newblock Representer point selection for explaining deep neural networks.
\newblock In \emph{Proceedings of the 32nd International Conference on Neural
  Information Processing Systems}, NIPS'18, pp.\  9311–9321, Red Hook, NY,
  USA, 2018. Curran Associates Inc.

\bibitem[Zhang et~al.(2021)Zhang, Ippolito, Lee, Jagielski, Tram{\`e}r, and
  Carlini]{zhang2021counterfactual}
Zhang, C., Ippolito, D., Lee, K., Jagielski, M., Tram{\`e}r, F., and Carlini,
  N.
\newblock Counterfactual memorization in neural language models.
\newblock \emph{arXiv preprint arXiv:2112.12938}, 2021.

\end{thebibliography}
\bibliographystyle{icml2022}

\newpage
\appendix
\onecolumn
\section{Convergence Rate when Using LASSO to compute \quantity} 

We first state and prove a variant of the LASSO error bound result from~\cite{lecue_regularization_2017} in a simpler setting, which is sufficient for our application and will serve as the foundation of many of our results. We then apply this result to finish the proof of $O(k \log N)$ sample rate of our \quantity{} estimator left in Prop. \ref{prop:lasso-rate} in the main body.

\subsection{Simplified LASSO Error Bound}
\begin{proposition}
\label{prop:simple-lasso-error-bound}
Consider a regression problem where one wants to approximate an unknown random variable $Y$ using a set of random variables $X_1, \dots, X_N$ with $N\geq 3$. Let $\hat{\beta}_{lasso}$ be the LASSO coefficient when regressing $Y$ on $X$ with $L_1$ regularization, and $\beta^*$ be the best linear fit (detailed definition in Prop. \ref{prop:regression-reduction}). Assume that $\forall n\in[N]: \beta^*_n$ is finite, $X_n$ is bounded in $[A, B]$, $Y$ is bounded in $[0,1]$. Further assume that $\forall i,j\in[N]:~ \mathbb E[X_iX_j]=0$ when $i=j$ and otherwise 1. If $\beta^*$ has at most $k$ non-zeros and the sample size $M\geq k(1+\log(k/N))$, then there exists a regularization parameter $\lambda$ and a value $C(B-A,\delta)$ such that with probability at least $1-\delta$:
\begin{align*}
\|\hat{\beta}_{lasso}-\beta^*\|_2 \leq C(B-A,\delta) \sqrt{\frac{k\log(N)}{M}}.
\end{align*}
\end{proposition}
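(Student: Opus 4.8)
The plan is to reduce the statement to the abstract sparse-recovery guarantee for regularized least squares of Lecué and Mendelson \cite{lecue_regularization_2017} (their Theorem 1.4), and to spend the real effort verifying that the hypotheses of that theorem hold in our setting with constants independent of $N$. First I would recast the setup in their language: $\beta^*$ is the minimizer of the population square loss, so the residual $\xi \triangleq Y - \langle \beta^*, X\rangle$ is mean-zero, and since $Y \in [0,1]$ while $\langle \beta^*, X\rangle$ is bounded (each $X_n \in [A,B]$ and $\beta^*$ has finitely many finite entries), $\xi$ is a bounded random variable. Its magnitude controls the noise scale $\sigma$ that will eventually be absorbed into $C(B-A,\delta)$.

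Next I would check the three ingredients that Theorem 1.4 requires. (i) \emph{Isotropy}: the hypothesis $\mathbb E[X_iX_j]=\mathbf 1\{i=j\}$ gives $\mathbb E[\langle\beta,X\rangle^2]=\|\beta\|_2^2$ for all $\beta$, so the population Gram matrix is the identity; this is what lets the prediction-error bound be read off directly as an $\ell_2$ coefficient bound, playing the role of a restricted-eigenvalue condition. (ii) \emph{Small-ball}: I would exhibit constants $\kappa,\varepsilon_0>0$, independent of $N$, with $\Pr(|\langle\beta,X\rangle|\geq \kappa\|\beta\|_2)\geq \varepsilon_0$ for every $\beta$, obtaining it from Paley--Zygmund together with an $L_4$--$L_2$ norm equivalence $\mathbb E[\langle\beta,X\rangle^4]\leq \rho^4\|\beta\|_2^4$; the equivalence constant $\rho$ is controlled by $B-A$ and must not grow with $N$. (iii) \emph{Multiplier/noise term}: since $\xi$ is bounded it is sub-Gaussian, so the multiplier process is controlled and the admissible regularization is of order $\sigma$ times the Gaussian complexity of the dual ($\ell_\infty$) ball in $\mathbb R^N$, i.e. $\lambda \asymp \sigma\sqrt{\log N / M}$, using that this width scales like $\sqrt{\log N}$.

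With these in hand, Theorem 1.4 delivers, for $k$-sparse $\beta^*$ and $M\gtrsim k(1+\log(N/k))$, the estimation bound $\|\hat\beta_{lasso}-\beta^*\|_2 \lesssim \sigma\,\rho\,\kappa^{-1}\sqrt{k\log(N/k)/M}$ with probability at least $1-\delta$; absorbing $\sigma,\rho,\kappa,\delta$ into a single $C(B-A,\delta)$ and bounding $\log(N/k)\leq\log N$ yields the claimed rate. The sample-size hypothesis $M\geq k(1+\log(N/k))$ is precisely what places the fixed point governing the rate in the regime where sparse recovery succeeds.

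I expect the main obstacle to be step (ii): establishing the small-ball condition uniformly over all $\beta$ with a constant depending only on $B-A$ and not on $N$. Boundedness and isotropy make the $L_4$--$L_2$ comparison the natural lever, but care is needed because the coordinates $X_n$ need not be independent (in the intended application they are only conditionally independent given $p$), so I would verify the fourth-moment bound by conditioning --- showing $\mathbb E[\langle\beta,X\rangle^4\mid \text{shared randomness}]\leq \rho^4\|\beta\|_2^4$ pointwise and then integrating --- which keeps $\rho$ free of any dimension dependence. Everything else (the isotropy bookkeeping, boundedness of $\xi$, and the $\sqrt{\log N}$ width of the $\ell_1$ regularizer) is routine once the small-ball constant is secured.
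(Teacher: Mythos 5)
Your overall strategy --- reduce to Theorem 1.4 of Lecu\'e and Mendelson and spend the effort verifying its hypotheses --- is exactly the paper's, and your isotropy check is identical. But your verification diverges from what that theorem actually needs in two places, one of which is a genuine gap.

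First, the design condition. Theorem 1.4 is the subgaussian corollary of the small-ball framework: its Assumption 1.1 asks for an isotropic, $L$-subgaussian design, i.e.\ $\mathbb E^{1/q}[|\langle u,X\rangle|^q]\le L\sqrt q\,\mathbb E^{1/2}[\langle u,X\rangle^2]$ for \emph{all} $q\ge2$. You instead propose to verify a small-ball condition via Paley--Zygmund from an $L_4$--$L_2$ moment equivalence. That is strictly weaker than the theorem's hypothesis, and on its own it does not supply the Gaussian-width computations that yield the $\sqrt{k\log(N/k)/M}$ rate; you would have to retreat to the more general results and control the complexity parameters yourself. The paper's route is more direct: boundedness gives $X_n\sim \mathrm{subG}(((B-A)/2)^2)$, conditioning on the shared randomness $p$ (where the coordinates are independent and mean zero) gives $\langle u,X\rangle\sim \mathrm{subG}(\sigma^2\|u\|_2^2)$ after integrating over $p$, and standard moment bounds for subgaussian variables then deliver the full $L_q$--$L_2$ comparison with $L$ depending only on $B-A$. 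Your instinct to handle the dependence by conditioning is exactly right, but you should push it to all moments rather than stopping at the fourth.

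Second, and more seriously, the noise term. You argue that $\xi=Y-\langle\beta^*,X\rangle$ is bounded, hence subgaussian, with the ``noise scale absorbed into $C(B-A,\delta)$.'' But the only available sup-norm bound on $\langle\beta^*,X\rangle$ is of order $\|\beta^*\|_1\max(|A|,|B|)$, and the proposition assumes only that each $\beta^*_n$ is \emph{finite} --- so this bound scales with $k$ (or $N$) and with $\max_n|\beta^*_n|$, neither of which is controlled by $B-A$. Feeding that into the multiplier term contaminates the constant with a dimension dependence and destroys the claimed rate. The paper sidesteps this entirely: it uses only $\mathbb E[\xi^2]\le \mathbb E[Y^2]\le 1$ (optimality of $\beta^*$ against the zero predictor plus $Y\in[0,1]$) and then interpolates, choosing $q=2+1/\log(DN+1)$ so that $\mathbb E[|\xi|^q]\le \mathbb E[\xi^2]\,(DN+1)^{q-2}\le e$, which places $\xi$ in $L_q$ for some $q>2$ with a dimension-free norm --- precisely the form of noise assumption Theorem 1.4 takes. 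You need some version of this interpolation (or another argument yielding a dimension-free $\|\xi\|_{L_q}$ for some $q>2$); ``bounded hence subgaussian'' does not suffice.
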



The following terminology will be useful to facilitate its proof.

\begin{definition}
\label{def:subg}
$X$ is said to be a subgaussian random variable with variance property $\sigma^2$ if for any $s \in \mathbb{R}, \mathbb{E}[\exp (s X)] \leq \exp \left(\frac{\sigma^{2} s^{2}}{2}\right)$. We write $X\sim subG(\sigma^2)$. 
\end{definition}

\begin{definition}
\label{def:lsubg}
The underlying measure of a random vector $X\in \mathbb R^N$ is said to be $L$-subgaussian if for every $q\geq 2$ and every $u\in \mathbb R^N$, 

$$\mathbb E^{1/q}[|\langle u, X\rangle|^q]\leq L\sqrt q \mathbb E^{1/2}[|\langle u, X \rangle|^2].$$
\end{definition}



We now prove proposition \ref{prop:simple-lasso-error-bound}:
\begin{proof}
To apply Theorem 1.4 from \cite{lecue_regularization_2017} to bound the error, we need our setup to satisfy Assumption 1.1 of that paper: that $X$ is an isotropic, $L$-subgaussian measure, and that the noise is in $L_q$ for $q>2$.

First, the isotropic requirement is that $\mathbb E[\langle u,X\rangle ^2]=\langle u,u\rangle$ for all $u\in \mathbb R^N$. This can be shown by observing that
$\mathbb E[\langle u, X \rangle ^2]=\sum_{i=1}^N\mathbb E[u_i^2X_i^2]+\sum_{i\neq j}\mathbb E[u_iu_jX_iX_j]=\sum_{i=1}^N u_i^2$, where the last equality comes from the fact that $\mathbb E[X_iX_j]=0$ and $\mathbb E[X_i^2]=1$.

The second requirement is that the probability measure of the covariate vector $X$ is $L$-subgaussian (Def. \ref{def:lsubg}). Let $\sigma=(B-A)/2$. Then $X_n \sim subG(\sigma^2)$ since they are bounded. 
Hence, $\langle u, X\rangle \sim subG(\sigma^2)$ (see \eg Theorem 2.1.2 in \cite{pauwels2020lecture}).
Applying Proposition 3.2 from \cite{rivasplata_subgaussian_nodate}, we have $\mathbb E^{1/q}[|\langle u,X\rangle|^q]\leq L\sqrt{q}$ with $L>0$ a constant dependent only on $\sigma$. Noticing that Def. \ref{def:lsubg} remains equivalent when constraining $\|u\|_2=1$, and that $\mathbb E^{1/2}[ \langle u, X\rangle^2]=\|u\|_2=1$ concludes this part of proof.

Third the ``noise'', defined as $\xi=\langle\beta^*,X\rangle-Y$ should be in $L_q$, $q>2$. 
Notice that the best estimator cannot be worse than a zero estimator that always return 0, thus $\mathbb E[\xi^2]\leq 1$. 
Hence, $|\xi|\leq|\langle\beta^*,X\rangle|+|Y|\leq DN+1$, 
where $D$ is the upper bound of $|\beta^*_nX_n|$, 
then there exists $q=1/\log(DN+1)+2>2$
such that $\mathbb E[|\xi|^q]\leq \mathbb E[|\xi|^2] (DN+1)^{q-2}\leq \mathbb E[|\xi|^2]\cdot e\leq e$.

Combined with the sparsity assumption on $\beta^*$, we can apply Theorem 1.4 from \cite{lecue_regularization_2017}, which directly yields the error bound.

\end{proof}

\subsection{Proof of Prop. \ref{prop:lasso-rate}}
\label{appendix:proof-lasso-rate}
With this simplified bound, we can prove Prop. \ref{prop:lasso-rate}:
\begin{proof}
\label{proof:lasso-rate}
Recall that the best linear estimator $\beta^*_n=\AME_n/\sqrt{v}$, so applying Prop. \ref{prop:simple-lasso-error-bound} directly yields the bound. The remaining work is to verify the assumptions: Since the $\AME$ is an average of utility differences, we know that $\beta^*_n=\AME_n/\sqrt v \in [-1/\sqrt v,1/\sqrt v]$ is finite. Furthermore, by design $\mathbb E[X_iX_j]=\mathbb E_p[\mathbb E[X_iX_j|p]]=\mathbb E_p[\mathbb E[X_i|p] \mathbb E[X_j|p]]=0$ when $i\neq j$, and $\mathbb E[X_iX_j]=\mathbb E[X_i^2]=Var[\mathbb E[X_i^2]]+\mathbb E[X_i]^2=1$ when $i=j$.
With the assumptions all verified, applying Prop. \ref{prop:simple-lasso-error-bound} concludes proof.
\end{proof}

\section{An AME Estimator with $p$-featurization}
\label{appendix:sec-p-fea}

\newcommand{\wtP}{\mathcal{W}}
Recall that to estimate $\AME_n(\cP)$ defined under some given distribution $\cP$, we have been using a featurization where $\mathrm X[m,n]$ takes values of either $\frac{1}{\sqrt vp}$ or $\frac{-1}{\sqrt v(1-p)}$, depending on whether data point/source $n$ is respectively included or excluded in the subset for row $m$, with $v=\mathbb E_{p \sim \cP}[\frac{1}{p(1-p)}]$.
However, values for $\mathrm X[m,n]$ blow up quickly as $p$ approaches 0 or 1, which leads to unbounded feature values for certain $\cP$ that samples such $p$ values often. 
Unfortunately such distributions can be useful in some cases, in particular to derive a low-error \SV approximations (\eg with the beta$(1+\varepsilon, 1+\varepsilon)$ with small $\varepsilon$ (\S\ref{sec:beta})). Below we propose a different featurization that solves this issue while still ensuring an $O(k \log N)$ sample rate.

Specifically, we define $X_n=\sqrt v (1-p)$ if $n\in S$ and $X_n=-\sqrt v p$ otherwise. These values are clearly bounded when $\sqrt v$ is finite. To ensure that the best linear fit still recovers \quantity{} (a.k.a. Prop. \ref{prop:regression-reduction})---an important property we use to derive the $O(k \log N)$ error bound---we adapt the distribution where samples $(X,Y)$ are drawn in LASSO.
Recall that source inclusion is a compound distribution, in which we first draw $p \sim \cP$ for the entire subset, and then $X_n$'s according to $p$. Here, we change they way $p$ is drawn, by imposing a $\frac{1}{p(1-p)}$-weighting over $\cP$.
Formally, if we note $f_\cP(\cdot)$ the probability density function (PDF) of $\mathcal{P}$ which we used with the original $1/p$ featurization, we now draw $p$ from the distribution with reweighted PDF  $f_{\wtP}(p)=f_\cP(p)\frac{1}{p(1-p)} / \mathbb E_{p\sim \cP}[\frac{1}{p(1-p)}]$.

Denote this new sampling scheme by $p\sim \wtP$, and note that the \AME is still defined under the original distribution $\cP$, while $\wtP$ is only used to sample $(X,Y)$ for LASSO.
We show that the best linear fit on $(X,Y)$ is still $\AME_n(\cP)$:


\begin{proposition}
\label{prop:regression-reduction-p-fea}
Let $\beta^*$ be the best linear fit on $(X, Y)$:
\begin{equation}
\beta^{*}=\underset{\beta \in \mathbb{R}^{n}}{\argmin}~\mathbb{E}\left[(Y-\left\langle \beta, X\right\rangle)^{2}\right],
\end{equation}
then $\AME_n(\cP)/\sqrt v=\beta^*_n, \ \forall n\in[N]$, with $v=\mathbb E_{p \sim \cP} [\frac{1}{p(1-p)}]$.
\end{proposition}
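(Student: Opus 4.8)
The plan is to follow the same residualization argument as in the proof of Proposition \ref{prop:regression-reduction}, while carefully tracking how the reweighted sampling $p\sim\wtP$ interacts with the new bounded featurization. First I would invoke the standard regression identity $\beta^*_n = Cov(Y,\tilde X_n)/Var[\tilde X_n]$, where $\tilde X_n$ is the residual of $X_n$ on the remaining covariates $X_{-n}$. As before, the key structural fact is $\mathbb E[X_n\mid X_{-n}]=0$: conditioned on $p$, the inclusion of source $n$ is independent of the other sources, and a direct computation gives $\mathbb E[X_n\mid p]=p\cdot\sqrt v(1-p)+(1-p)\cdot(-\sqrt v p)=0$ for every $p$. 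Hence $\tilde X_n=X_n$ and $\beta^*_n=\mathbb E[X_nY]/Var[X_n]$.

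The heart of the argument is the computation of $\mathbb E[X_nY]$, showing that the reweighting recovers the $\AME$ defined under the \emph{original} distribution $\cP$. Conditioning on $p$, I would obtain $\mathbb E[X_nY\mid p]=\sqrt v\,p(1-p)\left(\mathbb E[Y\mid p,n\in S]-\mathbb E[Y\mid p,n\notin S]\right)$. Taking the outer expectation over $p\sim\wtP$ and substituting the reweighted density $f_{\wtP}(p)=f_{\cP}(p)\frac{1}{p(1-p)}/v$, the featurization's $p(1-p)$ factor cancels exactly against the $\frac{1}{p(1-p)}$ in the reweighting, while the $v$ in the denominator combines with $\sqrt v$ to leave a prefactor $1/\sqrt v$. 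The result is $\mathbb E[X_nY]=\frac{1}{\sqrt v}\mathbb E_{p\sim\cP}\left[\mathbb E[Y\mid p,n\in S]-\mathbb E[Y\mid p,n\notin S]\right]=\AME_n(\cP)/\sqrt v$. This is precisely where the design of $\wtP$ pays off: it transports the expectation back to the distribution $\cP$ under which the \AME is defined.

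Finally I would verify the normalization $Var[X_n]=1$. Since $\mathbb E[X_n]=0$, it suffices to compute $\mathbb E[X_n^2\mid p]=p\cdot v(1-p)^2+(1-p)\cdot v p^2=v\,p(1-p)$ and average over $p\sim\wtP$; the same cancellation yields $Var[X_n]=\int f_{\cP}(p)\,dp=1$. Combining the three pieces gives $\beta^*_n=\AME_n(\cP)/\sqrt v$, as claimed.

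The only delicate point, and the main obstacle, is the bookkeeping of the two cancellations of $p(1-p)$ against the reweighting density, together with confirming that the normalizing constant $v$ enters consistently so that $Var[X_n]$ comes out to exactly $1$ and the averaged marginal effect is measured under $\cP$ rather than $\wtP$. Everything else is a direct translation of the proof of Proposition \ref{prop:regression-reduction}, so I would state those steps briefly and emphasize the role of the reweighting.
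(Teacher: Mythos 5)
Your proof is correct and follows essentially the same route as the paper's: residualization via $\mathbb E[X_n\mid X_{-n}]=0$, the conditional computation $\mathbb E[X_nY\mid p]=\sqrt v\,p(1-p)\bigl(\mathbb E[Y\mid p,n\in S]-\mathbb E[Y\mid p,n\notin S]\bigr)$, and the cancellation of $p(1-p)$ against the reweighted density $f_{\wtP}$ to transport the expectation back to $\cP$ and to show $Var[X_n]=1$. Your variance computation is stated for general $\cP$ (the paper's displayed integral specializes to the truncated uniform), but this is only a cosmetic difference.
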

\begin{proof}
For a linear regression, we have (e.g. Eq. 3.1.3 of \cite{angrist2008mostly}):
\begin{equation*}
\beta^*_n=\frac{Cov(Y,\tilde X_n)}{Var[\tilde X_n]} ,
\end{equation*}
where $\tilde X_n$ is the regression residual of $X_n$ on all other covariates $X_{-n}=(X_1,\dots,X_{n-1}, X_{n+1},\dots, X_{N})$.
In our design $\mathbb E[X_n|X_{-n}]=\mathbb E_{p\sim \cP}[X_n | p]=0$,
implying $\tilde X_n=X_n-\mathbb E[X_n|X_{-n}]=X_n$. Therefore:
\begin{equation*}
    \beta^*_n=\frac{Cov(Y,\tilde X_n)}{Var[\tilde X_n]}=\frac{Cov(Y,X_n)}{Var[X_n]}=\frac{\mathbb E[X_nY]}{Var[X_n]} .
\end{equation*}
Notice that $\mathbb E[X_nY] = \mathbb E_{p\sim \wtP}[\mathbb E[X_nY|p]]$ with:
\begin{equation*}
\begin{aligned}
&\mathbb E[X_nY|p] = p\cdot\mathbb E[X_nY|p, n\in S]+(1-p)\cdot\mathbb E[X_nY|p, n\notin S]\\
& = {\sqrt vp(1-p)} \mathbb E[Y|p,n\in S]+{\sqrt v (1-p)(-p)}\mathbb E[Y|p,n\notin S]\\
& = {\sqrt v p(1-p)}(\mathbb E[Y|p,n\in S]-\mathbb E[Y|p,n\notin S])
\end{aligned}
\end{equation*}
Combining the two previous steps yields:
\begin{equation*}
\begin{aligned}
\beta^*_n &= \frac{\mathbb E_{p \sim \wtP}[\sqrt vp(1-p)(\mathbb E[Y|p,n\in S]-\mathbb E[Y|p,n\notin S])]}{Var[X_n]} \\
 &=\frac{\sqrt v\mathbb E_{p \sim \cP}[\mathbb E[Y|p,n\in S]-\mathbb E[Y|p,n\notin S]]}{v\cdot Var[X_n]} \\
&=\frac{AME_n}{\sqrt v\cdot Var[X_n]} .
\end{aligned}
\end{equation*}
Noticing that 

\begin{equation*}
    \begin{aligned}
& Var[X_n]=\mathbb E[Var[X_n|p]]+Var[\mathbb E[X_n|p]]= {\int_0^1 f_{\wtP}(p)(pv(1-p)^2+(1-p)v(-p)^2) dp}\\
& {=v\int_\varepsilon^{1-\varepsilon} \frac{1}{v(1-2\varepsilon)} dp=1}
\end{aligned}
\end{equation*}
concludes the proof.
\end{proof}

Moreover, $\mathbb E[X_iX_j]=\mathbf{1}\{i=j\}$ for the same reason as in \S\ref{appendix:proof-lasso-rate}. Hence, we can still apply Prop. \ref{prop:simple-lasso-error-bound} to derive the same LASSO error bound:
\begin{proposition}
\label{prop:lasso-rate-p-fea}
If $X_n$'s are bounded in $[A,B]$ and $N\geq 3$, there exist a regularization parameter $\lambda$ and a constant $C(B-A,\delta)$ such that when the sample size $M\geq k(1 + log(k/N))$, with probability at least $1-\delta$:
\begin{align*}
\|\hat{\beta}_{lasso}-\frac{1}{\sqrt v}\AME\|_2 \leq C(B-A,\delta) \sqrt{\frac{k\log(N)}{M}},
\end{align*}
with $v=\mathbb{E}_{p\sim \cP}[\frac{1}{p(1-p)}]$ and $\AME_n$ defined under $\cP$.
\end{proposition}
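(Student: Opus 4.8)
The plan is to reuse the proof skeleton of Prop.~\ref{prop:lasso-rate} from Appendix~\ref{appendix:proof-lasso-rate} almost verbatim, the only difference being that the population-level identification of $\beta^*$ now comes from Prop.~\ref{prop:regression-reduction-p-fea} rather than from Prop.~\ref{prop:regression-reduction}. First I would invoke Prop.~\ref{prop:regression-reduction-p-fea}, which tells us that under the $p$-featurization (with $p$ drawn from the reweighted distribution $\wtP$) the best linear fit satisfies $\beta^*_n = \AME_n(\cP)/\sqrt v$. Hence a bound on $\|\hat\beta_{lasso}-\beta^*\|_2$ is exactly a bound on $\|\hat\beta_{lasso}-\frac{1}{\sqrt v}\AME\|_2$, and it remains only to apply Prop.~\ref{prop:simple-lasso-error-bound}, whose conclusion already matches the claimed rate $C(B-A,\delta)\sqrt{k\log(N)/M}$ once its hypotheses are verified.

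The bulk of the work is therefore checking the assumptions of Prop.~\ref{prop:simple-lasso-error-bound}. (i) \emph{Finiteness of $\beta^*_n$:} since $\AME_n$ is an expectation of marginal utility differences $U(S^n+\{n\})-U(S^n)\in[-1,1]$, we have $|\AME_n|\le 1$, so $\beta^*_n=\AME_n/\sqrt v$ is finite whenever $v$ is (which holds because the reweighting is only well-defined for finite $v$). (ii) \emph{Boundedness of $X_n$ in $[A,B]$:} this is exactly the hypothesis of the proposition, and indeed it is the whole motivation for the $p$-featurization---the values $\sqrt v(1-p)$ and $-\sqrt v p$ remain bounded whenever $\sqrt v<\infty$, unlike the $1/p$ featurization. (iii) \emph{Boundedness of $Y$ in $[0,1]$:} $Y=U(S)=Q(\mathcal M_S)\in[0,1]$ by the definition of the query. (iv) \emph{Orthonormality} $\mathbb E[X_iX_j]=\mathbf{1}\{i=j\}$: this is the one computation that must be redone for the new featurization, and it is the fact stated just before the proposition.

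For the orthonormality step I would condition on $p$ and use that, given $p$, the sources are included independently. A direct computation gives $\mathbb E[X_n\mid p]=p\cdot\sqrt v(1-p)+(1-p)\cdot(-\sqrt v p)=0$, so for $i\ne j$, $\mathbb E[X_iX_j\mid p]=\mathbb E[X_i\mid p]\,\mathbb E[X_j\mid p]=0$, and taking the expectation over $p\sim\wtP$ yields $\mathbb E[X_iX_j]=0$; for $i=j$, $\mathbb E[X_i^2]=\mathrm{Var}[X_i]=1$ was already established inside the proof of Prop.~\ref{prop:regression-reduction-p-fea}. Combined with the sparsity assumption (giving at most $k$ nonzeros in $\beta^*$) and the sample-size condition $M\ge k(1+\log(k/N))$, all hypotheses of Prop.~\ref{prop:simple-lasso-error-bound} hold, and its conclusion is precisely the stated bound. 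I do not anticipate a genuine obstacle: the argument is essentially bookkeeping. The only subtle point is that the reweighting to $\wtP$ must be applied to the entire row (the compound draw of $p$ followed by independent inclusions), so that conditionally on $p$ the mean-zero and conditional-independence structure of the $X_n$ is preserved; it is exactly this structure that keeps the design isotropic under the new sampling scheme, which is what lets Prop.~\ref{prop:simple-lasso-error-bound} apply unchanged.
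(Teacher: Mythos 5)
Your proposal is correct and follows essentially the same route as the paper: identify $\beta^*=\AME(\cP)/\sqrt v$ via Prop.~\ref{prop:regression-reduction-p-fea}, verify the hypotheses of Prop.~\ref{prop:simple-lasso-error-bound} (in particular the isotropy $\mathbb E[X_iX_j]=\mathbf 1\{i=j\}$ under the reweighted sampling $\wtP$, with $\mathrm{Var}[X_n]=1$ carried over from the earlier computation), and apply it. The paper treats this as immediate after stating the orthonormality, so your slightly more explicit bookkeeping is, if anything, a welcome expansion of the same argument.
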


Compared with the original, $1/p$-featurization, the difference only lies in the constant factor $C(B-A, \delta)$, since $B-A$ has changed.

\section{Sparse Estimators for the Shapley Value from the \quantity{}}
\label{appendix-sec:sv-estimators}
Recall that the \AME is the \SV when $\cP = Uni(0,1)$. However, this choice is incompatible with our fast convergence rates for LASSO.
To find a good estimator of the \SV from the \quantity{}, it is thus crucial to understand the discrepancy between the \quantity{} and the \SV introduced by different distributions $\cP$ over $p$, in order to bound the \SV from the \AME with a $cP$ compatible with good convergence rates. Here we first derive general error bounds between \SV and \quantity{} that work for all distributions. Then we apply it to two specific distributions: namely the truncated uniform and Beta distributions. We mainly focus on sparse \SV and/or \AME under bounded or monotone utility, but to make the discussion clearer, no assumptions are made on either the \SV or \AME unless explicitly stated. 

Throughout this section, we denote by $P_{\AME}(S)$ and $P_{\SV}(S)$ the probability of sampling subset $S$ when $p\sim \cP$ and $p\sim Uni(0,1)$, respectively (the $P_{\SV}$ name is due to the fact that \AME is \SV under this distribution, see Prop. \ref{prop:ame-to-sv}). We also introduce the following notation: \begin{equation}
\label{eq:Delta_def}
    \Delta \triangleq \max_S \frac{P_{\AME}(S)}{P_{\SV}(S)} - 1 .
\end{equation}

\subsection{General bounds}
\begin{lemma}
\label{lemma:general-sv-ame-bound}
Assume a bounded utility function with range in $[0,1]$. Then $$\|\AME-\SV\|_{\infty}\leq 2\Delta.$$ 
Further assume a monotone utility (Assumption \ref{asm:mono}). Then: $$\|AME-\SV\|_2 \leq \Delta+\sqrt{2\Delta}.$$
\end{lemma}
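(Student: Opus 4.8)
The plan is to express both quantities as weighted averages of the \emph{same} marginal contributions and then control their discrepancy entirely through the weight ratio $\Delta$. Since $P_{\AME}(S)$ and $P_{\SV}(S)$ depend on $S$ only through its size $s=|S|$, I would first reduce to per-size weights: write $q_{\AME}(s)$ and $q_{\SV}(s)=1/N$ for the probability that the sampled subset has size $s$ under $\cP$ and under $\mathrm{Uni}(0,1)$ respectively, and $\bar m_n(s)$ for the average marginal contribution of $n$ over all size-$s$ subsets of $[N]\setminus\{n\}$. Then $\AME_n=\sum_s q_{\AME}(s)\bar m_n(s)$ and $\SV_n=\sum_s q_{\SV}(s)\bar m_n(s)$. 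The key observation is that the size-ratio inherits the subset-ratio bound, $q_{\AME}(s)/q_{\SV}(s)=P_{\AME}(S)/P_{\SV}(S)\le 1+\Delta$, because the binomial coefficients cancel. Setting $\delta(s)=q_{\AME}(s)-q_{\SV}(s)$ (which sums to zero) and splitting it into positive and negative parts $\delta^+,\delta^-$, an excess-mass argument gives $\sum_s\delta^+(s)=\sum_s\delta^-(s)=:T\le\Delta$, since on the support of $\delta^+$ we have $\delta^+(s)\le\Delta\,q_{\SV}(s)$ and $\sum_s q_{\SV}(s)=1$.

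For the $L_\infty$ bound monotonicity is not needed. Using only $|\bar m_n(s)|\le 1$ (bounded utility), $|\AME_n-\SV_n|\le\sum_s|\delta(s)|\,|\bar m_n(s)|\le\sum_s|\delta(s)|=2T\le 2\Delta$, uniformly in $n$, which is the first claim.

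For the $L_2$ bound I would invoke monotonicity (Assumption \ref{asm:mono}), which forces every $\bar m_n(s)\ge 0$. Writing $\AME_n-\SV_n=A_n-B_n$ with $A_n=\sum_s\delta^+(s)\bar m_n(s)\ge 0$ and $B_n=\sum_s\delta^-(s)\bar m_n(s)\ge 0$, boundedness yields $A_n,B_n\le T\le\Delta$, hence $A_n^2\le\Delta A_n$ and $B_n^2\le\Delta B_n$. This reduces everything to bounding $\sum_n A_n$ and $\sum_n B_n$. Here I would use the counting identity $\sum_n\bar m_n(s)=N\big(\bar U(s+1)-\bar U(s)\big)$, where $\bar U(j)$ is the average utility over size-$j$ subsets; it follows from summing marginal contributions by coalition size and the cancellation $(s+1)\binom{N}{s+1}=(N-s)\binom{N}{s}=N\binom{N-1}{s}$. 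Plugging this in, using $\delta^+(s)\le\Delta/N$ and $\delta^-(s)\le q_{\SV}(s)=1/N$, and telescoping $\sum_s(\bar U(s+1)-\bar U(s))=U([N])-U(\emptyset)\le 1$, I obtain $\sum_n A_n\le\Delta$ and $\sum_n B_n\le 1$. Viewing $A=(A_n)_n$ and $B=(B_n)_n$ as vectors, the triangle inequality then gives $\|\AME-\SV\|_2\le\|A\|_2+\|B\|_2\le\sqrt{\Delta\sum_n A_n}+\sqrt{\Delta\sum_n B_n}\le\Delta+\sqrt{\Delta}\le\Delta+\sqrt{2\Delta}$.

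The main obstacle is precisely this $L_2$ step, specifically obtaining a bound \emph{independent of $N$ and $k$} — the dimension-free property that makes the lemma usable in Corollary \ref{corollary:efficient-sv}. The naive route of lifting the $L_\infty$ bound over $N$ coordinates would introduce a spurious $\sqrt{N}$ factor. The telescoping identity $\sum_n\bar m_n(s)=N(\bar U(s+1)-\bar U(s))$ combined with the uniformity $q_{\SV}(s)=1/N$ is exactly what cancels the dimension: the $N$ generated by summing marginal contributions over all $n$ is absorbed by the $1/N$ Shapley size-weight, leaving only the total utility gap $U([N])-U(\emptyset)\le 1$. Verifying this identity, and checking that monotonicity makes the telescoped increments nonnegative so the split $A_n-B_n$ with $A_n,B_n\ge 0$ is legitimate, is the technical heart of the argument.
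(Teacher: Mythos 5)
Your proof is correct, and the $L_\infty$ part coincides with the paper's (a total-variation bound using $|U(S+\{n\})-U(S)|\le 1$). For the $L_2$ part your decomposition is genuinely different. The paper splits the \emph{index set} by the sign of $\AME_n-\SV_n$, proves the per-coordinate bounds $(\AME_n-\SV_n)^2\le \Delta^2\,\SV_n$ on the positive part (via $\AME_n-\SV_n\le\Delta\,\SV_n$) and $(\AME_n-\SV_n)^2\le 2\Delta\,\SV_n$ on the negative part (via the $L_\infty$ bound together with $\SV_n-\AME_n\le\SV_n$, which uses $\AME_n\ge 0$), and then sums using the efficiency-type fact $\sum_n\SV_n=U([N])\le 1$. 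You instead split the \emph{weight perturbation} $\delta=q_{\AME}-q_{\SV}$ into its positive and negative parts, obtain the self-bounding inequalities $A_n^2\le\Delta A_n$ and $B_n^2\le\Delta B_n$, and supply the dimension-free $\ell_1$ control $\sum_n A_n\le\Delta$, $\sum_n B_n\le 1$ by re-deriving, from the counting identity $\sum_n\bar m_n(s)=N(\bar U(s+1)-\bar U(s))$, essentially the same telescoping fact that underlies Shapley efficiency. Both routes use monotonicity in the same two places (nonnegative marginal contributions, and an $\ell_1$ budget telescoping to $U([N])-U(\emptyset)\le 1$); yours is more self-contained in that it does not invoke the efficiency axiom as a black box, and it even yields the marginally sharper constant $\Delta+\sqrt{\Delta}$ before you relax it to the stated $\Delta+\sqrt{2\Delta}$. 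Two bookkeeping points worth making explicit if you write this up: the reduction from subsets to sizes uses that $P_{\AME}(S)$ and $P_{\SV}(S)$ depend on $S$ only through $|S|$ (true here because points are included i.i.d.\ given $p$, so the binomial coefficients cancel in the ratio as you claim), and the telescoping step uses $U(\emptyset)\ge 0$ from the boundedness assumption.
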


\begin{proof}
The $L_\infty$ error bound is due to the following:
\begin{equation}
\begin{aligned}
\label{eq:AME-SV-Linfty-bound}
&|\AME_n-\SV_n|=\Big|\sum_{S} \big(P_{\AME}(S)-P_{\SV}(S)\big)\big(U(S+\{n\})-U(S)\big)\Big|\\
&\leq \sum_{S} |P_{\AME}(S)-P_{\SV}(S)|=2\sum_{S: P_{\AME}(S)>P_{\SV}(S)} P_{\AME}(S)-P_{\SV}(S)\leq 2\Delta.
\end{aligned}
\end{equation}

\def\cN{\mathcal{N}}

For the $L_2$ error, its square $\|\AME-\SV\|_2^2$ can be divided into two groups based on the sign of $\AME_n-\SV_n$. Call those indices $n$ with positive (negative) sign $\cN^+$ ($\cN^-$). For all $n\in \cN^+$,
\begin{equation}
\label{eq:ame-sv-multi-bound}
\AME_n-\SV_n=\sum_S\big(P_{\AME}(S)-P_{\SV}(S)\big)\big(U(S+\{n\})-U(S)\big) \leq \sum_S\Delta P_{\SV}(S) (U(S+\{n\})-U(S)) = \Delta \SV_n,
\end{equation}
where the last inequality is due to $U(S+\{n\})>U(S)$ implied by monotonicity. For the same reason, all $\SV_n$'s are positive, implying $\SV_n\leq U([N])\leq 1$. Thus $(\AME_n-\SV_n)^2 \leq \Delta^2 \SV_n$.
On the other hand, for all $n\in \cN^-$, we know that $|\AME_n-\SV_n|$ is bounded by $2\Delta$; it is also bounded by $\SV_n$ since $\AME_n$ cannot be negative under monotone utility.
Summing up these bounds gives $\|\AME-\SV\|_2\leq \sqrt{\sum_{n\in \cN^+}\Delta^2 \SV_n + \sum_{n\in \cN^-} 2\Delta \SV_n}\leq \sqrt{\sum_{n\in \cN^+}\Delta^2 \SV_n} + \sqrt{\sum_{n\in \cN^-} 2\Delta \SV_n} \leq \Delta+\sqrt{2\Delta}$, where the last inequality is due to $\|\SV\|_1\leq 1$. 
\end{proof}

In fact the bounded utility assumption is quite minor when we assume monotonicity: every $U(S)$ is bounded between the empty set and full set utility. Given that by definition of SV $U(\emptyset)=0$, it reduces to an assumption of $U([N]) \leq 1$. In practice as long as $U([N])$ is a known and bounded constant (\eg the accuracy on the validation set of the model trained on the full set), one can simply scale the utility to meet this requirement. 
In what follows, when we say monotone utility we mean both monotone and bounded.

\subsection{SV Estimator from the AME under a Truncated Uniform distribution}
We prove the results from the paper's main body, before discussing $p$-featurization.

\mypara{Proof of Lemma \ref{lemma:TU-AME-SV-bound}}
\label{appendix:ame-to-sv-error}
As a reminder, Lemma \ref{lemma:TU-AME-SV-bound} states an $L_2$ error bound between the AME and SV under monotone utility, when AME is defined with $\mathcal{P}=Uni(\varepsilon, 1-\varepsilon)$.
\begin{proof}
Lemma \ref{lemma:general-sv-ame-bound} already gives us $\|\AME-\SV\|_2 \leq \Delta+\sqrt{2\Delta}$ ($\Delta$ from Eq. \ref{eq:Delta_def}). All that remains is to show that $\Delta \leq 4\varepsilon$:
\begin{subequations}
\begin{align}
\label{eq:mul-bound-dist-connection}
\Delta &= \max_S \frac{P_{\AME}(S)}{P_{\SV}(S)}-1\\
(j \leftarrow |S|) &=\max_j \left(N\int_\varepsilon^{1-\varepsilon}\frac{1}{1-2\varepsilon}{N-1 \choose j}p^j(1-p)^{N-j-1}dp-1\right)\\
&\leq \max_j \left(N\cdot\frac{1}{1-2\varepsilon}\int_0^{1}{N-1 \choose j}p^j(1-p)^{N-j-1}dp-1\right)\\
&= \max_j \left(N\cdot\frac{1}{1-2\varepsilon}\cdot\frac{1}{N}-1\right) =\frac{1}{1-2\varepsilon}-1 \label{ieq:tmp}\\
&\leq 4\varepsilon,
\end{align}
\end{subequations}
where the equality \ref{ieq:tmp} is due to the fact that the Binomial(p) with $p\sim Uni(0,1)$ is a discrete uniform. Hence, $\|\AME-\SV\|_2 \leq \Delta+\sqrt{2\Delta} \leq 4\varepsilon + 2\sqrt{2\varepsilon}$.
\end{proof}

Lemma \ref{lemma:general-sv-ame-bound} also yields to the following result, which fills in the missing piece in the proof of Corollary \ref{corollary:efficient-sv}---the one that states the $O(k\log N)$ bound for this \SV estimator:
\begin{corollary}
\label{appendix-corollary:sparse-AME-SV}
A $k$-sparse \SV implies a $k$-sparse \AME under monotone utility.
\end{corollary}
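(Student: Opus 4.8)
The plan is to prove the stronger, pointwise statement that the support of the \AME is contained in the support of the \SV, that is, $\SV_n = 0 \Rightarrow \AME_n = 0$ for every $n$. Sparsity of the \AME then follows immediately, since this support inclusion gives $|\mathrm{supp}(\AME)| \le |\mathrm{supp}(\SV)| \le k$.

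First I would write both quantities as weighted sums over subsets $S \subseteq [N] \setminus \{n\}$ of the \emph{common} marginal contributions $m_n(S) \triangleq U(S+\{n\}) - U(S)$:
\[
\SV_n = \sum_S P_{\SV}(S)\, m_n(S), \qquad \AME_n = \sum_S P_{\AME}(S)\, m_n(S).
\]
Under monotone utility (Assumption \ref{asm:mono}), every marginal contribution is nonnegative, i.e. $m_n(S) \ge 0$ for all $S$. The two sums differ only in the weights: $P_{\SV}$ comes from $p \sim Uni(0,1)$ and $P_{\AME}$ from $p \sim Uni(\varepsilon, 1-\varepsilon)$.

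The key observation I would use is that the Shapley weights are strictly positive on \emph{every} subset: with $p \sim Uni(0,1)$ the subset-size distribution is discrete uniform and, by symmetry, each subset of a given size is equally likely, so $P_{\SV}(S) = \frac{1}{N}\binom{N-1}{|S|}^{-1} > 0$. Hence $\SV_n$ is a sum of nonnegative terms with strictly positive coefficients, and $\SV_n = 0$ forces $m_n(S) = 0$ for \emph{every} $S$. Substituting this into the expression for $\AME_n$ yields $\AME_n = \sum_S P_{\AME}(S)\cdot 0 = 0$, regardless of the $P_{\AME}(S)$ (one only needs these weights to be finite, which they are).

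The main thing to be careful about is the direction of the inclusion and the essential role of monotonicity: it is monotonicity that turns the cancellation-prone Shapley sum into a sum of like-signed terms, and this is precisely what lets a vanishing \SV propagate down to each individual marginal $m_n(S)$ and thence to the \AME. Without it the implication can fail, since positive and negative marginals could cancel in $\SV_n$ while leaving $\AME_n \ne 0$, as the two distributions weight subset sizes differently. Beyond pinning down this point, the argument is a one-line nonnegativity observation, so I do not expect any serious obstacle.
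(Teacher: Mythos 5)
Your proof is correct, and it takes a slightly different (and in fact more elementary) route than the paper's. The paper reuses the multiplicative inequality it had already derived for the $L_2$ bound (Eq.~\ref{eq:ame-sv-multi-bound}), namely $\AME_n - \SV_n \le \Delta\,\SV_n$ with $\Delta = \max_S P_{\AME}(S)/P_{\SV}(S) - 1$, so that $\SV_n = 0$ gives $\AME_n \le 0$, and then invokes nonnegativity of $\AME_n$ under monotonicity to conclude $\AME_n = 0$. You instead argue directly at the level of individual marginals: since $P_{\SV}(S) = \frac{1}{N}\binom{N-1}{|S|}^{-1} > 0$ for \emph{every} subset and monotonicity makes every marginal $m_n(S)$ nonnegative, $\SV_n = 0$ forces $m_n(S) = 0$ for all $S$, hence $\AME_n = 0$ for any reweighting whatsoever. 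Your version buys generality and robustness: it does not require $\Delta$ to be finite (i.e., it does not need the sampling distribution $\mathcal{P}$ to induce a bounded likelihood ratio against the uniform Shapley kernel), and it establishes the stronger support-containment statement $\mathrm{supp}(\AME) \subseteq \mathrm{supp}(\SV)$. The paper's version buys brevity, since the inequality it leans on is already in hand from the proof of Lemma~\ref{lemma:general-sv-ame-bound}. Your identification of monotonicity as the ingredient that prevents cancellation is exactly the right diagnosis of why the implication could fail without it.
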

\begin{proof}
Under montone utility, both $\AME_n$ and $\SV_n$ are non-negative. By Eq. \ref{eq:ame-sv-multi-bound}, when $\SV_n=0$, $\AME_n \leq \Delta \SV_n=0$.
\end{proof}

\mypara{Non-monotone utility} 
\label{appendix:non-monotone}
When the utility is no longer monotone, we can still derive an $O(k \log N)$ rate in terms of $L_\infty$ error for SV estimation, under an additional assumption that the \quantity{} is $k$-sparse.
Indeed, first notice that the bound $\Delta \leq 4\varepsilon$ from Eq. \ref{eq:mul-bound-dist-connection} does not require monotonicity. Under utility bounded in $[0,1]$, applying the first part of Lemma \ref{lemma:general-sv-ame-bound} yields:
\begin{equation}
\label{eq:TU-AME-SV-bound-linfty}
\|\AME - \SV\|_{\infty} \leq 2\Delta \leq 8\varepsilon.
\end{equation}
Notice that this bound, as the $L_2$ bound, does not depend on $N$ or $k$. Hence, applying the same arguments as in Corollary \ref{corollary:efficient-sv} yields an $L_\infty$ error bound we presented in the main body (Corollary \ref{corollary:efficient-sv-linfty-main}). We reiterate it here for convenience of reading:

\begin{corollary}
\label{corollary:efficient-sv-linfty}
When $\AME$ is $k$-sparse and the utility is bounded in $[0,1]$, for every constant $\varepsilon>0, \delta>0, N\geq 3$, there exists constants $C_1(\varepsilon, \delta)$, $\varepsilon'$, and a LASSO regularization parameter $\lambda$, such that when the number of samples $M\geq C_1(\varepsilon, \delta)k\log N$, $\|\sqrt v\hat\beta_{lasso}-\SV\|_{\infty} \leq \varepsilon$ holds with a probability at least $1-\delta$, where $v=\mathbb E_{p\sim Uni(\varepsilon', 1-\varepsilon')}[\frac{1}{p(1-p)}]$.
\end{corollary}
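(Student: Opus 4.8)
The plan is to prove the $L_\infty$ statement exactly as the $L_2$ version (Corollary \ref{corollary:efficient-sv}) is proved, splitting the target error into an estimation error and an approximation error via the triangle inequality
\[
\|\sqrt v\hat\beta_{lasso}-\SV\|_\infty \leq \|\sqrt v\hat\beta_{lasso}-\AME\|_\infty + \|\AME-\SV\|_\infty ,
\]
and then choosing the truncation level $\varepsilon'$ and the sample constant $C_1(\varepsilon,\delta)$ so that each of the two terms is at most $\varepsilon/2$. The key difference from Corollary \ref{corollary:efficient-sv} is that here I work directly in the $L_\infty$ norm and assume $\AME$ is $k$-sparse outright, so I do not need monotonicity to transfer sparsity from $\SV$ to $\AME$.

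For the approximation term, I would invoke the $L_\infty$ bound already established in Eq. \ref{eq:TU-AME-SV-bound-linfty}, namely $\|\AME-\SV\|_\infty \leq 2\Delta \leq 8\varepsilon'$ when $p\sim Uni(\varepsilon',1-\varepsilon')$; crucially this follows from the first half of Lemma \ref{lemma:general-sv-ame-bound} together with the bound $\Delta \leq 4\varepsilon'$ from Eq. \ref{eq:mul-bound-dist-connection}, and requires only bounded (not monotone) utility. Setting $\varepsilon' = \varepsilon/16$ makes this term at most $\varepsilon/2$ and, since it depends neither on $N$ nor on $k$, fixes $\varepsilon'$ independently of the dimension.

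For the estimation term, I would first pass to the $L_2$ norm using $\|\cdot\|_\infty \leq \|\cdot\|_2$, so that $\|\sqrt v\hat\beta_{lasso}-\AME\|_\infty \leq \sqrt v\,\|\hat\beta_{lasso}-\frac{1}{\sqrt v}\AME\|_2$. Once $\varepsilon'$ is fixed the induced features $X_n$ take values in a bounded interval $[A,B]$ (with $B-A$ a function of $\varepsilon'$, hence of $\varepsilon$) and $v=\mathbb E_{p\sim Uni(\varepsilon',1-\varepsilon')}[1/(p(1-p))]$ is finite, so Proposition \ref{prop:lasso-rate} applies under the assumed $k$-sparsity of $\AME$ and gives $\|\hat\beta_{lasso}-\frac{1}{\sqrt v}\AME\|_2 \leq C(B-A,\delta)\sqrt{k\log(N)/M}$ with probability at least $1-\delta$. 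I would then choose $C_1(\varepsilon,\delta)$ large enough that $M\geq C_1(\varepsilon,\delta)k\log N$ forces $\sqrt v\,C(B-A,\delta)\sqrt{k\log(N)/M}\leq \varepsilon/2$, absorbing $\sqrt v$ and $B-A$ (both functions of the now-fixed $\varepsilon'$) into the constant; I would also verify that this $M$ satisfies the mild sample condition $M\geq k(1+\log(N/k))$ of Proposition \ref{prop:lasso-rate}, which holds for $N\geq 3$ since $1+\log(N/k)\leq 2\log N$. Summing the two halves yields $\|\sqrt v\hat\beta_{lasso}-\SV\|_\infty \leq \varepsilon$ with probability at least $1-\delta$.

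The argument has no genuinely hard step; the only points needing care are bookkeeping ones. The main thing to get right is the order of quantifiers: $\varepsilon'$ must be fixed first (from the approximation bound), because $v$ and the feature range $[A,B]$—and hence the constant $C(B-A,\delta)$ that enters the sample complexity—all depend on $\varepsilon'$. Conversely one must confirm that this dependence is benign, \ie that $\varepsilon'$ is a function of $\varepsilon$ alone and not of $N$ or $k$, so that the final rate remains $O(k\log N)$ with a constant $C_1$ depending only on $(\varepsilon,\delta)$.
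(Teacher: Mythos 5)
Your proposal matches the paper's proof essentially step for step: the same triangle inequality $\|\sqrt v\hat\beta_{lasso}-\SV\|_\infty \leq \|\sqrt v\hat\beta_{lasso}-\AME\|_2 + \|\AME-\SV\|_\infty$ (using $\|\cdot\|_\infty\leq\|\cdot\|_2$ on the estimation term), Proposition \ref{prop:lasso-rate} under the assumed $k$-sparsity of the \AME for the first half, and Eq.~\ref{eq:TU-AME-SV-bound-linfty} for the second. Your bookkeeping is if anything slightly more careful --- your choice $\varepsilon'=\varepsilon/16$ is the one consistent with the bound $\|\AME-\SV\|_\infty\leq 8\varepsilon'$, whereas the paper writes $\varepsilon'=\varepsilon/4$, an immaterial constant slip.
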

\begin{proof}
$\|\sqrt v\hat\beta_{lasso}-\SV\|_{\infty} \leq \|\sqrt v\hat\beta_{lasso}-\AME\|_{2} + \|\AME-\SV\|_{\infty}$.
By the sparsity of \quantity{}, we apply Proposition \ref{prop:lasso-rate} to bound the first term by $\varepsilon/2$, and Eq. \ref{eq:TU-AME-SV-bound-linfty} with $\varepsilon' = \varepsilon / 4$ to bound the second term by the same, concluding the proof. 
\end{proof}

The main obstacle to deriving an $L_2$ bound in this more general setting comes from the lack of an $L_2$ error bound between the AME and the SV that is independent of $N$. Note that one may derive an $L_2$ error bound from Eq. \ref{eq:TU-AME-SV-bound-linfty} as follows: $\|\AME - \SV\|_2\leq \sqrt{N} \|\AME - \SV\|_\infty \leq 4\sqrt N \varepsilon$. However, this bound is now dependent on $N$, which violates the precondition of applying the LASSO error bound (see Prop. \ref{prop:lasso-rate}).


\mypara{Using $p$-featurization}
\label{subsec:p-fea}
As pointed out in \S\ref{appendix:sec-p-fea}, $p$-featurization also achieves a $O(k \log N)$ sample rate to reach a low $L_2$ error in estimating the \quantity{}. Since $p$-featurization has no effect on the \quantity{} value, the bound between the \quantity{} and the SV remains the same. We thus reach the same conclusion as for $1/p$-featurization:

\begin{corollary}
For every constant $\varepsilon>0, \delta>0, N\geq3$, there exists constants $C_2(\varepsilon, \delta)$, $\varepsilon'$, and a LASSO regularization parameter $\lambda$, such that with $M\geq C_2(\varepsilon, \delta)k\log N$, and with probability at least $1-\delta$:
\begin{enumerate}[label=(\arabic*)]
    \item $\|\sqrt v\hat\beta_{lasso}-\SV\|_\infty \leq \varepsilon$ holds when the utility is bounded in $[0,1]$ and the \quantity{} is $k$-sparse; 
    \item $\|\sqrt v\hat\beta_{lasso}-\SV\|_2 \leq \varepsilon$ holds when the utility is monotone and the SV is $k$-sparse,
\end{enumerate}
where the \quantity{} is defined under $\cP=Uni(\varepsilon',1-\varepsilon')$ and $v=\mathbb E_{p\sim \cP}[\frac{1}{p(1-p)}]$.
\end{corollary}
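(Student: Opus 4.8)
The plan is to observe that this corollary is the $p$-featurization analogue of Corollaries \ref{corollary:efficient-sv-linfty} and \ref{corollary:efficient-sv}, and that its proof needs to change only one ingredient relative to those. The key structural fact I would lean on is that the $p$-featurization of \S\ref{appendix:sec-p-fea} alters how the design matrix is built and how $p$ is drawn for the regression (via the reweighted distribution $\wtP$), but leaves the quantity $\AME_n(\cP)$ itself unchanged, since the \AME is still defined under $\cP = Uni(\varepsilon', 1-\varepsilon')$. Consequently, the two discrepancy bounds between the \AME and the \SV---the $L_\infty$ bound of Eq. \ref{eq:TU-AME-SV-bound-linfty} and the $L_2$ bound of Lemma \ref{lemma:TU-AME-SV-bound}---carry over verbatim, because they depend only on the sampling distribution $\cP$ and not on the featurization used inside LASSO. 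The only term that must be re-derived is the estimation error $\|\sqrt v\hat\beta_{lasso}-\AME\|$, for which I would invoke Proposition \ref{prop:lasso-rate-p-fea} in place of Proposition \ref{prop:lasso-rate}.

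For part (1), I would start from the triangle inequality $\|\sqrt v\hat\beta_{lasso}-\SV\|_{\infty} \leq \|\sqrt v\hat\beta_{lasso}-\AME\|_{2} + \|\AME-\SV\|_{\infty}$, exactly as in the proof of Corollary \ref{corollary:efficient-sv-linfty}. Since the \AME is assumed $k$-sparse and the utility is bounded in $[0,1]$, Proposition \ref{prop:lasso-rate-p-fea} bounds the first term by $\varepsilon/2$ once $M \geq C_2(\varepsilon,\delta)k\log N$ for a suitable constant $C_2$. For the second term I would apply Eq. \ref{eq:TU-AME-SV-bound-linfty}, which gives $\|\AME-\SV\|_{\infty}\leq 8\varepsilon'$, and choose $\varepsilon'$ small enough (\eg $\varepsilon' = \varepsilon/16$) so that this is at most $\varepsilon/2$, completing the bound.

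For part (2), I would again split via $\|\sqrt v\hat\beta_{lasso}-\SV\|_{2} \leq \|\sqrt v\hat\beta_{lasso}-\AME\|_{2} + \|\AME-\SV\|_{2}$, following the proof of Corollary \ref{corollary:efficient-sv}. Here the utility is monotone and the \SV is $k$-sparse; by Corollary \ref{appendix-corollary:sparse-AME-SV} this implies the \AME is also $k$-sparse under $\cP = Uni(\varepsilon',1-\varepsilon')$, so Proposition \ref{prop:lasso-rate-p-fea} again bounds the estimation term by $\varepsilon/2$. The discrepancy term is controlled by Lemma \ref{lemma:TU-AME-SV-bound}, which yields $\|\AME-\SV\|_2 \leq 4\varepsilon' + 2\sqrt{2\varepsilon'}$; choosing $\varepsilon' = \frac{1}{8}(\sqrt{\varepsilon+1}-1)^2$ as in Corollary \ref{corollary:efficient-sv} makes this at most $\varepsilon/2$.

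The only real work, and the step I expect to require the most care, is verifying that the hypotheses of Proposition \ref{prop:lasso-rate-p-fea} genuinely hold in this setting. Concretely, the $p$-featured variables $X_n$ equal $\sqrt v(1-p)$ or $-\sqrt v p$, so they are bounded precisely because the truncation $p\sim Uni(\varepsilon',1-\varepsilon')$ keeps $v = \mathbb E_p[\frac{1}{p(1-p)}]$ finite; and the orthonormality condition $\mathbb E[X_iX_j]=\mathbf 1\{i=j\}$ needed by the underlying LASSO bound was already established in \S\ref{appendix:sec-p-fea}. Because these are the same structural properties exploited in the $1/p$ case, the argument goes through identically; the sole difference from Corollaries \ref{corollary:efficient-sv-linfty} and \ref{corollary:efficient-sv} is that the feature range $[A,B]$---and hence the constant $C(B-A,\delta)$ in Proposition \ref{prop:lasso-rate-p-fea}---changes, which is absorbed into the new constant $C_2(\varepsilon,\delta)$.
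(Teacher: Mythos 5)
Your proposal is correct and follows essentially the same route as the paper, which simply notes that the $p$-featurization leaves $\AME(\cP)$ unchanged and then defers conclusion (1) to the proof of Corollary \ref{corollary:efficient-sv-linfty} and conclusion (2) to that of Corollary \ref{corollary:efficient-sv}, with Proposition \ref{prop:lasso-rate-p-fea} substituted for Proposition \ref{prop:lasso-rate}. Your explicit verification of boundedness and the choice $\varepsilon'=\varepsilon/16$ for the $L_\infty$ case are, if anything, slightly more careful than the paper's one-line proof.
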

\begin{proof}
Conclusion (1) follows the same proof as Corollary \ref{corollary:efficient-sv-linfty}, and Conclusion (2) follows Corollary \ref{corollary:efficient-sv}.
\end{proof}

\subsection{SV Estimator from the AME under a Beta Distribution}
\label{sec:beta}

Another candidate to estimate the SV from the \quantity{} is to use $\cP = \textrm{Beta}(1+\varepsilon,1+\varepsilon)$ as the distribution of $p$, with $\varepsilon\in(0,0.5)$, and using $p$-featurization\footnote{The $1/p$-featurization in incompatible here, since this distribution can draw ``$p$''s arbitrarily close to 0 or 1 which leads to unbounded feature values, violating the assumption of the $O(k \log N)$ LASSO rate (Prop. \ref{prop:lasso-rate}).}.
We show an $O(k \log N)$ sample rate in this setting as well, after introducing two necessary lemmas.


\begin{lemma}
\label{lemma:gauineq}
For any $x\geq 1, y\geq 1$ and $\varepsilon \in (0,0.5)$, the following holds:

$$\frac{(x+\varepsilon-1)^\varepsilon(y+\varepsilon-1)^\varepsilon}{(x+y+2\varepsilon)^{2\varepsilon}} < \frac{\mathrm B(x+\varepsilon,y+\varepsilon)}{\mathrm B(x,y)} < \frac{(x+\varepsilon)^\varepsilon(y+\varepsilon)^\varepsilon}{(x+y+2\varepsilon-1)^{2\varepsilon}},$$
where $\mathrm{B}(\cdot,\cdot)$ is the Beta function.
\end{lemma}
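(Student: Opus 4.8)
The plan is to reduce everything to the classical Gautschi inequality for ratios of Gamma functions. Writing $\mathrm{B}(a,b) = \Gamma(a)\Gamma(b)/\Gamma(a+b)$, the quantity to bound factors as
$$\frac{\mathrm{B}(x+\varepsilon,y+\varepsilon)}{\mathrm{B}(x,y)} = \frac{\Gamma(x+\varepsilon)}{\Gamma(x)}\cdot\frac{\Gamma(y+\varepsilon)}{\Gamma(y)}\cdot\frac{\Gamma(x+y)}{\Gamma(x+y+2\varepsilon)},$$
so it suffices to bound each single-variable Gamma ratio separately and multiply. Recall Gautschi's inequality: for $w > 0$ and $s \in (0,1)$, $w^{1-s} < \Gamma(w+1)/\Gamma(w+s) < (w+1)^{1-s}$.

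First I would apply it to $\Gamma(x+\varepsilon)/\Gamma(x)$ by choosing $w = x - 1 + \varepsilon$ and $s = 1 - \varepsilon$, so that $w+1 = x + \varepsilon$ and $w + s = x$; since $x \ge 1$ and $\varepsilon \in (0,0.5)$ we have $w \ge \varepsilon > 0$ and $s \in (0.5,1) \subset (0,1)$, with $1 - s = \varepsilon$. This yields $(x+\varepsilon-1)^\varepsilon < \Gamma(x+\varepsilon)/\Gamma(x) < (x+\varepsilon)^\varepsilon$, and identically with $y$ in place of $x$. For the denominator factor I would bound $\Gamma(x+y+2\varepsilon)/\Gamma(x+y)$ by taking $w = x+y-1+2\varepsilon$ and $s = 1 - 2\varepsilon$; here $2\varepsilon \in (0,1)$ is exactly where the hypothesis $\varepsilon < 0.5$ is used, and $w \ge 1 + 2\varepsilon > 0$ since $x+y \ge 2$. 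This gives $(x+y+2\varepsilon-1)^{2\varepsilon} < \Gamma(x+y+2\varepsilon)/\Gamma(x+y) < (x+y+2\varepsilon)^{2\varepsilon}$, hence upon inverting $(x+y+2\varepsilon)^{-2\varepsilon} < \Gamma(x+y)/\Gamma(x+y+2\varepsilon) < (x+y+2\varepsilon-1)^{-2\varepsilon}$.

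Finally I would multiply the three chains of inequalities. The upper bound of the Beta ratio comes from the upper bounds $(x+\varepsilon)^\varepsilon$, $(y+\varepsilon)^\varepsilon$, and $(x+y+2\varepsilon-1)^{-2\varepsilon}$, giving the claimed right-hand side; the lower bound comes from $(x+\varepsilon-1)^\varepsilon$, $(y+\varepsilon-1)^\varepsilon$, and $(x+y+2\varepsilon)^{-2\varepsilon}$, giving the claimed left-hand side. All bases are positive under $x,y\ge1$, $\varepsilon\in(0,0.5)$, so the products of inequalities are valid, and strictness is preserved because Gautschi's inequalities are strict for $s \in (0,1)$.

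The proof is essentially bookkeeping; I do not expect a genuine obstacle beyond choosing the parameter substitutions so that the shifted argument $w$ stays positive and the exponent $s$ stays in $(0,1)$. The only place $\varepsilon < 0.5$ is essential is the denominator, where $s = 1 - 2\varepsilon$ must remain positive; the constraints $x,y \ge 1$ are what guarantee $w > 0$ in every application, so these hypotheses are exactly matched to the tool. If one wished to avoid quoting Gautschi as a black box, the same bounds follow from the log-convexity of $\Gamma$ (equivalently, monotonicity of $w \mapsto w^{s-1}\,\Gamma(w+1)/\Gamma(w+s)$), but quoting Gautschi directly is cleanest here.
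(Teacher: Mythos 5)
Your proposal is correct and follows essentially the same route as the paper: factor the Beta ratio into three Gamma ratios and apply Gautschi's inequality with the substitutions $w = b + \varepsilon' - 1$, $s = 1 - \varepsilon'$ for $\varepsilon' \in \{\varepsilon, 2\varepsilon\}$. Your version is in fact slightly cleaner, since the paper's intermediate display (Eq.~\ref{eq:gauineq}) has its inequality signs written in the reversed orientation (a typo --- as you correctly derive, the chain should read $(b+\varepsilon'-1)^{\varepsilon'} < \Gamma(b+\varepsilon')/\Gamma(b) < (b+\varepsilon')^{\varepsilon'}$), and you also verify explicitly the positivity and range conditions that make each application of Gautschi legitimate.
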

\begin{proof}
According to Gautschi's inequality, for all $a>0, s\in(0,1)$,
\begin{equation*}
    {a^{1-s}<{\frac {\Gamma (a+1)}{\Gamma (a+s)}}<(a+1)^{1-s},}
\end{equation*}
where $\Gamma(\cdot)$ is the Gamma function.
For all $\varepsilon' \in (0,1)$ and $b \geq 1$, we can change variables with $s \leftarrow 1-\varepsilon' \in(0,1)$ and $a \leftarrow b+\varepsilon'-1 > 0$ to obtain:
\begin{equation}
\label{eq:gauineq}
    {(b+\varepsilon'-1)^{\varepsilon'}>{\frac {\Gamma (b+\varepsilon')}{\Gamma (b)}}>(b+\varepsilon')^{\varepsilon'}.}
\end{equation}
In addition, we have that:
\begin{equation*}
    \mathrm B(x+\varepsilon,y+\varepsilon) / \mathrm B(x,y)=\frac{\Gamma(x+\varepsilon)}{\Gamma(x)}\frac{\Gamma(y+\varepsilon)}{\Gamma(y)}/\frac{\Gamma(x+y+2\varepsilon)}{\Gamma(x+y)}
\end{equation*}
Plugging Eq. \ref{eq:gauineq} into the above concludes the proof.
\end{proof}

\begin{lemma}
\label{lemma:beta-ame-sv-bound}
When $p \sim \mathrm{Beta}(1+\varepsilon, 1+\varepsilon)$ with $\varepsilon<0.5$, then if the utility is bounded in $[0,1]$, 
$$
\|\AME - \SV\|_\infty \leq 2((1+\frac{1}{\varepsilon})^{2\varepsilon}-1).
$$
In addition, under a monotone utility,
$$
\|\AME - \SV\|_2 \leq ((1+\frac{1}{\varepsilon})^{2\varepsilon}-1)+\sqrt{2((1+\frac{1}{\varepsilon})^{2\varepsilon}-1)}.
$$
\end{lemma}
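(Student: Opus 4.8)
The plan is to obtain both displayed inequalities as immediate corollaries of Lemma~\ref{lemma:general-sv-ame-bound}, which has already done the heavy lifting of reducing everything to a single scalar, $\Delta = \max_S \frac{P_{\AME}(S)}{P_{\SV}(S)} - 1$. Indeed, that lemma gives $\|\AME-\SV\|_\infty \le 2\Delta$ under a utility bounded in $[0,1]$, and $\|\AME-\SV\|_2 \le \Delta + \sqrt{2\Delta}$ under a monotone utility (Assumption~\ref{asm:mono}). Hence the whole task collapses to establishing that, for $\cP = \mathrm{Beta}(1+\varepsilon,1+\varepsilon)$, one has $\Delta \le (1+\tfrac1\varepsilon)^{2\varepsilon}-1$; substituting this single estimate into the two bounds of Lemma~\ref{lemma:general-sv-ame-bound} then yields exactly the two claims, with the $L_\infty$ bound using only boundedness and the $L_2$ bound additionally using monotonicity.

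First I would compute the likelihood ratio $P_{\AME}(S)/P_{\SV}(S)$ in closed form. A subset $S$ of size $j=|S|$ is produced by the Beta--binomial compound distribution, so integrating the Beta density against the binomial mass gives $P_{\AME}(S)=\mathrm{B}(j+1+\varepsilon,\,N-j+\varepsilon)/\mathrm{B}(1+\varepsilon,1+\varepsilon)$, while $P_{\SV}(S)=\frac{1}{N}\binom{N-1}{j}^{-1}$ as established in Proposition~\ref{prop:ame-to-sv}. Using the elementary identity $N\binom{N-1}{j}\mathrm{B}(j+1,N-j)=1$ the binomial factors cancel and the ratio becomes a pure ratio of Beta functions depending on $S$ only through $j$:
\begin{equation*}
\frac{P_{\AME}(S)}{P_{\SV}(S)} = \frac{\mathrm{B}(j+1+\varepsilon,\,N-j+\varepsilon)}{\mathrm{B}(1+\varepsilon,1+\varepsilon)\,\mathrm{B}(j+1,\,N-j)}.
\end{equation*}
This is precisely the form Lemma~\ref{lemma:gauineq} was built to control. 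I would apply its \emph{upper} bound to the numerator ratio $\mathrm{B}(j+1+\varepsilon,N-j+\varepsilon)/\mathrm{B}(j+1,N-j)$ (setting $x=j+1$, $y=N-j$, so $x+y=N+1$), and its \emph{lower} bound with $x=y=1$ to bound $\mathrm{B}(1+\varepsilon,1+\varepsilon)$ from below. After these two substitutions the ratio is dominated by an explicit function of $j$, namely a constant $(2+2\varepsilon)^{2\varepsilon}\varepsilon^{-2\varepsilon}$ times $[(j+1+\varepsilon)(N-j+\varepsilon)]^{\varepsilon}/(N+2\varepsilon)^{2\varepsilon}$.

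The remaining step is the maximization over the subset size $j$. Since the factor $(N+2\varepsilon)^{2\varepsilon}$ is independent of $j$, this reduces to maximizing $(j+1+\varepsilon)(N-j+\varepsilon)$ whose two factors sum to the constant $N+1+2\varepsilon$; by AM--GM the maximum is attained at the balanced split $j\approx(N-1)/2$ and equals $\big((N+1+2\varepsilon)/2\big)^2$. I expect the genuine difficulty to concentrate entirely here: the whole reason for routing through the sharp Gautschi estimates of Lemma~\ref{lemma:gauineq}, rather than crude Stirling bounds, is that the leading $N^{2\varepsilon}$ growth of the balanced numerator must cancel against the $(N+2\varepsilon)^{2\varepsilon}$ in the denominator, so that the surviving constant consolidates to $(2+2\varepsilon)^{2\varepsilon}\,2^{-2\varepsilon}\varepsilon^{-2\varepsilon}=\big(\tfrac{1+\varepsilon}{\varepsilon}\big)^{2\varepsilon}=(1+\tfrac1\varepsilon)^{2\varepsilon}$, a bound free of both $N$ and the sparsity level $k$. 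Verifying that this cancellation is exact in the limit and that the residual finite-$N$ correction of the form $\big(1+\tfrac{1}{N+2\varepsilon}\big)^{2\varepsilon}$ is handled is the delicate part of the argument; once the dimension-free constant $(1+\tfrac1\varepsilon)^{2\varepsilon}$ for $\max_S P_{\AME}(S)/P_{\SV}(S)$ is secured, we obtain $\Delta \le (1+\tfrac1\varepsilon)^{2\varepsilon}-1$ and both stated inequalities follow directly from Lemma~\ref{lemma:general-sv-ame-bound}.
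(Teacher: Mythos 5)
Your proposal follows essentially the same route as the paper's proof: reduce both bounds to the single quantity $\Delta$ via Lemma~\ref{lemma:general-sv-ame-bound}, express the likelihood ratio as a ratio of Beta functions depending only on $j=|S|$, control it with the Gautschi estimates of Lemma~\ref{lemma:gauineq} (the upper bound for the numerator with $x=j+1$, $y=N-j$, and the lower bound at $x=y=1$ for $\mathrm{B}(1+\varepsilon,1+\varepsilon)$), and maximize over $j$ at the balanced split to get $\Delta\le(1+\tfrac1\varepsilon)^{2\varepsilon}-1$. The ``delicate'' residual factor $\bigl(1+\tfrac{1}{N+2\varepsilon}\bigr)^{2\varepsilon}$ you flag is genuine but is silently absorbed in the paper as well, which writes $(N+1+2\varepsilon)^{2\varepsilon}$ in the denominator where Lemma~\ref{lemma:gauineq} actually yields $(N+2\varepsilon)^{2\varepsilon}$.
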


\begin{proof}
With a small abuse of notation, we write $P_{\AME}(j)=\sum_{S: |S|=j} P_{\AME}(S)$ and $P_{\SV}(j)=\sum_{S: |S|=j} P_{\SV}(S)$ (see \S\ref{appendix-sec:sv-estimators} for detailed definition of $P_{\AME}(S)$ and $P_{\SV}(S)$).
Notice that now $P_{\AME}(j)$ is the PMF of a beta binomial distribution $\mathrm{BB}(\alpha=1+\varepsilon,\beta=1+\varepsilon,n=N-1)$. Let $\mathrm{B}(\cdot,\cdot)$ denote the Beta function, then:
\begin{equation}
\label{eq:beta-delta-bound}
\begin{aligned}
    \Delta+1=\max_j P_{\AME}(j)/P_{\SV}(j)&=\max_j N{N-1 \choose j}{\frac {\mathrm {B} (j+\alpha ,N-1-j+\beta )}{\mathrm {B} (\alpha ,\beta )}}\\
    &=\max_j N\cdot\frac{1}{N\mathrm {B}(N-j,j+1)}\cdot\frac{\mathrm {B} (j+\alpha ,N-1-j+\beta )}{\mathrm B(\alpha,\beta)}\\
    (\text{since } \mathrm B(1,1)=1) &=\max_j \frac{\mathrm {B} (j+1+\varepsilon ,N-j+\varepsilon )}{\mathrm {B}(j+1,N-j)}\cdot\frac{\mathrm B(1,1)}{\mathrm B(1+\varepsilon,1+\varepsilon)}\\
    (\text{by lemma \ref{lemma:gauineq}}) &\leq \max_j \frac{(j+1+\varepsilon)^\varepsilon(N-j+\varepsilon)^\varepsilon}{(N+1+2\varepsilon)^{2\varepsilon}} /\frac{\varepsilon^{2\varepsilon}}{(2+2\varepsilon)^{2\varepsilon}}\\
    (\text{maximum reached
    when } j+1+\varepsilon=N-j+\varepsilon) &\leq \frac{1}{4^\varepsilon}\frac{(2+2\varepsilon)^{2\varepsilon}}{\varepsilon^{2\varepsilon}}\\
    &\leq (1+\frac{1}{\varepsilon})^{2\varepsilon}
    ,
\end{aligned}
\end{equation}
Applying Lemma \ref{lemma:general-sv-ame-bound} concludes the proof.
\end{proof}

Now we formally state the $O(k\log N)$ sample rate:
\begin{corollary}
For every $\varepsilon>0, \delta>0, N \geq 3$, there exists constants $C_3(\varepsilon, \delta)$ and $\varepsilon'$, and a regularization parameter $\lambda$, such that when the number of samples $M\geq C_3(\varepsilon, \delta)k\log N$, with a probability at least $1- \delta$,
\begin{enumerate}[label=(\arabic*)]
    \item $\|\sqrt v\hat\beta_{lasso}-\SV\|_\infty \leq \varepsilon$ holds when the utility is bounded in $[0,1]$ and \quantity{} is $k$-sparse; 
    \item $\|\sqrt v\hat\beta_{lasso}-\SV\|_2 \leq \varepsilon$ holds when the utility is monotone and SV is $k$-sparse,
\end{enumerate}
where \quantity{} is defined by $\cP=\mathrm{Beta}(1+\varepsilon', 1+\varepsilon')$ and $v=\mathbb E_{p\sim \cP}[\frac{1}{p(1-p)}]$. 

\end{corollary}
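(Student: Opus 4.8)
The plan is to follow the two-term decomposition used in Corollaries \ref{corollary:efficient-sv} and \ref{corollary:efficient-sv-linfty}, substituting the truncated-uniform discrepancy bounds with the Beta-distribution bounds of Lemma \ref{lemma:beta-ame-sv-bound} and using the $p$-featurization results (Proposition \ref{prop:lasso-rate-p-fea}) in place of the $1/p$ versions. Concretely, for both conclusions I would split the error by the triangle inequality as $\|\sqrt v\hat\beta_{lasso}-\SV\| \leq \|\sqrt v\hat\beta_{lasso}-\AME\|_2 + \|\AME-\SV\|$, where the first term is the LASSO estimation error (measured in $L_2$, which dominates $L_\infty$) and the second is the intrinsic discrepancy between the \AME defined under $\cP=\mathrm{Beta}(1+\varepsilon',1+\varepsilon')$ and the \SV. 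I would then choose $\varepsilon'$ so the second term is at most $\varepsilon/2$, and choose $C_3(\varepsilon,\delta)$ (hence $M$) so the first is at most $\varepsilon/2$.

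For the choice of $\varepsilon'$ I would invoke Lemma \ref{lemma:beta-ame-sv-bound}: the $L_\infty$ discrepancy is at most $2((1+\tfrac1{\varepsilon'})^{2\varepsilon'}-1)$ and, under monotone utility, the $L_2$ discrepancy is at most $((1+\tfrac1{\varepsilon'})^{2\varepsilon'}-1)+\sqrt{2((1+\tfrac1{\varepsilon'})^{2\varepsilon'}-1)}$. Since $(1+\tfrac1{\varepsilon'})^{2\varepsilon'}\to 1$ as $\varepsilon'\to 0^+$ (because $2\varepsilon'\log(1+1/\varepsilon')\to 0$), both bounds vanish, so a sufficiently small fixed $\varepsilon'$ drives the discrepancy term below $\varepsilon/2$ for each conclusion respectively. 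With $\varepsilon'$ thus fixed, $v=\mathbb E_{p\sim\cP}[\tfrac1{p(1-p)}]$ is a finite constant (the $\mathrm{Beta}(1+\varepsilon',1+\varepsilon')$ density behaves like $p^{\varepsilon'}$ near the endpoints, so the integral converges for $\varepsilon'>0$), and the $p$-featurized covariates $X_n\in\{\sqrt v(1-p),\,-\sqrt v p\}$ are bounded in $[-\sqrt v,\sqrt v]$. This boundedness is exactly what licenses Proposition \ref{prop:lasso-rate-p-fea}, giving $\|\sqrt v\hat\beta_{lasso}-\AME\|_2 = \sqrt v\,\|\hat\beta_{lasso}-\tfrac1{\sqrt v}\AME\|_2 \leq \sqrt v\,C(2\sqrt v,\delta)\sqrt{k\log N/M}$; taking $C_3(\varepsilon,\delta)$ proportional to $v\,C(2\sqrt v,\delta)^2/\varepsilon^2$ makes this at most $\varepsilon/2$ whenever $M\geq C_3(\varepsilon,\delta)k\log N$.

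The remaining point is the sparsity precondition of Proposition \ref{prop:lasso-rate-p-fea}. For Conclusion (1) the \AME is $k$-sparse by hypothesis. For Conclusion (2) the hypothesis is on the \SV, so I would invoke Corollary \ref{appendix-corollary:sparse-AME-SV}---whose proof only uses the general monotone bound of Eq. \ref{eq:ame-sv-multi-bound} and hence applies verbatim to the Beta case through its own $\Delta$---to conclude that a monotone, $k$-sparse \SV yields a $k$-sparse \AME. Combining the two $\varepsilon/2$ bounds via the triangle inequality then gives the claimed $L_\infty$ and $L_2$ guarantees, after checking the mild side condition $M\geq k(1+\log(k/N))$ of Proposition \ref{prop:lasso-rate-p-fea}, which is subsumed by $M\geq C_3(\varepsilon,\delta)k\log N$.

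I expect the only genuine subtlety to be the tension in choosing $\varepsilon'$: shrinking $\varepsilon'$ reduces the \AME--\SV discrepancy but simultaneously inflates $v$ (indeed $v\to\infty$ as $\varepsilon'\to 0$, recovering the divergence at the uniform endpoint that motivated the $p$-featurization in the first place), which enlarges both the feature range $2\sqrt v$ and the constant $C_3(\varepsilon,\delta)$. The resolution is simply that $\varepsilon'$ is pinned down as a fixed function of the target $\varepsilon$ \emph{before} $M$ is selected, so $v$ and $C(2\sqrt v,\delta)$ stay finite and are absorbed into $C_3(\varepsilon,\delta)$; the sample complexity remains $O(k\log N)$ in $N$ and $k$, with all $\varepsilon$-dependence hidden in the constant. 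Everything else is the routine bookkeeping already carried out for the truncated-uniform corollaries.
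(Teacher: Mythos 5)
Your proposal is correct and follows essentially the same route as the paper's proof: the same triangle-inequality split into a LASSO estimation term (bounded via Proposition \ref{prop:lasso-rate-p-fea} after verifying that $v$ is finite and the $p$-featurized covariates lie in $[-\sqrt v,\sqrt v]$) and an \AME--\SV discrepancy term (bounded via Lemma \ref{lemma:beta-ame-sv-bound} by taking $\varepsilon'$ small), with the sparsity transfer for Conclusion (2) handled by Corollary \ref{appendix-corollary:sparse-AME-SV}. Your explicit remark that $\varepsilon'$ must be fixed as a function of $\varepsilon$ before $M$ is chosen (so that the blow-up of $v$ as $\varepsilon'\to 0$ is absorbed into the constant) is exactly the implicit ordering in the paper's argument, just stated more carefully.
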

\begin{proof}
Observing that $\|\sqrt v\hat\beta_{lasso}-\SV\|_\infty \leq \|\sqrt v\hat\beta_{lasso}-\AME\|_2 + \|\sqrt v\AME-\SV\|_\infty $ and 
$\|\sqrt v\hat\beta_{lasso}-\SV\|_2 \leq \|\sqrt v\hat\beta_{lasso}-\AME\|_2 + \|\sqrt v\AME-\SV\|_2 $, we proceed by bounding both $\|\sqrt v\hat\beta_{lasso}-\AME\|_2$ and $\|\sqrt v\AME-\SV\|_\infty$ (or $\|\sqrt v\AME-\SV\|_2)$ by $\varepsilon/2$.

Prop. \ref{prop:lasso-rate-p-fea} directly yields the bound on the first term. In both cases (1) and (2), assumptions are verified as follows. First, a $k$-sparse \quantity{} is assumed in Case (1) and implied in Case (2) by a monotone utility plus a $k$-sparse \SV (details in Corollary \ref{appendix-corollary:sparse-AME-SV}).
Second, the $X_n$s are bounded, since $\forall n\in[N]: X_n\in [-\sqrt v,\sqrt v]$ and $v$ is finite:
\begin{equation}
\label{eq:beta-v-def}
    v=\int_0^1\frac{1}{p(1-p)}\frac{p^\varepsilon(1-p)^\varepsilon}{\mathrm B(1+\varepsilon,1+\varepsilon)}dp=\frac{\mathrm B(\varepsilon,\varepsilon)}{\mathrm B(1+\varepsilon,1+\varepsilon)}=4+\frac{2}{\varepsilon},
\end{equation}
where the last equality comes from the fact that $\Gamma(z+1)=z\Gamma(z)$.

To bound the second terms in both cases, notice that each version is given a bound in Lemma \ref{lemma:beta-ame-sv-bound}, and that both approach 0 when $\varepsilon' \to 0$ and are positive when $\varepsilon'>0$. In consequence, there exists $\varepsilon'>0$ dependent only on $\varepsilon$ such that both are $\leq \varepsilon / 2$, concluding the proof.
\end{proof}

\section{Efficient Sparse Beta-Shapley Estimator}
\label{appendix:beta-shapley}
Recall that Beta$(\alpha,\beta)$-Shapley~\cite{kwon2021beta} is our \AME defined on the distribution $\mathcal P=\operatorname{Beta}(\alpha,\beta)$. 
We show that our regression-based \AME estimator with $p$-featurization (\S\ref{appendix:sec-p-fea}) can efficiently estimate Beta$(\alpha,\beta)$-Shapley values when they are $k$-sparse, for all $\alpha>1$ and $\beta>1$, \ie achieving low $L_2$ error with high probability using $O(k\log N)$ samples. 
Prop.~\ref{prop:lasso-rate-p-fea} directly yields the result. 
Its assumptions are verified given that:
\begin{equation}
v=\int_0^1\frac{1}{p(1-p)}\frac{p^{\alpha-1}(1-p)^{\beta-1}}{\operatorname{B}(\alpha, \beta)}dp=\frac{\operatorname{B}(\alpha-1, \beta-1)}{\operatorname{B}(\alpha,\beta)}=\frac{(\alpha+\beta-2)(\alpha+\beta-1)}{(\alpha-1)(\beta-1)}
\end{equation}
is finite (the last equality is due to $\Gamma(z+1)=z\Gamma(z)$) and $\forall n\in[N]: X_n\in[-\sqrt{v}, \sqrt{v}]$ is also bounded.

\section{Extending to Approximate Sparsity}
\label{appendix:approximate-sparsity}

The above discussion assumes exactly $k$-sparse of the SVs, which in practice likely will not hold. 
In this section we extend our result to the case when it is only approximately sparse~\cite{rauhut2010compressive}. 
In such a setting, small non-zeros are allowed in the remaining $N-k$ entries of the SVs. Formally, it requires that the best $k$-sparse approximation $\sigma_k(\SV)=\inf_s\{\|\SV - \mathbf{s}\|_1: \mathbf{s} \text{ is } k \text{-sparse}\}$ is small.

\subsection{The LASSO Error Bound under Approximate Sparsity}
\newcommand{\Classoappri}{C_4}
\newcommand{\Classoapprii}{C_5}
\newcommand{\Classoappriii}{C_6}
\newcommand{\Classoappriv}{C_7}
First we extend the LASSO error bound in Prop.~\ref{prop:simple-lasso-error-bound}. This is relatively easy since Theorem 1.4 from \cite{lecue_regularization_2017} that Prop.~\ref{prop:simple-lasso-error-bound} has simplified supports approximate sparsity. 
We incorporate it by making two changes: a) $\beta^*$ is now allowed to be an approximately sparse vector verifying $\sigma_k(\beta^*)\leq \Classoappri(\delta)\|\xi\|_{L_q}k\sqrt{\frac{\log(N)}{M}}$, where $\Classoappri(\delta)$ is a constant, $\xi=\langle \beta^*, X\rangle-Y$ and $q$ is some constant $>2$; b) the result is accordingly rewritten to the following: there exists a regularization parameter $\lambda$ and a value $\Classoapprii(B-A,\delta)$ such that with probability at least $1-\delta$,
\begin{equation}
\label{eq:lasso:error:original}
    \|\hat{\beta}_{lasso}-\beta^*\|_2 \leq \Classoapprii(B-A,\delta)\|\xi\|_{L_q}\sqrt{\frac{k\log(N)}{M}}.
\end{equation}

The proof is identical to that of Prop.~\ref{prop:simple-lasso-error-bound} thus omitted.
Intuitively, the difference is that a $k$-sparse approximation with small enough error needs to exist to arrive to a similarly small enough error for the LASSO estimate.
Another way to state this is that the sample size $M$ in Eq. \ref{eq:lasso:error:original} is upper bounded by the approximate sparsity requirement (the smaller error the best $k$-sparse approximation is, the larger $M$ can be).
Denote the upper bound by $M_{max}=\max \{M: \sigma_k(\beta^*)\leq \Classoappri(\delta)\|\xi\|_{L_q}k\sqrt{\frac{\log(N)}{M}}\}=\Classoappri(\delta)^2\|\xi\|_{L_q}^2/ \sigma_k(\beta^*)^2k^2\log(N)$.
Approximate sparsity has two implications. First, given that the error bound $\varepsilon=\mathcal E(M)=\Classoapprii(B-A,\delta)\|\xi\|_{L_q}\sqrt{\frac{k\log(N)}{M}}$ decreases monotonically as $M$ increases, the minimal error possibly achievable is lower bounded by a function of the ``sparsity level'' $\sigma_k(\beta^*)$:
\begin{equation}
     \varepsilon \geq \mathcal E(\lfloor M_{max}\rfloor) \approx \frac{\sigma_k(\beta^*)}{\sqrt{k}}\cdot\frac{\Classoappri(\delta)}{\Classoapprii(B-A,\delta)}.
\end{equation}
We note that \cite{jia_towards_2020} shares a similar lower bound on $\varepsilon$.
Second, recall that $M\geq k(1+\log(N/k))=M_{min}$ is required, implying that the theorem is only applicable when $\lfloor M_{max} \rfloor \geq M_{min}$. Because $M_{max}$ increases with lower error sparse approximations, this is equivalent to require that:
\begin{equation}
\label{eq:apprsparse}
    \sigma_k(\beta^*) \leq \Classoappri(\delta)\|\xi\|_{L_q}k\sqrt{\frac{\log(N)}{\lceil M_{min} \rceil}}\approx \Classoappri(\delta)\|\xi\|_{L_q}\sqrt{\frac{k\log(N)}{1+\log(N/k)}}.
\end{equation}

Though this appears to be an extra requirement compared to \cite{jia_towards_2020}, our empirical results suggest that it is not limiting in the cases we studied. 
Indeed, this only rules out sample sizes $\leq k(1+log(N/k))$, smaller than the $M$ needed for good performance across our evaluations.

We now restate the result as a form of $(\varepsilon, \delta)$-approximation to make the result more approachable.
\begin{corollary}
For every sufficiently sparse $\beta^*$ s.t. Equation \ref{eq:apprsparse}, and every $\delta>0, \varepsilon \geq \mathcal E(\lfloor M_{max} \rfloor), N\geq 3$, there exists some constant $\Classoappriv(B-A,\varepsilon,\delta)$ such that when the sample size $M=\max(\lceil M_{min} \rceil, \min(\lfloor M_{max} \rfloor,\lceil \Classoappriv(B-A, \varepsilon, \delta)k\log N \rceil))$, with a probability at least $1-\delta$,
$\|\hat{\beta}_{lasso}-\beta^*\|_2 \leq \varepsilon$.
\end{corollary}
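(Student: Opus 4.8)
The plan is to treat this corollary as a repackaging of the extended LASSO bound (Eq.~\ref{eq:lasso:error:original}): all the probabilistic content is already established there, so what remains is purely to exhibit a single admissible $M$ of order $k\log N$ that drives the error below the target $\varepsilon$. First I would define the ``ideal'' sample size by solving $\mathcal E(M)=\varepsilon$, where $\mathcal E(M)=\Classoapprii(B-A,\delta)\|\xi\|_{L_q}\sqrt{k\log(N)/M}$. Setting $\Classoappriv(B-A,\varepsilon,\delta)=\Classoapprii(B-A,\delta)^2\|\xi\|_{L_q}^2/\varepsilon^2$ makes $\mathcal E(\Classoappriv\,k\log N)=\varepsilon$, and since $\mathcal E$ is monotonically decreasing in $M$, any $M\geq \Classoappriv\,k\log N$ yields $\mathcal E(M)\leq\varepsilon$. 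Here $\|\xi\|_{L_q}$ is a bounded constant, as shown in the proof of Prop.~\ref{prop:simple-lasso-error-bound} where $\mathbb E[|\xi|^q]\leq e$, so it can be absorbed into $\Classoappriv$.

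Next I would check that the prescribed $M=\max(\lceil M_{min}\rceil,\min(\lfloor M_{max}\rfloor,\lceil \Classoappriv k\log N\rceil))$ lies in the admissible window $[M_{min},M_{max}]$. The lower bound $M\geq M_{min}$ is immediate from the outer $\max$. For the upper bound, the key observation is that the precondition Eq.~\ref{eq:apprsparse} is exactly equivalent to $\lfloor M_{max}\rfloor\geq M_{min}$; since $\lfloor M_{max}\rfloor$ is an integer at least $M_{min}$ while $\lceil M_{min}\rceil$ is the least such integer, we get $\lceil M_{min}\rceil\leq\lfloor M_{max}\rfloor$. Hence whichever argument the outer $\max$ selects, $M\leq\lfloor M_{max}\rfloor\leq M_{max}$, so the window is nonempty and $M$ lies in it; this is what makes both the approximate-sparsity condition ($M\leq M_{max}$) and the minimum-sample condition ($M\geq M_{min}$) of Eq.~\ref{eq:lasso:error:original} hold.

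Then I would bound $\mathcal E(M)$ by a two-case split on the inner $\min$. If $\lceil \Classoappriv k\log N\rceil\leq\lfloor M_{max}\rfloor$, the $\min$ equals $\lceil \Classoappriv k\log N\rceil$, so $M\geq \Classoappriv k\log N$ and monotonicity gives $\mathcal E(M)\leq\mathcal E(\Classoappriv k\log N)=\varepsilon$. Otherwise the $\min$ equals $\lfloor M_{max}\rfloor$, and since $\lceil M_{min}\rceil\leq\lfloor M_{max}\rfloor$ the outer $\max$ forces $M=\lfloor M_{max}\rfloor$; here the hypothesis $\varepsilon\geq\mathcal E(\lfloor M_{max}\rfloor)$ directly yields $\mathcal E(M)\leq\varepsilon$. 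Combining with the previous step, Eq.~\ref{eq:lasso:error:original} applies at this $M$, so with probability at least $1-\delta$ we have $\|\hat\beta_{lasso}-\beta^*\|_2\leq\mathcal E(M)\leq\varepsilon$, which is the claim.

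I expect the only real obstacle to be the bookkeeping around the nested $\min/\max$: one must recognize that capping at $\lfloor M_{max}\rfloor$ is safe precisely because the assumption $\varepsilon\geq\mathcal E(\lfloor M_{max}\rfloor)$ guarantees the maximal admissible sample size already meets the target, while flooring at $\lceil M_{min}\rceil$ never pushes $M$ past $M_{max}$ thanks to the integrality fact $\lceil M_{min}\rceil\leq\lfloor M_{max}\rfloor$ implied by Eq.~\ref{eq:apprsparse}. No new inequalities are required beyond the monotonicity of $\mathcal E$ and this integrality argument; the statistical heavy lifting is inherited wholesale from Eq.~\ref{eq:lasso:error:original}.
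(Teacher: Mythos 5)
Your proposal is correct and follows essentially the same route as the paper's proof: solve $\mathcal E(M)\leq\varepsilon$ to extract $\Classoappriv$ (absorbing $\|\xi\|_{L_q}$ via the bound $\mathbb E[|\xi|^q]\leq e$ from Prop.~\ref{prop:simple-lasso-error-bound}), use the hypothesis $\varepsilon\geq\mathcal E(\lfloor M_{max}\rfloor)$ to make the cap at $\lfloor M_{max}\rfloor$ harmless, use Eq.~\ref{eq:apprsparse} to guarantee $\lceil M_{min}\rceil\leq\lfloor M_{max}\rfloor$, and then apply Eq.~\ref{eq:lasso:error:original}. Your version simply spells out the $\min/\max$ case analysis that the paper compresses into ``clipping.''
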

\begin{proof}
Let $\mathcal E(M) \leq \varepsilon$, we have $M \geq C_5(B-A,\delta)^2\|\xi\|_{L_q}^2k\log N/\varepsilon^2$. 
Because of $\|\xi\|_{L_q}^q \leq e$ (a fact proved in the proof of Prop.~\ref{prop:simple-lasso-error-bound}), we can further simplify it to $M \geq C_5(B-A,\delta)^2ek\log N/\varepsilon^2$. 
Let $C_7(B-A,\varepsilon, \delta)=C_5(B-A,\delta)^2e/\varepsilon^2$.
Next, by clipping it with $\lfloor M_{max} \rfloor$, $\sigma_k(\beta^*)\leq \Classoappri(\delta)\|\xi\|_{L_q}k\sqrt{\frac{\log(N)}{M}}$ is verified and $\varepsilon\geq \mathcal E(M)$ still holds due to $\varepsilon \geq \mathcal E(\lfloor M_{max} \rfloor)$; 
Equation \ref{eq:apprsparse} further ensures that there exists at least a choice of $M$ between $\lceil M_{min} \rceil$ and $\lfloor M_{max} \rfloor$.
Finally, by further clipping with $\lceil M_{min} \rceil$, all preconditions are then satisfied and applying Equation~\ref{eq:lasso:error:original} concludes the proof.
\end{proof}

\subsection{Extending the SV estimators}


We first derive an $L_2$ error bound assuming monotone utility, and later discuss an $L_{\infty}$ error bound when utility is not monotone.

As a reminder, we chose a distribution $\mathcal{P}$ such that the \AME under $\mathcal{P}$ is close enough to the \SV, and then apply LASSO to estimate the \AME with a low error. 
The LASSO error bound requires the sparsity of $\beta^*$, which utilizes the sparsity of the \AME, which is derived from the sparsity of the \SV.
When \SV is instead approximately sparse, we can still derive an approximate sparsity guarantee for the \AME and consequently for $\beta^*$.

\begin{lemma}
When the utility is bounded in $[0,1]$ and monotone, $\sigma_k(\beta^*)\leq \sqrt 2\sigma_k(\AME) \leq 3\sqrt 2\sigma_k(\SV)$ holds for both truncated uniform $Uni(\varepsilon', 1-\varepsilon')$ and $Beta(1+\varepsilon',1+\varepsilon')$ with $\varepsilon' \in (0,0.5)$.
\end{lemma}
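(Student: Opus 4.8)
The plan is to prove the two stated inequalities separately and then chain them. For the first, $\sigma_k(\beta^*) \le \sqrt{2}\,\sigma_k(\AME)$, I would use that Propositions~\ref{prop:regression-reduction} and~\ref{prop:regression-reduction-p-fea} give the exact scalar identity $\beta^* = \AME/\sqrt{v}$ for both the $1/p$- and $p$-featurizations. Since $\sigma_k$ is the $\ell_1$ error of the best $k$-sparse approximation, and the optimal support (the top-$k$ coordinates by magnitude) is scale-invariant, scaling by a constant $c$ scales the error exactly: $\sigma_k(c\,\mathbf z) = |c|\,\sigma_k(\mathbf z)$. Hence $\sigma_k(\beta^*) = \sigma_k(\AME)/\sqrt{v}$, and it only remains to check $1/\sqrt{v} \le \sqrt{2}$. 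This is immediate because $p(1-p) \le 1/4$ forces $v = \mathbb E_p[\tfrac{1}{p(1-p)}] \ge 4$, so in fact $1/\sqrt{v} \le 1/2 \le \sqrt{2}$ for both the truncated uniform and the Beta distribution.

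The heart of the argument is the second inequality, $\sigma_k(\AME) \le 3\,\sigma_k(\SV)$, for which I would isolate a one-sided ``$\sigma_k$-domination'' observation: if $a$ and $b$ are coordinatewise nonnegative with $a_n \le c\,b_n$ for every $n$, then $\sigma_k(a) \le c\,\sigma_k(b)$. The proof is a single line---take $T$ to be the support of the best $k$-sparse approximation to $b$ and use it as a (suboptimal) support for $a$, giving $\sigma_k(a) \le \sum_{n\notin T} a_n \le c\sum_{n\notin T} b_n = c\,\sigma_k(b)$. To apply this with $a=\AME$ and $b=\SV$, I need nonnegativity and the pointwise domination $\AME_n \le (1+\Delta)\SV_n$. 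Both follow from monotonicity: writing $\AME_n = \sum_S P_{\AME}(S)(U(S+\{n\})-U(S))$ and $\SV_n = \sum_S P_{\SV}(S)(U(S+\{n\})-U(S))$, the marginals are nonnegative under Assumption~\ref{asm:mono}, so both quantities are nonnegative, and the defining inequality $P_{\AME}(S)\le(1+\Delta)P_{\SV}(S)$ from Eq.~\ref{eq:Delta_def} can be pushed termwise through the nonnegative sum to yield $\AME_n \le (1+\Delta)\SV_n$.

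It then remains to bound $1+\Delta \le 3$ for each distribution, reusing computations already in hand. For $\cP = \mathrm{Beta}(1+\varepsilon',1+\varepsilon')$, Eq.~\ref{eq:beta-delta-bound} already gives $1+\Delta \le (1+1/\varepsilon')^{2\varepsilon'}$, and a short check (the exponent $2\varepsilon'\log(1+1/\varepsilon')$ is increasing on $(0,0.5)$ and equals $\log 3$ at $\varepsilon'=0.5$) shows this is at most $3$ throughout $(0,0.5)$. For $\cP = Uni(\varepsilon',1-\varepsilon')$, Eq.~\ref{eq:mul-bound-dist-connection} gives $1+\Delta = 1/(1-2\varepsilon')$, which is $\le 3$ on the relevant regime $\varepsilon' \le 1/3$. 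Combining gives $\sigma_k(\AME) \le 3\,\sigma_k(\SV)$, and composing with the first inequality yields $\sigma_k(\beta^*) \le 3\sqrt{2}\,\sigma_k(\SV)$.

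I expect the main obstacle to be the second step---not the computation, but recognizing that $\sigma_k(\AME)$ and $\sigma_k(\SV)$ cannot be compared coordinatewise (their optimal supports may differ), so the clean route is the one-sided domination lemma together with the termwise propagation of $P_{\AME}(S)\le(1+\Delta)P_{\SV}(S)$, which crucially needs the nonnegativity of the marginals, i.e.\ exactly the monotonicity hypothesis. The only mildly delicate point is the truncated-uniform constant, since $1/(1-2\varepsilon')$ blows up as $\varepsilon'\to 1/2$; the bound $1+\Delta\le 3$ holds for $\varepsilon'\le 1/3$, which covers the small-$\varepsilon'$ regime of interest for approximating the \SV.
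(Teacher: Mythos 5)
Your proof is correct and follows essentially the same route as the paper's: the identity $\beta^*=\AME/\sqrt v$ plus a lower bound on $v$ for the first inequality, and the coordinatewise domination $0\le\AME_n\le(1+\Delta)\SV_n$ (from monotonicity and Eq.~\ref{eq:ame-sv-multi-bound}) combined with the $\Delta$ bounds of Eqs.~\ref{eq:mul-bound-dist-connection} and~\ref{eq:beta-delta-bound} for the second. You are in fact slightly more careful than the paper --- the uniform pointwise bound $v\ge 4$, the explicit one-sided $\sigma_k$-domination argument, and the observation that $1/(1-2\varepsilon')\le 3$ needs $\varepsilon'\le 1/3$ (a restriction the paper's terse proof glosses over) are all correct refinements, but they do not change the substance of the argument.
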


\begin{proof}
Recall that $\beta^*=\AME/\sqrt v$, we have $\sigma_k(\beta^*)=\frac{\sigma_k(\AME)}{\sqrt{v}}\leq \sqrt 2 \sigma_k(\AME)$, where the inequality is due to $v\geq \frac{1}{2}$ for $Uni(\varepsilon', 1-\varepsilon')$ (see Corollary~\ref{corollary:efficient-sv-linfty}) and $v\geq 4$ for $Beta(1+\varepsilon',1+\varepsilon')$ (see (\ref{eq:beta-v-def})).

Next we connect $\sigma_k(\AME)$ and $\sigma_k(\SV)$. Monotonicity ensures that $\AME_n>0$ and $\SV_n>0, \forall n$. By (\ref{eq:ame-sv-multi-bound}), either $\AME_n\leq \SV_n$ or $\AME_n\leq \SV_n(1+\Delta)$ holds. Thus $\sigma_k(\AME) \leq (1+\Delta)\sigma_k(\SV)$.
Further applying (\ref{eq:mul-bound-dist-connection}) for $Uni(\varepsilon', 1-\varepsilon')$ and (\ref{eq:beta-delta-bound}) for $Beta(1+\varepsilon',1+\varepsilon')$ concludes the proof.
\end{proof}

With a similar application of the LASSO error bound as previously done in \eg Corollary~\ref{corollary:efficient-sv}, we arrive at a similar $(\varepsilon, \delta)$-approximation:
\begin{corollary}
For every constant $\varepsilon>0, \delta>0, N\geq3$, there exists constants $q>2, \varepsilon'$, a LASSO regularization parameter $\lambda$, and an $M=O(k\log N)$, such that with probability at least $1-\delta$:
\begin{enumerate}[label=(\arabic*)]
    \item $\|\sqrt v\hat\beta_{lasso}-\SV\|_\infty \leq \varepsilon$ holds when the utility is bounded in $[0,1]$ and $\sigma_k(\AME)\leq \frac{1}{\sqrt 2}\Classoappri(\delta)\|\xi\|_{L_q}k\sqrt{\frac{\log(N)}{\lceil M_{min} \rceil}}\approx \frac{1}{\sqrt 2}\Classoappri(\delta)\|\xi\|_{L_q}\sqrt{\frac{k\log(N)}{1+\log(N/k)}}$ and $\varepsilon \geq \mathcal E(\lfloor M_{max} \rfloor)\approx O(\sigma_k(\AME) / \sqrt{k})$; 
    \item $\|\sqrt v\hat\beta_{lasso}-\SV\|_2 \leq \varepsilon$ holds when the utility is monotone and $\sigma_k(\SV)\leq \frac{1}{3\sqrt 2}\Classoappri(\delta)\|\xi\|_{L_q}k\sqrt{\frac{\log(N)}{\lceil M_{min} \rceil}}\approx \frac{1}{3\sqrt 2}\Classoappri(\delta)\|\xi\|_{L_q}\sqrt{\frac{k\log(N)}{1+\log(N/k)}}$ and $\varepsilon \geq \mathcal E(\lfloor M_{max} \rfloor)\approx O(\sigma_k(\SV) / \sqrt{k})$,
\end{enumerate}
where the noise $\xi$ is defined as $\langle\beta^*, X\rangle-Y$, the \quantity{} is defined under $\cP=Uni(\varepsilon',1-\varepsilon')$ and $v=\mathbb E_{p\sim \cP}[\frac{1}{p(1-p)}]$.
\end{corollary}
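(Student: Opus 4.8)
The plan is to reproduce the structure of the exact-sparse \SV{} corollaries (Corollaries~\ref{corollary:efficient-sv} and~\ref{corollary:efficient-sv-linfty}), but to replace the exact-sparse LASSO guarantee (Prop.~\ref{prop:lasso-rate}) with the approximate-sparsity $(\varepsilon,\delta)$-approximation corollary just established. Both conclusions then follow from a single triangle-inequality decomposition. For the $L_\infty$ claim (1) I would write
\begin{equation*}
\|\sqrt v\hat\beta_{lasso}-\SV\|_\infty \leq \|\sqrt v\hat\beta_{lasso}-\AME\|_2 + \|\AME-\SV\|_\infty,
\end{equation*}
using $\|\cdot\|_\infty\leq\|\cdot\|_2$ on the first summand, and for the $L_2$ claim (2),
\begin{equation*}
\|\sqrt v\hat\beta_{lasso}-\SV\|_2 \leq \|\sqrt v\hat\beta_{lasso}-\AME\|_2 + \|\AME-\SV\|_2.
\end{equation*}
In each case I would show that $\varepsilon'$ and $M=O(k\log N)$ can be chosen so that both summands are at most $\varepsilon/2$.

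For the first summand, since $\beta^*=\AME/\sqrt v$, it equals $\sqrt v\,\|\hat\beta_{lasso}-\beta^*\|_2$, so I would apply the approximate-sparsity $(\varepsilon,\delta)$-corollary to bound $\|\hat\beta_{lasso}-\beta^*\|_2\leq \varepsilon/(2\sqrt v)$, absorbing the finite factor $\sqrt v$ (and the feature range $[A,B]$) into the constant $\Classoappriv(B-A,\varepsilon,\delta)$. Its precondition, the approximate sparsity of $\beta^*$ (Eq.~\ref{eq:apprsparse}), is exactly what the preceding sparsity-propagation lemma delivers: in case (1) only the inequality $\sigma_k(\beta^*)\leq\sqrt2\,\sigma_k(\AME)$ is needed (this half uses only $v\geq\tfrac12$, not monotonicity), and combined with the hypothesis $\sigma_k(\AME)\leq \tfrac{1}{\sqrt2}\Classoappri\|\xi\|_{L_q}k\sqrt{\log N/\lceil M_{min}\rceil}$ it gives $\sigma_k(\beta^*)\leq\Classoappri\|\xi\|_{L_q}k\sqrt{\log N/\lceil M_{min}\rceil}$; in case (2) the full chain $\sigma_k(\beta^*)\leq\sqrt2\,\sigma_k(\AME)\leq 3\sqrt2\,\sigma_k(\SV)$ (valid under monotone utility) together with the hypothesis on $\sigma_k(\SV)$ yields the same conclusion. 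The achievability condition $\varepsilon\geq\mathcal E(\lfloor M_{max}\rfloor)$ is assumed in each hypothesis, and $q>2$ is inherited from the extended LASSO bound (Eq.~\ref{eq:lasso:error:original}). For the second summand I would pick $\varepsilon'$ small: for (1), Eq.~\ref{eq:TU-AME-SV-bound-linfty} gives $\|\AME-\SV\|_\infty\leq 8\varepsilon'$, so $\varepsilon'=\varepsilon/16$ works; for (2), Lemma~\ref{lemma:TU-AME-SV-bound} gives $\|\AME-\SV\|_2\leq 4\varepsilon'+2\sqrt{2\varepsilon'}$, so $\varepsilon'=\tfrac18(\sqrt{\varepsilon+1}-1)^2$ suffices, matching the choice in Corollary~\ref{corollary:efficient-sv}.

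The main obstacle is the interlocking order of the choices rather than any single hard estimate. The second-summand bound forces a small $\varepsilon'$, but $\varepsilon'$ simultaneously (a) fixes the distribution $\cP=Uni(\varepsilon',1-\varepsilon')$ under which $\AME$ — hence $\sigma_k(\AME)$, the quantity constrained in hypothesis (1) — is defined, and (b) fixes $v=\mathbb E_{p\sim\cP}[\tfrac{1}{p(1-p)}]$, which rescales the feature bounds $[A,B]$ and therefore the constants $\Classoapprii$ and $\Classoappriv$ in the LASSO bound. I would resolve this by fixing $\varepsilon'$ first from $\varepsilon$ alone via the second-summand requirement, then treating $v$ and $[A,B]$ as fixed finite constants, and only then invoking the approximate-sparsity LASSO corollary with the resulting $\Classoappriv(B-A,\varepsilon,\delta)$. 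The remaining work is bookkeeping: checking that the precondition Eq.~\ref{eq:apprsparse} (an upper bound on $\sigma_k(\beta^*)$ evaluated at $M=\lceil M_{min}\rceil$) and the achievability condition are both inherited through the $\sqrt2$ and $3\sqrt2$ factors, and that the clipped sample size stays $O(k\log N)$ within $[\lceil M_{min}\rceil,\lfloor M_{max}\rfloor]$.
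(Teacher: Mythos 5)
Your proposal is correct and follows essentially the same route as the paper, which proves this corollary only by pointing back to the argument of Corollary~\ref{corollary:efficient-sv}: a triangle-inequality split into a LASSO estimation term (handled by the approximate-sparsity $(\varepsilon,\delta)$-corollary, with the precondition of Eq.~\ref{eq:apprsparse} supplied by the chain $\sigma_k(\beta^*)\leq\sqrt2\,\sigma_k(\AME)\leq3\sqrt2\,\sigma_k(\SV)$) and an $\AME$-to-$\SV$ discrepancy term controlled by choosing $\varepsilon'$ small. If anything your bookkeeping is more careful than the paper's: you correctly observe that case (1) needs only the monotonicity-free half of the sparsity-propagation lemma, you fix $\varepsilon'$ before instantiating $v$ and the feature bounds to avoid the apparent circularity, and your choice $\varepsilon'=\varepsilon/16$ for the $L_\infty$ case actually yields the claimed $\varepsilon/2$ bound (the $\varepsilon'=\varepsilon/4$ used in Corollary~\ref{corollary:efficient-sv-linfty} only gives $2\varepsilon$ via Eq.~\ref{eq:TU-AME-SV-bound-linfty}).
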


The result of Beta distribution is similar.

\section{Evaluation Details}
\label{appendix:eval-details}

\subsection{Warm-starting Optimization and Hyperparameter Tuning}
\label{appendix:eval-details-ws}
Given the high cost of training deep learning models, we support an optimization that uses warm-starting as a proxy for full model training. Specifically, instead of training each submodel from scratch, we fine-tune the main model on each subset $S$ for a fixed number of iterations, usually the number of iterations in one main model training epoch. Although this results in a more noisy estimate of $U(S)$, our estimator is able to handle the noise, yielding an overall speedup in wall clock time.

With warm-starting, instead of learning a model from the subset $S$, we ``unlearn'' the signal from the points not in $S$. This has two implications on our choice of $\mathcal{P}$.
First, changing the outcome of a given query usually requires removing all its contributors, as even a small number of them is sufficient to maintain the signal learned in the main model. Hence, we only consider lower inclusion probabilities ($p\leq0.5$).
Second, warm-starting does not collapse model even on very small data subsets, as opposed to learning the model from scratch. We can thus consider smaller values of $p$, and settle on the range $\mathcal{P}=Uni\{0.01, 0.1, 0.2, 0.3, 0.4, 0.5\}$.

\mypara{Hyperparameters of model training for warm-starting}
Warm-start training requires specifying the hyper-parameters (\eg batch size, learning rate, training time, etc.). 
If the original learning rate changed adaptively during the course of training, \eg when using a training approach such as Adam~\cite{kingma2017adam}, we use the learning rate and batch size for the final epoch and run fine-tuning for 
one epoch on the data subset. 

Otherwise we fine-tune every subset model for the same fixed number of iterations, and vary batch size proportionally to the number of training examples included, such that every datapoint is iterated through roughly only once (\ie one epoch on the data subset). Moreover we observe that when batch size is below a certain number the models soon all collapse. Therefore, we lower bound the batch size by that number, which is 100 for both CIFAR10 datasets and 20 for the ImageNet dataset. The reason of such a co-design is the following. To make the numbers comparable, the subset models should be all fine-tuned with the same number of steps. If we still use a constant batch size, every datapoint will be visited different number of times across different data subset sizes, making the impact of one source to be less comparable. Thus we decide to vary the batch size accordingly such that each datapoint is visited roughly once. 
In addition, we choose the learning rate such that the validation accuracy of most subset models drops by roughly 20\%, a not-too-large but still significant number. The reason is that the learning rate should be large enough that when no or few poison is included, the subset model should have the poisoning mostly erased, and vice versa. 

\ignore{
We now add some experiments to support the arguments above.
\mathias{I am removing that so far, which I think is basically Panda's suggestion lower as well. Mostly, it's because it touches on issues we never mention in the paper! We probably want to keep that in mind for later, but for now: (1) it might confuse people and open cans of worms for them; and (2) it might be considered a bad use of appendix to discuss the contribution outside of the space requirement.}

\begin{figure}[h]
    \centering
    \begin{subfigure}{0.48\textwidth}
    \includegraphics[width=1\textwidth]{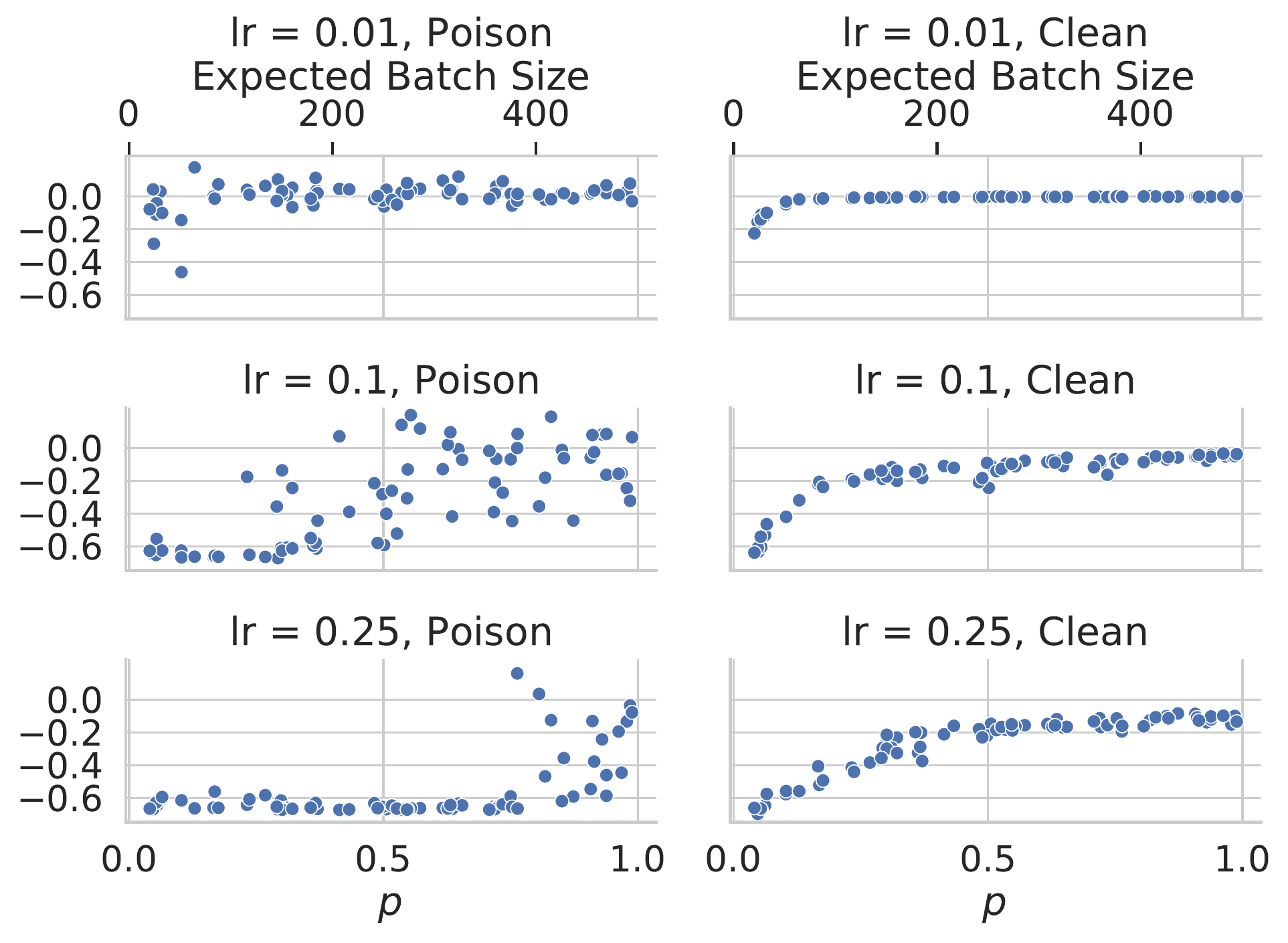}
    \caption{No lower bounding batch size.}
    \label{fig:bsvslr-cifar10-20-ft}
    \end{subfigure}
    \hfill
    
    \begin{subfigure}{0.48\textwidth}
    \centering
    \includegraphics[width=1\textwidth]{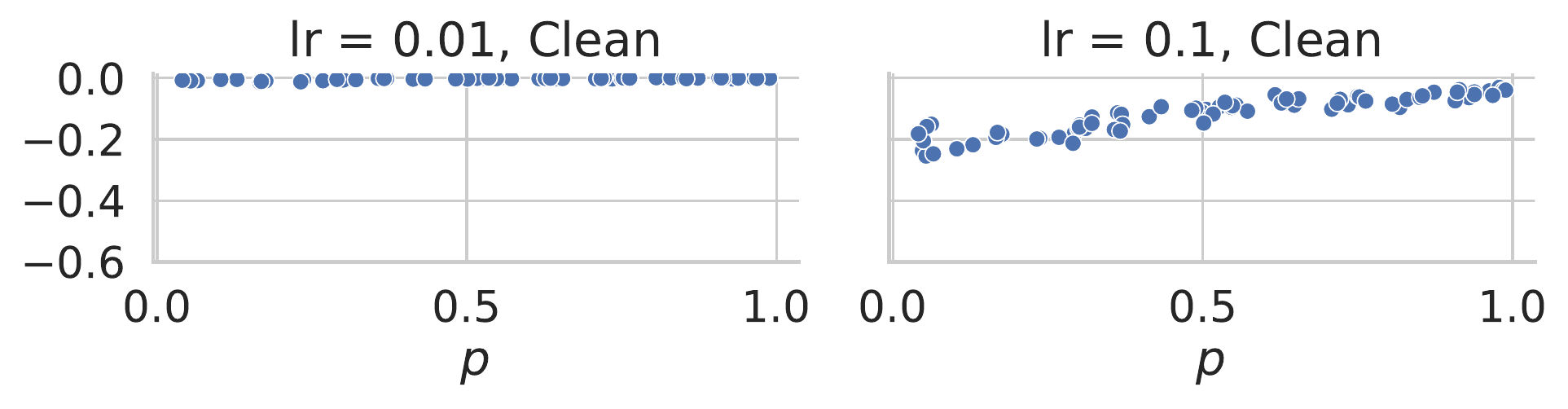}
    \caption{Lower bounding batch size by 100.}
    \label{fig:bsvslr-cifar10-20-ft-lb}
    \end{subfigure}
    \caption{Effect of batch size and learning rate in warm-starting on CIFAR10-20. The y-axis is the change on the attack success rate (left) on the poison validation set and the normal accuracy (right) on the clean validation set. Negative values mean drops on the rate/accuracy compared to the initial model warm-starting is loading from. Each point denotes a subset model fine-tuned on data subset sampled using $p$, thus the expected batch size is $p$ fraction of the original batch size.}
    \label{fig:bsvslr}
\end{figure}




\notepanda{For the next two: These are not hyperparameters that show up at the beginning of the section. We should either add them there, move them to a separate subsection (Microbenchmarks for fine-tuning) or to the appendix. I strongly vote for appendix.}

\emph{The effect of lower bounding batch size:}
We conduct a microbenchmark on CIFAR10-20 dataset, where we fine-tune models using different learning rates on data subsets sampled with $p$ varying from 0.01 to 0.99, without lower bounding the batch size. We split the validation set into two halves and poison one of the set to compute the attack success rate and keep the other one untouched to compute the normal validation accuracy. Both numbers are shown as the change compared to the initial model that warm-starting starts with. The validation accuracy is shown on the right column of Fig. \ref{fig:bsvslr-cifar10-20-ft}, which soon drops when the batch size is lower than a certain number (\ie the low $p$ range) in lr=0.01 and 0.1. This effect disappears under high lr such as 0.5, since now every model has a poor accuracy. We then lower bound the batch size and the result is shown in Fig. \ref{fig:bsvslr-cifar10-20-ft-lb}. The validation accuracy now all remain at a reasonable level for lr=0.1 and 0.01.

\emph{Why pick a learning rate that decreases the accuracy by 20\%?} 
Recall that the goal is to erase poisoning while not break the model. To assess how much the poisoning is erased, we show the attack success rate is shown in the left column in Fig. \ref{fig:bsvslr-cifar10-20-ft}. When $p$ is low and samples nearly no poisons, learning rate of 0.01 is too small to erase the poisoning while 0.5 is too large and erase the poisoning by completely breaking the models; 0.1 strikes the right balance. While in practice the learning rate should be decided during offline phase when we have no information of the attack/queries to compute the attack success rate. Therefore we also compute the normal accuracy on the other clean validation set, and use it to empirically decide learning rate. The ideal learning rate will have some non-negligible influence on the accuracy but not to the extend that totally breaks the model, such as the right middle graph of Fig. \ref{fig:bsvslr-cifar10-20-ft} which decreases it by 20\% for most models. In practice we use this number as the principle to choose learning rate.
}


\subsection{Datasets and Attacks}
\label{appedix:eva-setting-details}
We provide more context and details on the datasets, models, and attacks from Table \ref{tab:dataset-summary}.

\mypara{Datasets} We evaluate our approach using the following four data sets and inference tasks:
\begin{compactenum}[a]
\item \textit{CIFAR10}, an image classification task with ten classes~\cite{krizhevsky2009learning}. We consider each individual training sample as a single source, and use the \texttt{ResNet9} model and training procedure described in~\cite{baek_wbaektorchskeleton_2020}. For one of the attacks (Poison Frogs~\cite{Shafahi2018PoisonFT}), we use \texttt{VGG-11} instead of \texttt{ResNet9}, and use transfer learning to specialize a model trained on the full CIFAR10 data using the training procedure described in~\cite{noauthor_kuangliupytorch-cifar_nodate}. Transfer learning is used to specialize the model for 10\% of the CIFAR10 training data. We only re-train the last layer and freeze every other layers.
\item \textit{EMNIST}, a hand-written digit classification task with examples from thousands of users each with their own writing style \cite{cohen2017emnist}. Each user is a data source with multiple examples. We use models and training procedures from~\cite{noauthor_pytorchexamples_nodate}.
\item \textit{ImageNet}, a one-thousand class image classification task~\cite{ILSVRC15}. The training data includes more images than CIFAR10 (over 1 million vs $50000$), and each image has a higher resolution (average of $482$x$418$ pixels vs $32$x$32$ pixels), thus increasing the training overhead. For ImageNet, we group training data into sources using the URL where the image was collected. Specifically, we treat each URL path (excluding the item name) as a source, and then combine all paths contributing fewer than 10 images into one source. This results in a dataset with $5025$ sources. We use a \texttt{ResNet50}~\cite{he2016deep} model trained using the procedure from~\cite{noauthor_pytorchResnet_nodate}.
\item  \textit{NLP}, a sentiment analysis task on 1 million book reviews written by 307k Amazon users~\cite{nlpdset}. Most users contribute fewer than $1000$ reviews: we select and combine multiple such users at random when producing sources. This results in $1000$ sources, $7$ of which contain reviews from a single user, and the rest group random users into a single source. We use the model and training procedure described in~\cite{bentrevett_rnn_2019} to produce a binary classifier that uses the review text to predict whether or not the review has a positive score (\ie greater than 3).
\label{sec:eval-1-d}
\end{compactenum}

\mypara{Attacks} We evaluate our approach using three types of poisoning attacks. (1) \textit{Trigger attacks}~\cite{chen2017targeted} which rely on a human-visible trigger to poison models. On image detection tasks, we use a 5x5 red square added to the top of each image or a watermark as our trigger. Column $k$ in Table~\ref{tab:dataset-summary} lists the number of poisoned sources. For the NLP task we use a neutral sentence as a trigger. (2) A \textit{label-flipping attack} on EMNIST, where a poison source copies all the data from a benign user (in our case user 171) and associates a single label (in our case \texttt{6}) for all of this copied data.
(3) The \textit{Poison Frogs attack}~\cite{Shafahi2018PoisonFT} on CIFAR10, which is a clean-label attack that introduces imperceptible changes to training images that will poison a target model in a transfer learning setup, where the last few layers of an existing pre-trained model are refined using additional training data to improve inference performance.
Figure~\ref{fig:poison-example} shows examples of trigger and poison frog attacks.

\mypara{Hyperparameters}
We evaluate the impact of hyperparameters using micro-benchmarks in \S\ref{sec:micro}. Unless otherwise stated, we use $q=0$ for knockoffs. As we explain in \S\ref{sec:micro}, this is a conservative choice that seeks to minimize the false discovery rate. We also use $\mathcal{P}=Uni\{0.2,0.4,0.6,0.8\}$ for training-from-scratch and $\mathcal{P}=Uni\{0.01, 0.1, 0.2, 0.3,0.4,0.5\}$ for warm-start training.

We use Glmnet~\cite{noauthor_bbalasub1glmnet_python_nodate,glmnet} for the LASSO implementation. For the LASSO regularization parameter $\lambda$, a correct choice is required by our error bound (Prop. \ref{prop:lasso-rate}), which unfortunately we have no information on. Bypassing this obstacle remains as an interesting future work. Indeed this has been studied in \cite{lecue2017regularizationfollowup} and other slightly weaker LASSO error bounds (\eg Theorem 3.5.1. from \cite{pauwels2020lecture}) but with known $\lambda$ exist.
In practice we choose the $\lambda$ using a common empirical procedure~\cite{glmnet} that runs 20-fold cross-validation (CV) and chooses the largest $\lambda$ (sparsest model) with errors within one standard deviation of the best CV error, which we denote as $\lambda_{1se}$. For the SV estimation, we use $\lambda_{min}$ that gives the best CV error, as we are not seeking a sparsest model here.

 \subsection{Ablation Study for Different Parts of the Methodology and
Parameters}
\label{sec:micro}
Next we use microbenchmarks to evaluate the effect that different hyperparameters have on our methods. Then, we show a discussion of hyperparameters for warm starting, the benefits of using Knockoffs, and a comparison between LASSO and diff-in-means, a straight-forward \quantity{} estimator that uses empirical means to estimate expectations. Finally, we discuss the tradeoff between using training-from-scratch and warm-starting. 

\subsubsection{Effect of Hyperparameters}
\label{subsec:hyperparameters-effect}
\mypara{Effect of the Target FDR Level $q$}
\label{subsec:q-effect}
All results presented thus far were with $q=0$. Here, we vary $q$, which allows us to trade-off precision to achieve higher recall.
For this microbenchmark we use $\lambda_{min}$, which we define as the $\lambda$ with the best cross-validation error, instead of $\lambda_{1se}$ which we use in the rest of our evaluation. This is because the precision provided by LASSO imposes a lower-bound on the precision achieved using Knockoffs', and in our experiments we found that LASSO alone can achieve high-precision when $\lambda_{1se}$ is used, making it harder to observe effects at lower values of $q$.
Figure~\ref{fig:varyq-cifar10-50} shows the results of varying $q$ in the CIFAR10-50 setting. We can see that while increasing $q$ does lead to a small increase in recall, it comes at significant cost to precision.

\begin{figure}[t]
    \centering
    \includegraphics[width=0.6\columnwidth]{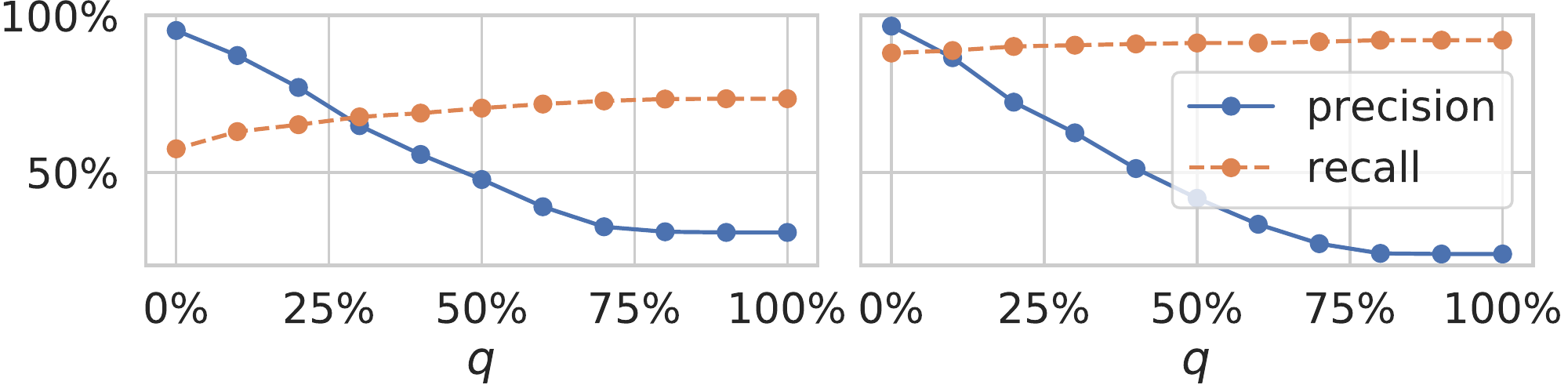}
    \caption{Effect of $q$ in Knockoffs on CIFAR10-50. Left: training-from-scratch; Right: warm-start training.}
    \label{fig:varyq-cifar10-50}
\end{figure}


\begin{figure}[t]
    \centering
    \begin{subfigure}{.49\textwidth}
        \includegraphics[width=\columnwidth]{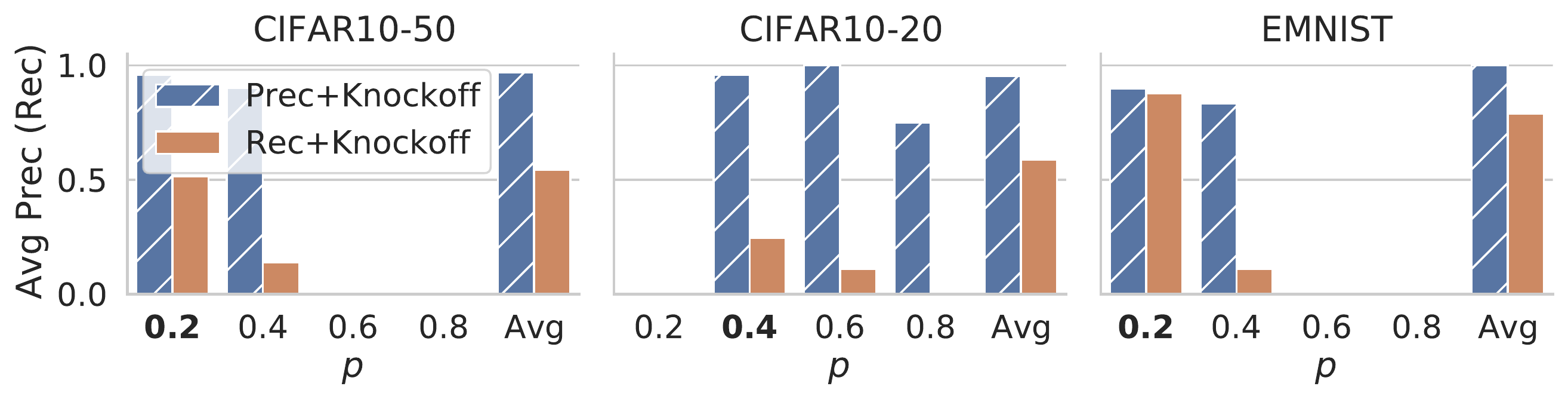}
        \caption{Training-from-scratch}
    \end{subfigure}
    \hfill
    \begin{subfigure}{.49\textwidth}
        \includegraphics[width=\columnwidth]{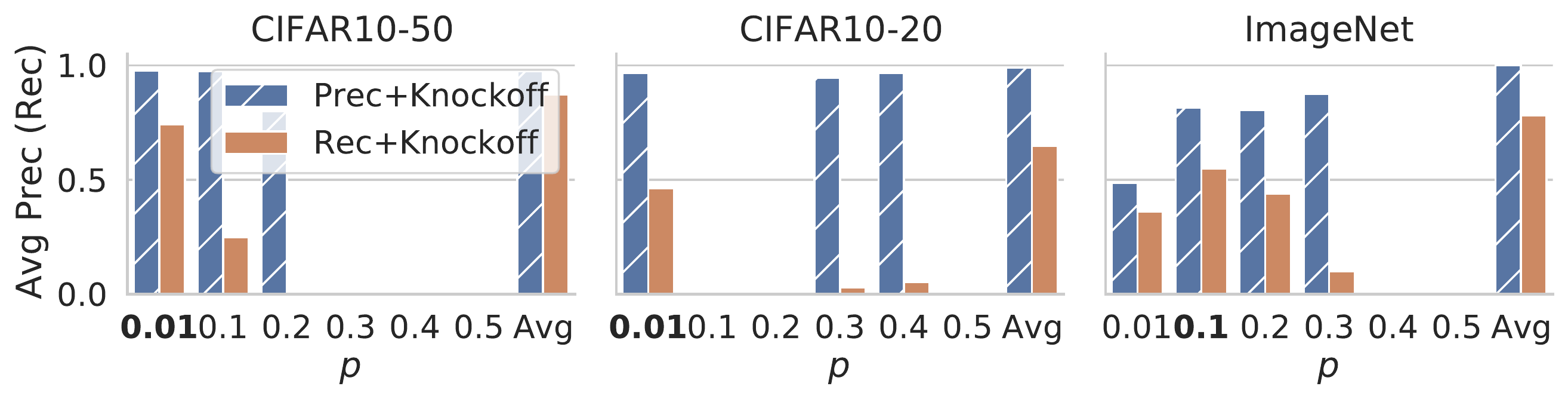}
        \caption{Warm-starting}
    \end{subfigure}
    \caption{LASSO results run only on observations from a single $p$ value, we highlight the $p$ value providing the best result. \texttt{Avg} is the result from using the entire $p$ value grid.}
    \label{fig:lasso-varyp}
\end{figure}


\mypara{Effect of $p$}
Next we address the question of how values of $p$ affect our method. 
Figure~\ref{fig:lasso-varyp} shows our metrics on subset models drawn with one single value of $p$.
We can see that no single value of $p$ suffices across datasets and training algorithms. For instance, training-from-scratch $p=0.2$ works well for CIFAR10-50 and EMNIST, but not for CIFAR10-20.
This difference between CIFAR10-50 and CIFAR10-20 is likely due to the power of the attack: both use the same attack, but CIFAR10-50 uses a larger number of poisoned sources.
This means that smaller values of $p$ are more likely to select poisoned sources in CIFAR10-50, explaining our observations.
For warm starting, we confirm that small values of $p$ ($p=0.01$) perform better than larger ones. These results thus show that (a) no single value of $p$ suffices for all models, thus motivating our use of a grid, and (b) the grid $\mathcal{P}$ can be tuned when a prior is available.

\mypara{Effect of adding less informative $p$s}
One might wonder how including sub-optimal $p$ values impacts our results? To answer this question, in Figure~\ref{fig:lasso-varyp}, we compare the single $p$ results to that of using all $p$s in the grid (shown as \texttt{avg}). We find that while very uninformative $p$s can hurt our results (\eg with EMNIST), this is rare and the effect is small. On the other hand, in many cases the grid result is better than the result from just using the single best $p$.
\ignore{
\subsection{Distribution of $AME$ defined on single-$p$ subset population.}
Previous experiments for evaluating the effect of different $p$s are only assessing through the final result, \ie the precision and recall. To further illustrate its effect, we study the $AME$ measured on each $p$. We bin the observations according their $p$ and calculate the diff-in-means estimates on each bin group. Result in \ref{fig:diff-in-means} shows that some $p$ is good at separating the poisoned and clean sources, while others do not. The best $p$ observed here is generally consistent with that in Fig. \ref{fig:lasso-varyp}, and further justifies the need for varying $p$.
}

\mypara{Effect of different $p$s on different queries}
\begin{table}[t]
\centering
\begin{tabular}{lllll}
Query   & Prec\_0.4 & Rec\_0.4 & Prec\_0.6 & Rec\_0.6 \\
\#3  & \textbf{90.9}      & \textbf{50.0}     & nan       & 0.0      \\
\#7  & \textbf{91.7}      & \textbf{55.0}     & nan       & 0.0      \\
\#9  & 100.0     & 10.0     & \textbf{100.0}     & \textbf{20.0}     \\
\#10 & 100.0     & 15.0     & \textbf{100.0}     & \textbf{25.0}     \\
\#14 & \textbf{100.0}     & \textbf{40.0}     & nan       & 0.0      \\
\#18 & 75.0      & 15.0     & \textbf{100.0}     & \textbf{20.0}    
\end{tabular}
\caption{Single-$p$ LASSO results for $p=0.4$ and $p=0.6$ on selected queries, CIFAR10-20 training-from-scratch. $p=0.4$ has better average results (see Fig. \ref{fig:lasso-varyp}), but $p=0.6$ outperforms for some queries.}
\label{tab:individual-query}
\end{table}
As discussed above, no single value of $p$ suffices across all datasets. Surprisingly, as shown in Table~\ref{tab:individual-query}, we found that even within the same dataset, the best $p$ can differ \emph{across queries}.

\mypara{Comparison against fixing $p=0.5$}
We also compare with a simple distribution where each subset is sampled with equal probability. This corresponds to using fixed $p=0.5$. The result is much worse than our default sampling over a grid as shown in Table \ref{tab:p0.5}.
\begin{table}[h]
\centering
\begin{tabular}{llllll}
    & precision & recall & $c$  \\
\hline
\begin{tabular}[c]{@{}l@{}}CIFAR10-50-tfs \end{tabular} &        100        &      3.2     &8    \\ 
\begin{tabular}[c]{@{}l@{}}CIFAR10-50-ws\end{tabular}        &  100       &         1.8   &    12 \\
EMNIST                                                                      & 100         & 7.8      & 16
\end{tabular}
\caption{Experiment results of using fixed $p=0.5$. ``tfs'' denotes training-from-scratch and ``ws'' denotes warm-starting.}
\label{tab:p0.5}
\end{table}


\subsubsection{Benefit of LASSO and Knockoffs}
\label{appendix:pure-lasso}
We evaluate the benefit of using LASSO over the more straight-forward diff-in-means that replaces the expectation with empirical mean in Eq. \ref{eq:amep}, and the power of Knockoffs in controlling the FDR. First, to have a clean comparison against diff-in-means, we only run LASSO purely without adding the Knockoffs component. The selection is replaced with a procedure that selects all positive coefficients; For diff-in-means we select a threshold such that the recall matches that of pure LASSO for easy comparison. The result in Table \ref{tab:comparision} shows that LASSO performs stably with valid precision while diff-in-means is fragile especially under high noise as in warm-starting. Next we compare this pure LASSO result against LASSO+Knockoff with varying number of observations in Figure~\ref{fig:grow-c-full}. We can see that LASSO+Knockoffs ensure that precision remains high even with a small number of samples. This is in contrast to LASSO which reduces precision when the number of samples decrease. Thus LASSO+Knockoffs allow our technique to be safely used even when an insufficient number of observations are available.

\begin{table}[t]
\centering
\begin{tabular}{llllllll}
        & \begin{tabular}[c]{@{}l@{}}Prec+\\Pure\end{tabular} & \begin{tabular}[c]{@{}l@{}}Rec+\\Pure\end{tabular} & \begin{tabular}[c]{@{}l@{}} Prec+\\Diff\end{tabular} & \begin{tabular}[c]{@{}l@{}}Rec+\\Diff\end{tabular} & $c$  \\
        \hline
Poison Frogs & 84.0 & 100.0 & \textbf{100.0} & 100.0 & 8  \\
CIFAR10-50-tfs  & \textbf{88.3}      &  60.9                & 28.5&60.9               & 16 \\
CIFAR10-20-tfs  & \textbf{79.9} & 64.0  &76.7                 & 64.0   & 8  \\
EMNIST      & \textbf{98.9} & 84.4  & 62.9               & 84.4 & 16  \\\hline
NLP &\textbf{97.9}&98.2& 5.6                  & 98.2  &24\\
CIFAR10-50-ws &\textbf{76.4}&89.6& 41.6                 & 89.6   &24\\
CIFAR10-20-ws &\textbf{98.2}&66.0&9.2                  & 66.0 &48\\
ImageNet &\textbf{100.0}&77.0&94.2                 & 77.0  &12
\end{tabular}
\caption{Average precision and recall of \textit{pure LASSO} and \textit{diff-in-mean}. \emph{tfs} and \emph{ws} denote training-from-scratch and warm-starting respectively.}
\label{tab:comparision}
\end{table}

\begin{figure*}[t]
    \centering
    \begin{subfigure}{0.85\textwidth}
        \includegraphics[width=\textwidth]{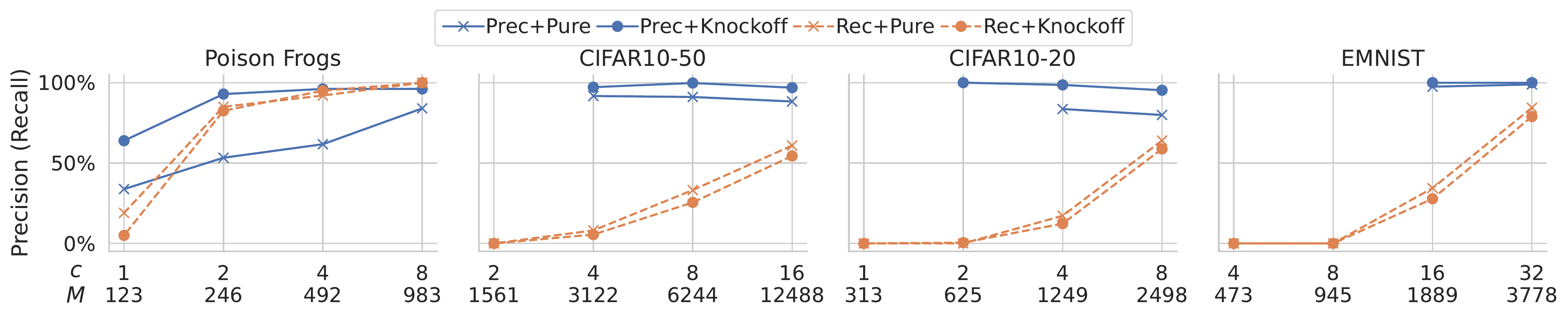}
        \caption{Training-from-scratch}
    \end{subfigure}
    \hfill
    \begin{subfigure}{0.85\textwidth}
        \includegraphics[width=\textwidth]{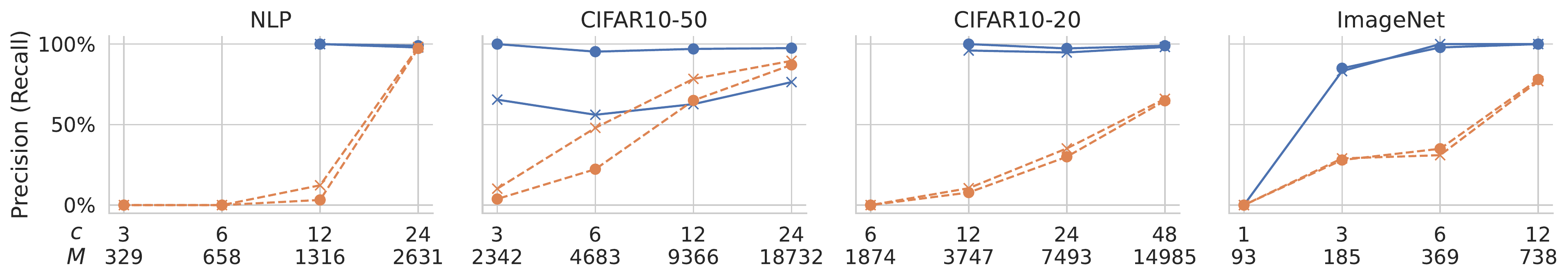}
        \caption{Warm-starting}
    \end{subfigure}
    \caption{Effect of growing $c$. ``Prec+Pure'' is the precision of LASSO without knockoffs; ``Prec+Knockoff'' is the precision of LASSO with knockoffs. ``Rec+Pure'' and ``Rec+Knockoff'' show the recall for both setups. Poison Frogs uses transfer learning and is not amenable to fine-tuning.}
    \label{fig:grow-c-full}
\end{figure*}

\subsubsection{Training-from-scratch vs Warm-starting}
Recall that warm-starting can greatly speed up the training procedure, but with a potential drawback of introducing higher noise for LASSO. In our evaluation, training CIFAR10-50 from scratch took 108.5 seconds, compared to 7.9 seconds when using warm-start training, representing a speedup of 13.8X. However, more observations might be required when using warm-start training. For example, in Table~\ref{tab:tfs-vs-ft} we find that for CIFAR10-20 the warm-start model achieves very low recall ($0.8\%$), while training from scratch achieves reasonable recall ($58.8\%$). We believe this is because of noise due to the number of poison sources required to trigger the attack. On the other hand, we see that CIFAR10-50 does not suffer from this problem, and in fact warm-start provides better results than training from scratch. This is because lower values of $p$ work well to trigger the attack for CIFAR10-50, and warm starting over-samples this region. Adding more observations when using warm-start training with CIFAR10-20 improves this situation, and we find that when using $c=48$ we can achieve a recall of $64.8\%$.

\begin{table}[t]
\centering
\resizebox{0.5\columnwidth}{!}{
\begin{tabular}{llllll}
          & \multicolumn{2}{l}{Training-from-scratch} & \multicolumn{2}{l}{Warm-starting} & \multirow{2}{*}{$c$} \\
          & Prec                 & Rec                & Prec             & Rec            &                    \\\hline
CIFAR10-50 & 96.9             & 54.4             & \textbf{97.0}         & \textbf{77.6}         & 16                 \\
CIFAR10-20 & \textbf{95.3}             & \textbf{58.8}             & 100.0         & 0.8         & 8                 
\end{tabular}
}
\caption{Precision and recall comparison for training-from-scratch vs warm-starting under the same number of observations.}
\label{tab:tfs-vs-ft}
\end{table}

\subsection{Runtime Evaluation}
\label{appendix:runtime}

\begin{figure*}[t]
    \centering
    \begin{subfigure}{0.49\textwidth}
        \includegraphics[width=\textwidth]{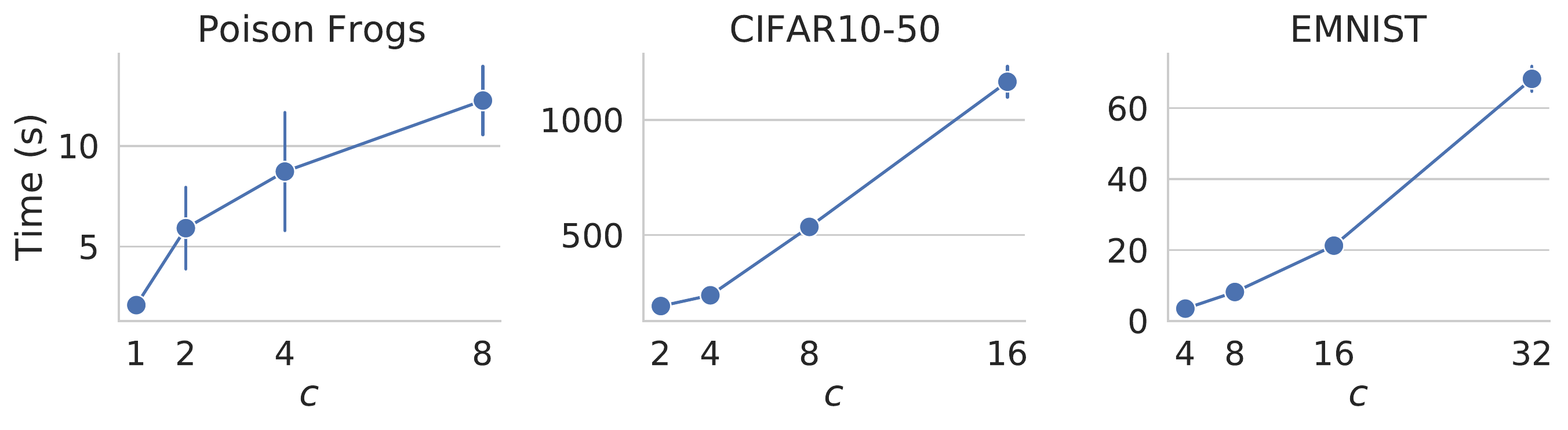}
        \caption{Training-from-scratch}
    \end{subfigure}
    \hfill
    \begin{subfigure}{0.49\textwidth}
        \includegraphics[width=\textwidth]{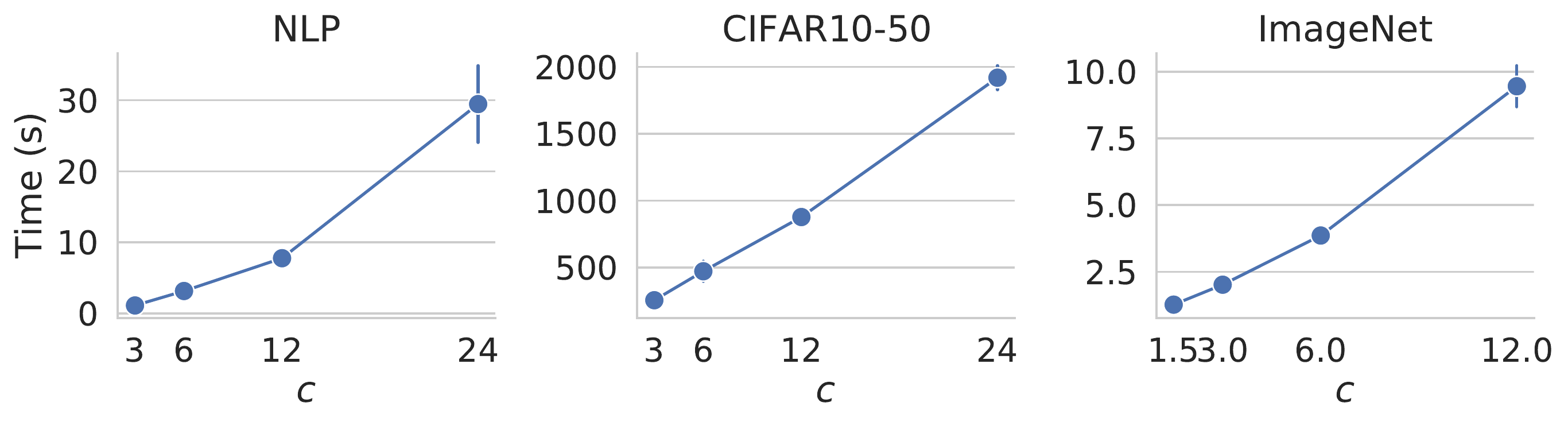}
        \caption{Warm-starting}
    \end{subfigure}
    \caption{Average run times of LASSO on the queries as we grow $c$ on training-from-scratch and warm-starting benchmarks. The error bar draws the standard deviation. CIFAR10-20 results are almost identical and thus omitted.}
    \label{fig:time-grow-c}
\end{figure*}

\mypara{Run times of LASSO}
The time to run LASSO grows linearly with the number of observations, as shown in Figure~\ref{fig:time-grow-c}. 
Together with Figure~\ref{fig:grow-c-full} we can see that our technique can achieve good precision using a small number of observations, which translates to a small query time. Precision and recall improve with more observations, but this increases the time taken to execute a query.

    

    

\mypara{Run times of inference}
\label{appendix:eval:inference}
We report the total time of running ImageNet inference on a single RTX8000 GPU for a batch of 40 queries on 739 models, which corresponds to $c=12$. We implement the inference of a model as three steps: allocating the model memory, loading the model from disk to GPU, and the actual GPU computation. They take 293.6, 447.5, and 584.9 seconds respectively for 739 models in total. The current throughput is 1.81 queries per minute, and the latency is 22.1 minutes. We note that both numbers could be further optimized (\eg the model memory allocation can be done only once and reuse for every model; the model loading and actual computation can be pipelined, etc.), and one can also batch more queries together to further improve throughput. In this paper we focus on improving precision and recall. We plan to focus on further performance optimizations in the future.

\subsection{Additional Evaluation of Comparison with Existing Works}
\label{appendix:comparisons}

\subsubsection{Additional Evaluation of SCAn}
\label{apsec:scan}

\begin{table}[t]
\centering
\begin{tabular}{lllll}
& \multicolumn{2}{c}{Ours} & \multicolumn{2}{c}{Repr.} \\
        & Prec & Rec         & Prec & Rec   \\\hline
ImageNet & 100 & 78.0        & 85.9 & 78.0  \\ 
CIFAR10-50  & 96.9 & 54.4    & 99.9 & 54.4 \\
CIFAR10-20  & 95.3 & 58.8    & 68.9 & 58.8  \\
EMNIST      & 100 & 78.9     & 100 & 78.9
\end{tabular}
\caption{Comparison with Representer Points at the same recall level. Representer Points use best tuned $\lambda$ assuming knowledge of ground truth.}
\label{tab:baseline-best}
\end{table}

\begin{table}[t]
\centering
\begin{tabular}{llllllll}
name            & npoi & prec\_dis  & rec\_dis  & prec & rec & AI        \\ \hline
ImageNet        & 5    & 100 & 80.0 & 100    & 100   & 5.500744  \\
CIFAR10-50  & 50   & 96.4  & 54.0 & 2.1      & 100   & 0.788898  \\
CIFAR10-50* & 100  & 100 & 54.0 & 6.2      & 100   & 1.682893  \\
CIFAR10-50* & 101  & 100 & 54.5 & 7.9      & 100   & 1.744635  \\
CIFAR10-50* & 102  & 100 & 53.9 & 10.0     & 100   & 1.832493  \\
CIFAR10-50* & 103  & 100 & 54.4 & 10.4     & 100   & 1.826089  \\
CIFAR10-50* & 104  & 100 & 54.8 & 100    & 99.0    & 8.742186  \\
CIFAR10-50* & 105  & 100 & 54.3 & 100    & 99.0    & 8.916950  \\
CIFAR10-50* & 106  & 100 & 54.7 & 100    & 98.1    & 8.926177  \\
CIFAR10-50* & 107  & 100 & 54.2 & 100    & 98.1    & 9.054615  \\
CIFAR10-50* & 108  & 100 & 54.6 & 4.1      & 100   & 0.716676  \\
CIFAR10-50* & 109  & 100 & 54.1 & 100    & 98.2    & 9.374960  \\
CIFAR10-50* & 110  & 100 & 54.5 & 100    & 98.2    & 9.583294  \\
CIFAR10-50* & 120  & 100 & 54.2 & 100    & 97.5    & 11.441413 \\
CIFAR10-50* & 130  & 100 & 54.6 & 100    & 97.7    & 12.932239 \\
CIFAR10-50* & 140  & 100 & 54.3 & 100    & 98.6    & 13.600518 \\
CIFAR10-50* & 150  & 100 & 54.7 & 100    & 98.7    & 15.521987 \\
CIFAR10-20  & 20   & 11.1  & 60.0 & 0.9      & 100   & 0.700744  \\
CIFAR10-20* & 40   & 27.9  & 60.0 & 1.7      & 100   & 0.793756  \\
CIFAR10-20* & 60   & 46.1  & 58.3 & 2.5      & 98.3    & 0.894113  \\
CIFAR10-20* & 80   & 52.8  & 58.8 & 3.4      & 98.8    & 0.970892  \\
CIFAR10-20* & 100  & 74.7  & 59.0 & 4.3      & 99.0    & 1.093217  \\
CIFAR10-20* & 120  & 80.5  & 58.3 & 5.2      & 99.2    & 1.253165  \\
CIFAR10-20* & 140  & 87.2  & 58.6 & 6.0      & 98.6    & 1.419778  \\
CIFAR10-20* & 160  & 89.5  & 58.8 & 7.0      & 98.8    & 1.527789  \\
CIFAR10-20* & 180  & 93.0  & 58.9 & 7.9      & 98.3    & 1.746322  \\
CIFAR10-20* & 200  & 97.5  & 59.0 & 9.2      & 98.5    & 1.884684  \\
CIFAR10-20* & 300  & 100 & 58.7 & 99.6     & 92.7    & 4.334019  \\
CIFAR10-20* & 400  & 100 & 58.8 & 100    & 93.0    & 7.996566  \\
CIFAR10-20* & 500  & 100 & 58.8 & 100    & 92.2    & 10.964163 \\
CIFAR10-20* & 600  & 100 & 58.7 & 100    & 92.3    & 14.168781 \\
CIFAR10-20* & 700  & 100 & 58.7 & 100    & 92.0    & 17.779293 \\
CIFAR10-20* & 800  & 100 & 58.8 & 100    & 91.6    & 20.268145 \\
CIFAR10-20* & 900  & 100 & 58.8 & 100    & 91.7    & 24.644904 \\
CIFAR10-20* & 1000 & 100 & 58.8 & 100    & 91.5    & 26.829417 \\
EMNIST          & 10   & 100 & 80.0 & 100    & 100   & 84.367889
\end{tabular}
\caption{Full result of SCAn, where ``*'' indicates the dataset is supplemented, ``\_dis'' indicates the distance-based score.}
\label{tab:scan-full}
\end{table}

SCAn~\cite{tang2021demon} is a state-of-the-art poison defense technique designed to identify whether or not a given model is poisoned, and find the poisoned class when it is. It requires access to some clean data (10\% by default). SCAn computes an AI (Anomaly Index) score for each class, and reports that a class is poisoned if its AI$>7.3891$. 
In their ``online setting'', it can also detect if a query input is poisoned by clustering on features extracted from the last hidden layer's activation. An input is marked as poisoned if it belongs to a poisoned class and is clustered in the group with fewer clean data. We modified SCAn to run on a training set to detect poisoned training data. 

\mypara{Modification of SCAn} Originally in the online setting, SCAn is designed to detect poison in every incoming test datapoint. It trains an untangling model on a clean set of data, and on receiving a test datapoint, it first fine-tunes the model and then use it to cluster with all the data it has to decide if the given test datapoint is poisoned. To modify it to detect poison data in the training set, instead of on the test data we train the untangling model on the training data. For efficiency, we assume the whole training set is available in the beginning, so that we only need to train the model once on the whole training set. 
This gives advantage to SCAn because originally in the beginning the model is poorly fit and gives many false positives because of insufficient data.
SCAn also only gives a coarse-grained (poisoned v.s. clean) outcome to each datapoint, preventing us from exploring its precision-recall tradeoff in evaluation. Thus we modify SCAn to provide a score for each datapoint, using the distance to the hyper-plane ($v^Tr$ in Eq 6 of the paper).

\mypara{The result} Outlier detection approaches like SCAn target a somewhat different problem than data attribution.
Outliers can represent not just attacks but also benign but rare data, and thus outlier-detection cannot differentiate between attacks and benign data, nor between different attacks on the same data. We evaluate this effect using a mixed attack setup on CIFAR10, where we simultaneously use 3 different attacks, each of which poisons 20 images. The attacks used are trigger attacks (similar to CIFAR10-20), except each attack places the trigger in a different location. We found that SCAn clusters nearly all poisoned images (along with many clean images), regardless of the attack used, into the same group, showing that it cannot differentiate between attacks. In contrast, when using our approach with the same hyperparameters as CIFAR10-20 from warm-starting, ours can select the correct attack for each query, and achieves an average precision (recall) of 96.3\% (65.5\%), 97.4 (89.5\%) and 97.1\% (71.3\%), respectively for 20 random queries from each attack.

Furthermore, even when a single type of attack is used, SCAn cannot detect all of the poisoned data. We evaluate this by using SCAn on CIFAR10-50, CIFAR10-20, EMNIST and ImageNet. 
To make SCAn work for EMNIST and ImageNet, which are source-based datasets, we adapt it to assign a score to a source equal to the number of selected datapoints within the source.
The results in Fig.~\ref{fig:pr-curve} shows that SCAn falsely selects many clean data for both CIFAR10 benchmarks, yielding very low precision. This is consistent with the results reported by SCAn authors, who state that at least 50 poison datapoints are needed for SCAn to work robustly. On the other hand, SCAn achieves perfect precision and recall on the other 2 benchmarks, since they are  source-based datasets that contain hundreds of poison datapoints. Further, we found that SCAn's AI fails to identify the poisoned class for all datasets other than EMNIST. Thus SCAn cannot be used unless we are guaranteed to have a sufficiently large set of datapoints. 

\mypara{Additional evaluation for other variants of SCAn} 

We also study if it helps SCAn to add more poison datapoints. We observe that SCAn starts to work well when we run SCAn on 104 and 300 poison datapoints in CIFAR10-50 and CIFAR10-20, respectively.\footnote{Note that the main model is still trained on the original dataset without additional poison datapoints added.} The full result is shown in Table \ref{tab:scan-full}, where the we show both the precision and recall of the original SCAn and that of the variant after our distance modification. In the distance-based variant, we compute the precision and recall with a threshold that makes the recall match that of ours as closely as possible. This leads to an ad-hoc solution to this insufficient data problem: because query inputs are likely to be poisoned data, the operator will wait for an enough number of them to supplement the dataset before running SCAn. However, it is unclear how many queries should s/he wait for since (a) the number of poison data needed can vary across attacks and we do not know it in advance, and (b) even if we know it, we cannot know if this query is an attack. The problem is even worse when multiple attacks coexist, in which case the queries may spread across different attacks/classes, increasing the time needed for each attack/classes to have accumulated sufficient poisons.

On the mixed attack, with more poison datapoints SCAn still clusters the three attacks together, though it stops falsely clustering clean data into the same group as poisoned data.

\begin{table}[t]
\centering
\begin{tabular}{llll}
name           & lambda & precision & recall \\\hline
CIFAR10-20 & 3000.0 & 4.8       & 58.8   \\
CIFAR10-20 & 30.0   & 18.0      & 58.8   \\
CIFAR10-20 & 12.0   & 68.5      & 58.8   \\
CIFAR10-20 & 11.9   & 68.9      & 58.8   \\
CIFAR10-20 & 11.0   & 64.9      & 58.8   \\
CIFAR10-20 & 3.0    & 1.0       & 58.8   \\
CIFAR10-20 & 0.003  & 0.6       & 58.8   \\
\hline
CIFAR10-50 & 3000.0 & 95.2      & 54.4   \\
CIFAR10-50 & 30.0   & 99.9      & 54.4   \\
CIFAR10-50 & 11.9   & 97.8      & 54.4   \\
CIFAR10-50 & 3.0    & 0.7       & 54.4   \\
CIFAR10-50 & 0.003  & 0.7       & 54.4   \\
\hline
EMNIST         & 3000.0 & 100     & 78.9   \\
EMNIST         & 3.0    & 100     & 78.9   \\
EMNIST         & 0.003  & 100     & 78.9   \\
\hline
ImageNet       & 3000.0 & 80.8      & 78.0   \\
ImageNet       & 3.0    & 80.8      & 78.0   \\
ImageNet       & 0.003  & 80.8      & 78.0   \\
ImageNet       & 0.0001 & 84.9      & 78.0   \\
ImageNet       & 4e-05  & 85.9      & 78.0   \\
ImageNet       & 3e-05  & 85.9      & 78.0   \\
ImageNet       & 2e-05  & 84.9      & 78.0   \\
\end{tabular}
\caption{Full result of Representer Point}
\label{tab:repr-full}
\end{table}

\begin{figure}[ht]
    \centering
    \includegraphics[width=0.7\textwidth]{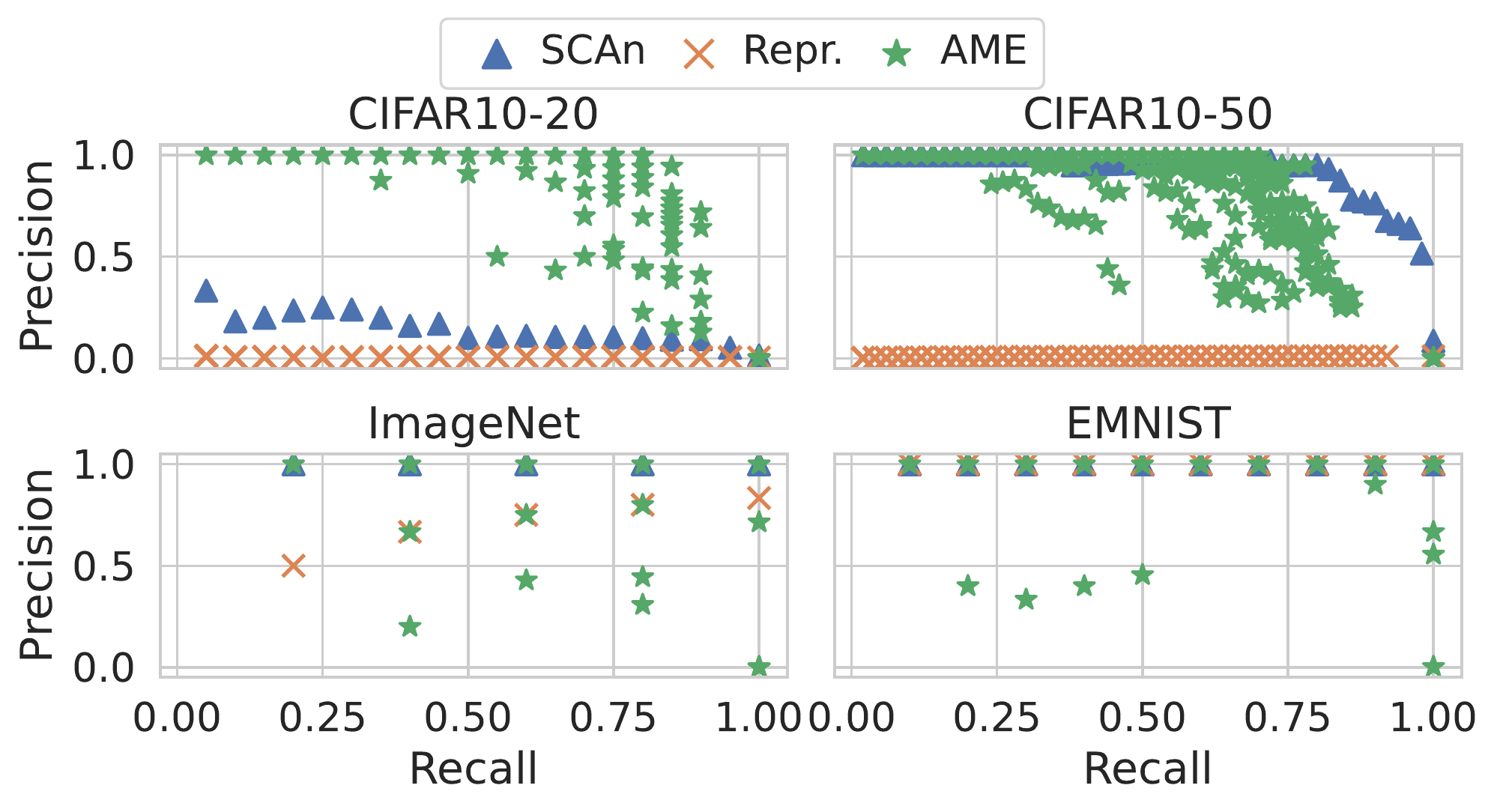}
    \caption{The counterpart of Fig.~\ref{fig:pr-curve} with AME using the regularization parameter choice of $\lambda_{min}$.}
    \label{fig:pr-curve-lambdamin}
\end{figure}

\subsubsection{Additional Evaluation of Rrepresenter Point}
\label{apsec:repr}

Representer Points~\cite{repr} provide an approach to quantify the contribution each training datapoint has on a given query prediction. Given a query $t$, this approach assigns a score $\alpha_{i,Q_L}f_i^Tf_t$ to each training datapoint $i$, where $f_i$ is the feature extracted from the activation of the model's last hidden layer, $f_t$ is the query input's feature, and  $\alpha_{i,Q_L}$ is a scalar computed from the model's gradients.  The query effect $f_i^Tf_t$ is interpreted as the similarity between $i$ and $t$, while the model effect $\alpha_{i,Q_L}$ represents $i$'s importance to the model. There is a hyperparameter $\lambda$ to trade off between the computation time and the result: smaller $\lambda$s provide better results but are slower.

We compute a source-based score by summing up scores for all datapoints within a source, and pick the threshold such that the recall matches AME's recall as closely as possible. In our evaluation, we used $\lambda=3e-3$; Figure \ref{fig:pr-curve} shows the results. While this approach performs better than ours on EMNIST, it is worse on ImageNet and does not work on either of the  CIFAR10 datasets. We note that $\lambda$ is crucial to the result, but the paper's suggestion of using a small $\lambda$ like $3e-3$ does not always lead to good results because it tends to weight model effects over query effects. The paper does not document this problem, so it is unclear how $\lambda$ can be tuned to avoid the problem. Indeed, finding a good $\lambda$ requires prior knowledge of what sources are poisoned, as we show next.

We discover that $\lambda$ has a potential effect crucial to result: it balances between model effect v.s. query effect, which is undocumented in the paper. The default $\lambda$ gives bad result, so we manually tune it per attack with a grid search assuming knowing the ground truth and report the best result, as shown in Table \ref{tab:baseline-best}. It works better than ours on EMNIST and CIFAR10-50 but worse on ImageNet and CIFAR10-20. We note that the best $\lambda$ can vary across datasets: as shown in Table \ref{tab:repr-full} the $\lambda$ that works the best on CIFAR10-50 can be 30, which gives poor result on CIFAR10-20. Therefore, it is unclear how to find the right $\lambda$ that works against unknown attacks even on the same dataset.

To investigate why the default $\lambda$ falls apart, we look at what are selected and find that regardless of the query, it tends to select many the same images: the selection results between two random clean queries of the same predicted label often share $>70\%$ of selections. This is likely because small $\lambda$ tends to overweight the model effect. 

\subsubsection{Additional Evaluation of Influence Functions}
\label{appendix:eval:inf}
\begin{figure}[t]
    \centering
     \begin{subfigure}{0.45\textwidth}
         \centering
         \includegraphics[width=\textwidth]{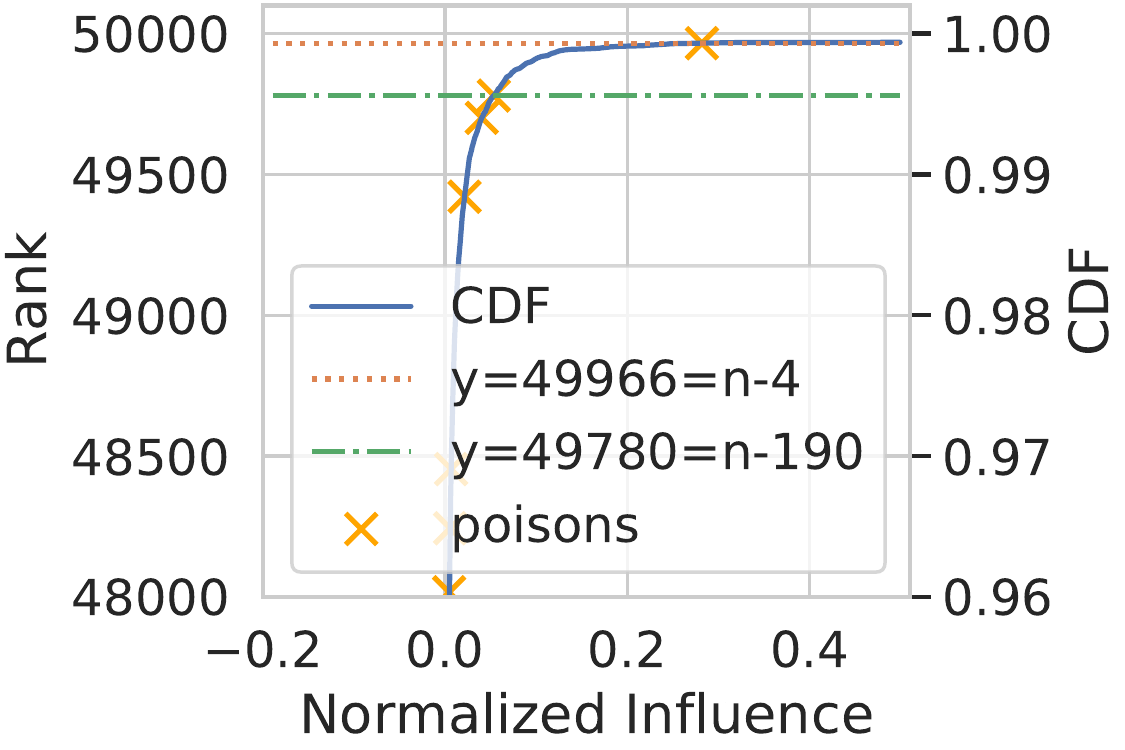}
         \caption{Part of CDF of the influence scores.}
         \label{fig:inf-1}
     \end{subfigure}
     \begin{subfigure}{0.25\textwidth}
         \centering
         \includegraphics[width=\textwidth]{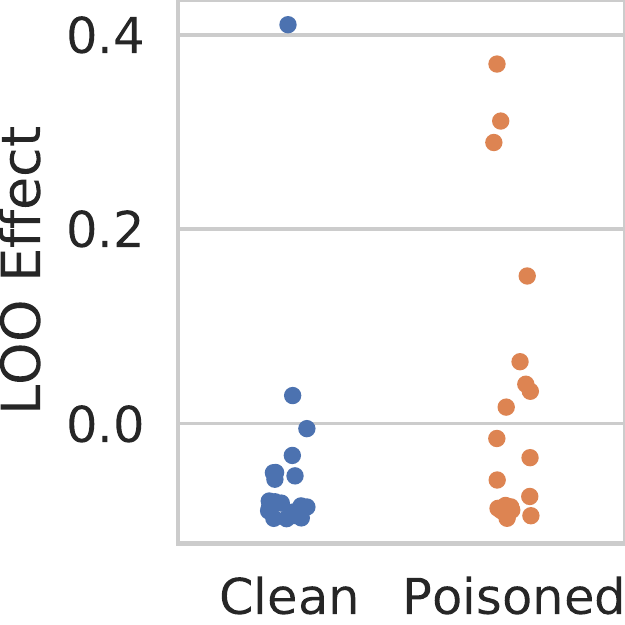}
         \caption{True LOO effect in $y$-axis of selected clean and poisoned datapoints.}
         \label{fig:inf-2}
     \end{subfigure}
    \caption{Experiment of the influence function on CIFAR10-20. The loss function is defined as the average loss on 20 query inputs. We use the parameter configuration of $r=10, t=1000$, damping term $\lambda=0.01$, and scale term=25.}
    \label{fig:inf}
\end{figure}

Influence functions provide an approach to estimating the leave-one-out (LOO) effect, defined as $L_{excluded} - L_{original}$ where $L$ denote the model's loss. We evaluated their efficacy for identifying poison datapoint by computing 
the influence function for each training point in the CIFAR10-20 setup. We expect proponent points will have higher influence scores, and thus poisoned data will have a higher influence score. In Figure~\ref{fig:inf-1} we show a CDF of influence scores for each training datapoint, as well as their rank by influence value. We observe that only 2 (out of 20) of the poisoned data points show up in the top 190 elements. We can thus see that influence functions do not suffice for detecting poisoned datapoints in this case.

We next examine the efficacy of directly using leave-out-one training to detect poisoning. We do so by training 39 models where we leave out the 10 datapoints with the highest influence scores, the 10 datapoints with the lowest influence scores, and any of the 20 poison datapoints not included in these sets. In Figure~\ref{fig:inf-2} we show the LOO effect for all of these models. We can observe from this that many poisoned datapoints have LOO effects that are nearly as low as those observed for correct datapoints. This shows that a more precise technique for estimating LOO would still be insufficient for detecting poisoned sources.

\subsubsection{Additional Evaluation of Shapley Value}
Due to prohibitive costs of computation for running existing Shapley value estimators on our large experiments, we instead provide an evaluation on a subset MNIST with data poisoning (see details of the dataset in Appendix \ref{appendix:subsub:sv-mnist}). 
Unlike previous experiments in which each query explains a single prediction on a test example, we now look at one query to explain the attack success rate on an entire poisoned test set. The utility measurement is hence much less noisy than in our main experiments (we expect the \AME to be even better comparatively in a noisy setting, except maybe for Compressive Sensing).
The baselines are KernelSHAP~\cite{koh_understanding_2017}, Monte Carlo~\cite{ghorbani_data_nodate}, Truncated Monte Carlo~\cite{ghorbani_data_nodate}, and two sparsity-aware approaches ``KernelSHAP (L1)''~\cite{koh_understanding_2017} and Compressive Sensing~\cite{jia_towards_2020} (implementation details in Appendix \ref{appendix:sv}). 
\AME uses the training-from-scratch setting with $\cP=\{0.2, 0.4, 0.6, 0.8\}$.
The sample sizes (\ie number of utility evaluations) are all fixed to 1024, a number that is small for consistency with the large experiment setup, and still exceeds $N=1000$ so that permutation (Monte Carlo) and non-regularized regression (KernelSHAP) based approaches are applicable. 

We again compare the precision of each method at different recall levels, by varying internal decision thresholds. 
The result in Figure~\ref{fig:sv-pr} shows that sparsity-aware approaches achieve better performance than others. 
In particular, \AME and ``KernelSHAP (L1)'' achieve the same performance and outperform other approaches. 
We also compare them for \SV estimation in \S\ref{eval:sv}, and show that KernelSHAP (L1) is less suited in that case.

\begin{figure}[h]
    \centering
    \includegraphics[width=0.6\textwidth]{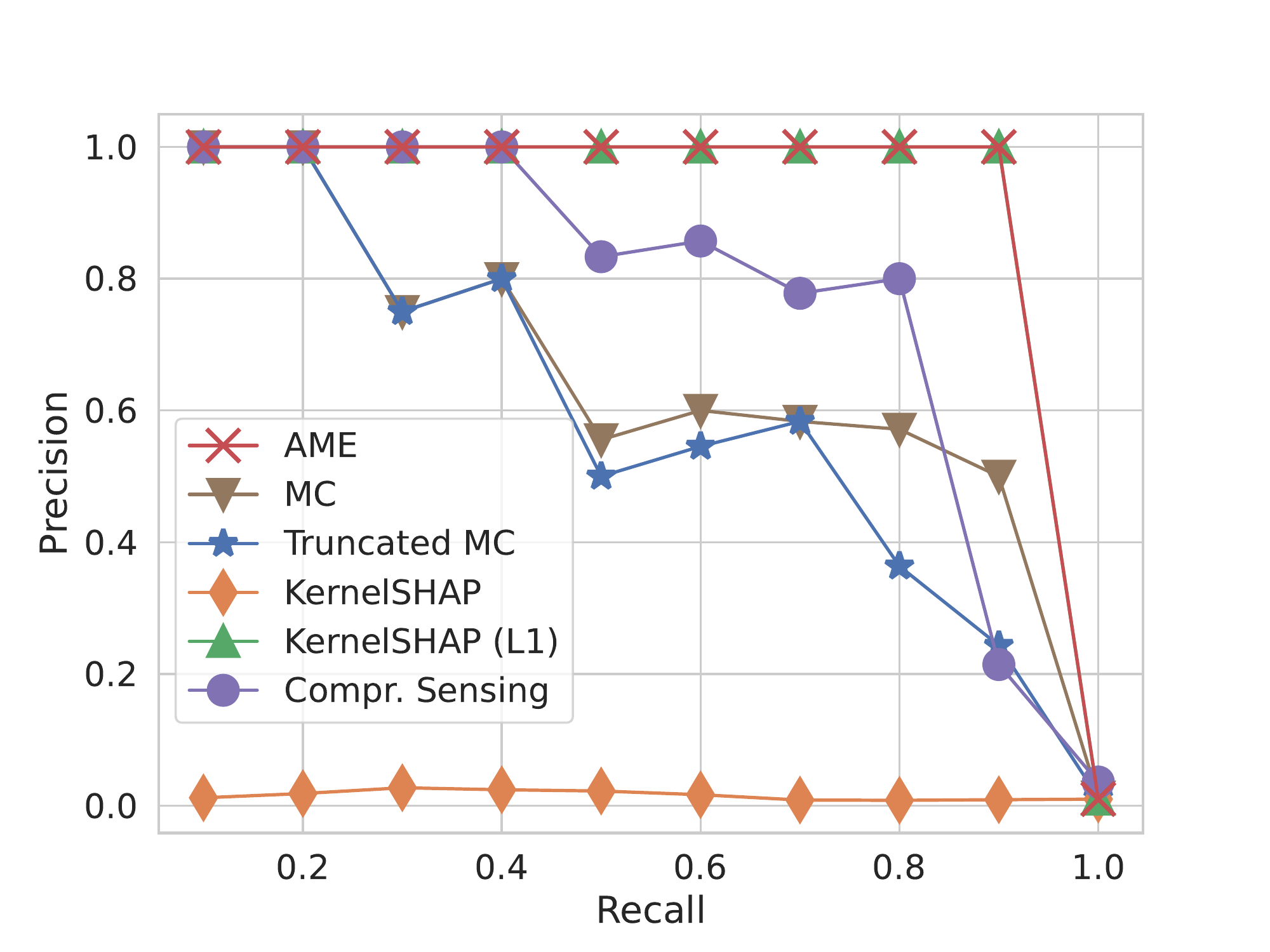}
    \caption{Precision vs Recall curve comparison.}
    \label{fig:sv-pr}
\end{figure}

\subsection{Details of the Hierarchical Design}
\label{appendix:hierarchical}

Our hierarchical design is as follows: we partition users into groups based on when they first posted a review, which we treat as a proxy for when the user account was created. We group users into partitions of  $1000$ users each (so groups span different time periods: we found that fixed group sizes performed better than fixed time periods).
Unlike the previous evaluation with the NLP data set (\S\ref{sec:eval-1-d}), in this experiment we do not combine tail users, i.e., users with a small number of review, into a single user. We use this process to partition the NLP dataset into $n_1=308$ top-level sources, each of which contains (up to) $1000$ second-level sources. This partition covers all 307,159 users in the NLP dataset (\ie $n_2=307,159$). We poison $15$ second-level sources belonging to $2$ randomly chosen top-level sources: putting $10$ poisoned sources in one top-level source, and $5$ in the other. We compare the results of applying our hierarchical approach in this setting, to the results of using the general approach on $307,159$ sources. 

We use different $p$s when selecting top-level and second-level sources in the hierarchical approach, because top-level sources might contain multiple proponents and thus have a larger influence on inference results. In our experiment, we used $\mathcal{P}=Uni\{0.2, 0.4, 0.6, 0.8\}$) for the first-level ($p_1$) and trained models from scratch, and we used $\mathcal{P}=Uni\{0.1, 0.2, 0.3, 0.4, 0.5\}$ for the second level ($p_2$) and used warm-start training. 

Figure~\ref{fig:hierarchy} shows the precision and recall achieved by the hierarchical approach on 20 different queries, as we vary the number of subset models. We find that (a) the hierarchical approach achieves good precision and recall in identifying top-level sources that contribute to data poisoning, even when fewer subset models are used than would allow second-level sources to be detected; and (b) recall when identifying second-level sources degrades gracefully as fewer observations are used. The hierarchical approach is also more efficient: it can achieve near-perfect precision and $60\%$ recall with 1,588 observations. In comparison, the general method could not identify any of the poisoned sources when using the same number of observations.

Finally, the hierarchical approach also reduces the time spent running LASSO+Knockoffs, \ie it reduces query runtime. To demonstrate this, we compare the running times for executing a query with 1,588 observations, using $5$ threads on a server with 256GB of memory. The non-hierarchical approach takes $52.5$s to construct the design matrix and $688.58$s to run LASSO+Knockoffs. The hierarchical approach takes $37.343$s to construct two design matrices ($0.123$s for the top level, and $37.220$s for the second level) and $13.829$s to run LASSO+Knockoffs ($2.369$s for the top level, $11.460$s for the second level). This corresponds to an overall query time of $48.68$s for the hierarchical approach versus $741.08$s for the general approach, a reduction of $15\times$.

\subsection{Details and Additional Evaluation of Data Attribution for Non-poisoned Predictions}
\label{appendix:eval:general}

{
  \newcommand{\gw}{0.4}
  \newcommand{\gwh}{1}
  
  \begin{figure}[h]
  \captionsetup[subfloat]{labelformat=empty,skip=0pt,belowskip=1pt}
  \centering
  \begin{tabular}{c@{\hskip 0.1in}c@{\hskip 0.02in}c@{\hskip 0.02in}c@{\hskip 0.02in}c@{\hskip 0.02in}c@{\hskip 0.02in}c@{\hskip 0.02in}c@{\hskip 0.02in}c@{\hskip 0.02in}c}
    Query Input&\multicolumn{5}{c}{Examples Selected} \\ \hline
  \subfloat[birds]{\includegraphics[width = \gw in]{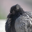}} &
  \subfloat[birds]{\includegraphics[width = \gw in]{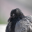}} &
  \subfloat[birds]{\includegraphics[width = \gw in]{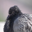}} &
  
  \\
  \subfloat[deer]{\includegraphics[width = \gw in]{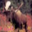}} &
  \\
  \subfloat[frogs]{\includegraphics[width = \gw in]{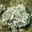}} &
  \subfloat[frogs]{\includegraphics[width = \gw in]{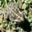}} &
  \subfloat[frogs]{\includegraphics[width = \gw in]{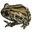}} &
  \subfloat[birds]{\includegraphics[width = \gw in]{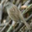}} &
  \\
  \subfloat[ships]{\includegraphics[width = \gw in]{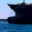}} &
  \subfloat[ships]{\includegraphics[width = \gw in]{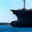}} &
  \subfloat[ships]{\includegraphics[width = \gw in]{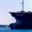}} &
  \subfloat[ships]{\includegraphics[width = \gw in]{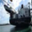}} &
  \subfloat[ships]{\includegraphics[width = \gw in]{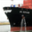}} &
  \subfloat[ships]{\includegraphics[width = \gw in]{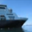}} &
  \end{tabular}
  \caption{Queries for correct predictions. The last row only shows a subset since they are too long to fit in.\label{fig:app-grid-correct}}
    \end{figure}
  
  \begin{figure}[h]
    \captionsetup[subfloat]{labelformat=empty,skip=0pt,belowskip=1pt}
    \centering
    \begin{tabular}{c@{\hskip 0.1in}c@{\hskip 0.02in}c@{\hskip 0.02in}c@{\hskip 0.02in}c@{\hskip 0.02in}c@{\hskip 0.02in}c@{\hskip 0.02in}c@{\hskip 0.02in}c@{\hskip 0.02in}c}
      Query Input&\multicolumn{5}{c}{Examples Selected} \\ \hline

    \subfloat[pre: airpl.; gt: cats]{\parbox[t]{\gwh in}{\centering\includegraphics[width = \gw in]{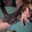}}} &
    \subfloat[airpl.]{\includegraphics[width = \gw in]{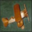}} &
    \subfloat[airpl.]{\includegraphics[width = \gw in]{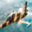}} &
    \subfloat[airpl.]{\includegraphics[width = \gw in]{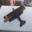}} &
    \subfloat[airpl.]{\includegraphics[width = \gw in]{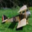}} 
    \\
    \subfloat[pre: cats; gt: dogs]{\parbox[t]{\gwh in}{\centering\includegraphics[width = \gw in]{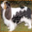}}} &
    \subfloat[cats]{\includegraphics[width = \gw in]{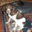}} &
    \subfloat[cats]{\includegraphics[width = \gw in]{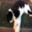}} &
    \subfloat[cats]{\includegraphics[width = \gw in]{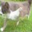}} &
    \subfloat[dogs]{\includegraphics[width = \gw in]{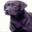}} &

    \\
    \subfloat[pre: deer gt: horses]{\parbox[t]{\gwh in}{\centering\includegraphics[width = \gw in]{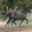}}} &
    \subfloat[deer]{\includegraphics[width = \gw in]{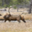}} &
    \subfloat[birds]{\includegraphics[width = \gw in]{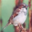}} &
    \subfloat[deer]{\includegraphics[width = \gw in]{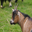}} &
    \\
    \subfloat[pre: airpl. gt: ships]{\parbox[t]{\gwh in}{\centering\includegraphics[width = \gw in]{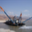}}} &
    \subfloat[airpl.]{\includegraphics[width = \gw in]{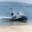}} &
    \subfloat[airpl.]{\includegraphics[width = \gw in]{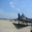}} &
    \subfloat[airpl.]{\includegraphics[width = \gw in]{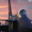}} &
    \subfloat[airpl.]{\includegraphics[width = \gw in]{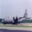}} &
    \subfloat[airpl.]{\includegraphics[width = \gw in]{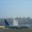}} &
    \end{tabular}
    \caption{Queries for wrong predictions. \textit{Pre} is model's prediction and \textit{gt} is the ground truth. The last row only shows a subset since they are too long to fit in. \label{fig:app-grid-wrong}}
    \end{figure}
}

\begin{figure}[ht]
    \centering
    \includegraphics[width=0.45\textwidth]{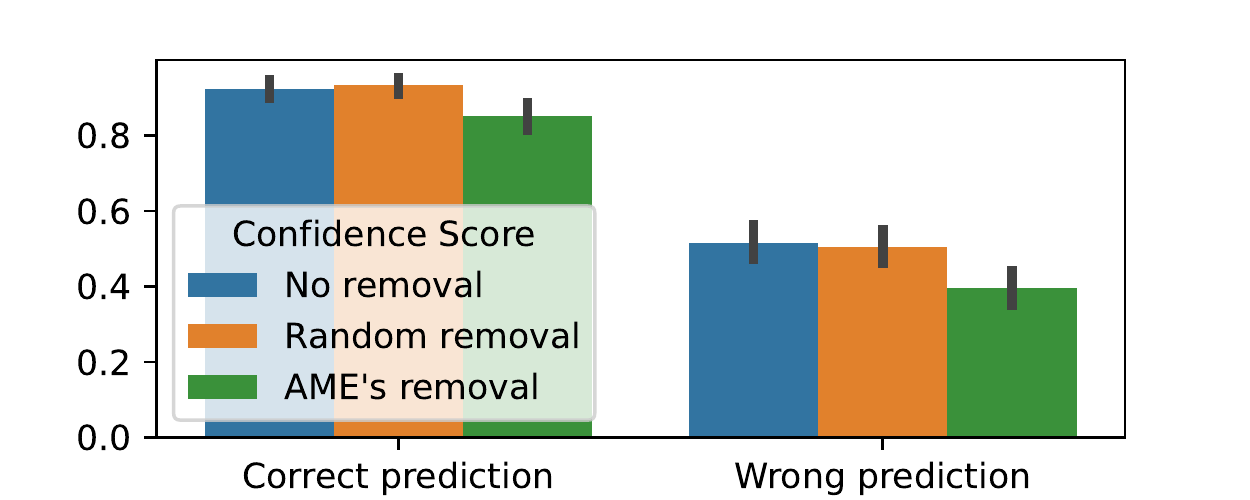}
    \caption{Quantitative comparison of model explanation through the drop in the confidence scores. Y-axis shows the average confidence scores across all correct/wrong queries. The 95\% confidence interval is drawn as the vertical bars.}
    \label{fig:conf-drop}
\end{figure}

We randomly select 10 classes and their images from the ImageNet dataset and use the same ResNet-9 model and training procedure. The main model achieves a top-1 accuracy of 89.2\% on the validation set, from which we randomly select 40 with correct predictions and 40 with incorrect predictions, as the queries. We train roughly 2700 subset models (calculated from $c=10,k=20,N\approx 10000$) from scratch. We switch to using $\lambda_{min}$ (with $q=0$) since empirically $\lambda_{1se}$ is too conservative and selects few images, if any. 

The results in Figure~\ref{fig:grid-wrong} and \ref{fig:grid-correct} (due to space constraints we show at most 3 selected images for each query) show that many selected images share similar visual characteristics with the query input, either in color, blur effect or texture. Some even are photos taken on the same place from different time/angle. These results suggest that our approach can be used to understand model behavior beyond the use case of data poisoning. Additional results for different queries, and a comparison to a naive baseline, can be found at \url{https://enola2022.github.io/}. 

We also evaluate model explanation on CIFAR10-50 dataset. We reuse subset models from the CIFAR10-50 training-from-scratch experiments\footnote{We did this to save time for model training. Though the training set includes poison, they should have little effect on the non-poisoned query inputs.}, and randomly select 39 images from the vanilla, non-poisoned test set as query inputs. Some queries with correct main model predictions (4 out of 39 in our evaluation) still fail because LASSO does not converge in 3600 seconds, possibly because no sparse solution exists. As a result we return an empty result set. Selected results are shown in Figure~\ref{fig:app-grid-correct} and \ref{fig:app-grid-wrong}, and we also draw all 39 queries we have ran and their result in grids, as shown in Fig. \ref{fig:all-queries-correct} and Fig. \ref{fig:all-queries-wrong}, where each row consists of two queries, each starting with the query input followed by images selected. 

We also report a quantitative result on CIFAR10-50 by removing the images selected by ours and retraining the model 6 times and measure the change in the predictions. We find that the predicted labels are often not changed compared to the main model. This is expected because our approach is designed to achieve good precision and as a result only a small number of proponents from the strongest are removed, and other proponents in the class generically support the prediction. However, we still see the average confidence score is lower compare to the main model's when only removing the strongest proponents, as shown in Fig. \ref{fig:conf-drop}, while a baseline that randomly selects datapoints in the class to remove fails to do so.

\subsection{Shapley Value Estimation Setting and Additional Evaluations}
\label{appendix:sv}
\subsubsection{Simulated Dataset}
\label{appendix:subsub:sv-sim}
\mypara{Experimental Setup} We craft a two-valued threshold function as the utility, which evaluates to 1 when there are at least $2$ in the first $k$ sources are present. Formally, $U(S)=\mathbf{I}(|S\cap [k]| \geq 2)$. We pick $k=3$ and $N=1000$ for this experiment. This design allows us to compute the estimation error $\|\sqrt v\hat\beta_{lasso}-\SV\|_2^2$ because the true value of Shapley values for the $k$ sources can be easily known: by symmetry they are $U([N])/k=1/k$ for the $k$ sources and 0 otherwise. 

We compared our \AME based \SV estimator to Monte Carlo~\cite{jia_towards_2020,ghorbani_data_nodate}, Truncated Monte Carlo~\cite{ghorbani_data_nodate}, Group Testing~\cite{jia_towards_2020}, Compressive Sensing~\cite{jia_towards_2020}, KernelSHAP~\cite{lundberg2017unified}, and Paired Sampling~\cite{covert2021improving}. 
We run each approach 6 times with different random seeds. 
We plot the 95\% confidence interval in the shaded area. The baseline is as follows:
\begin{compactenum}[a]
    \item \textit{Truncated Monte Carlo.} We adopt the implementation published by the author~\cite{ghorbani_data_nodate} and use the default hyperparameters (\eg truncation tolerance is set to 0.01).
    \item \textit{Group Testing.} We adopt the implementation provided by the author~\cite{jia_gt}. We use hyperparameter $\epsilon=\frac{2}{\sqrt{N}}$ as in the script. We also tried $\epsilon=0.01$ and $0.1$ and the results remain the same, thus omitted.
    \item \textit{Compressive Sensing.} 
We implement the algorithm using CVXPY~\cite{agrawal2018rewriting, diamond2016cvxpy}. The algorithm has a hyperparameter ``$M$'' that has no clear documentation for its choice.
We therefore run one set of the 6-trial experiment for every $M\in\{2^7, 2^{8}, \dots, 2^{16}\}$ and report for each sample size the best mean estimation error (the mean is taken on the 6 trials). 
This assesses its upper limit. We use $\epsilon=0.01$ for the other hyperparameter ``$\epsilon$'' when not specified. Unlike the original algorithm that shifts the SVs by the average utility $U([N])/N$ (see $\bar s$ in Algorithm 2 in their paper), we do not perform shifting because in a poisoning (thus sparse) setup most SVs are more likely to be zero rather than the average.
    \item \textit{KernelSHAP.} We use their official implementation with the default hyperparameter setting. Note that by default KernelSHAP also uses $L_1$ regularization in a heuristic way. We also evaluate it with the regularization turned off to understand the effect. 
    \item \textit{Paired Sampling.} We use their official implementation with batch size equal to 128, no convergence detection and non-stochastic cooperative game.
\end{compactenum}

\mypara{Additional evaluations and ablation study}
In \S\ref{eval:sv} in the main body we have shown a condensed result comparing against part of the baselines. Now we present the comparison against other baselines in Fig.~\ref{fig:est-sim-extra}. High-level findings remain the same. 
We can also see that smaller $\varepsilon$ leads to higher approximation precision at the cost of convergence speed.
We also provide additional results of other choices of $\epsilon$ for Compressing Sensing in Figure~\ref{fig:est-sim-cs}.
It shows that $\epsilon=0.1$ though has a marginal drop compared to $\epsilon=0.01$ in estimation error when the sample size $\leq2^{12}$ but with a (relatively big) sacrifice on the estimation error on large sample sizes, echoing a tradeoff role $\varepsilon$ plays in AME (see \S\ref{eval:sv}), while $\epsilon=0.001$ has almost identical result, suggesting a saturation on one end of the tradeoff space.
This may also explain another interesting finding: the results are almost the same as Monte Carlo for the two smaller $\epsilon$'s. 
Recall that compressive sensing utilizes $\epsilon$ to control the tradeoff between the sparsity and the accuracy (w.r.t. the measurements) of the recovered solution. 
Jia et al.'s design implicitly makes use of Monte Carlo for measurement, implying that the most accurate solution is Monte Carlo. 
The smaller the $\epsilon$, the more accurate the recovery and hence the closer the result is to Monte Carlo.

\begin{figure}[ht]
    \centering
\hfill
\begin{subfigure}{0.4\textwidth}
    \centering
    \includegraphics[width=1\textwidth]{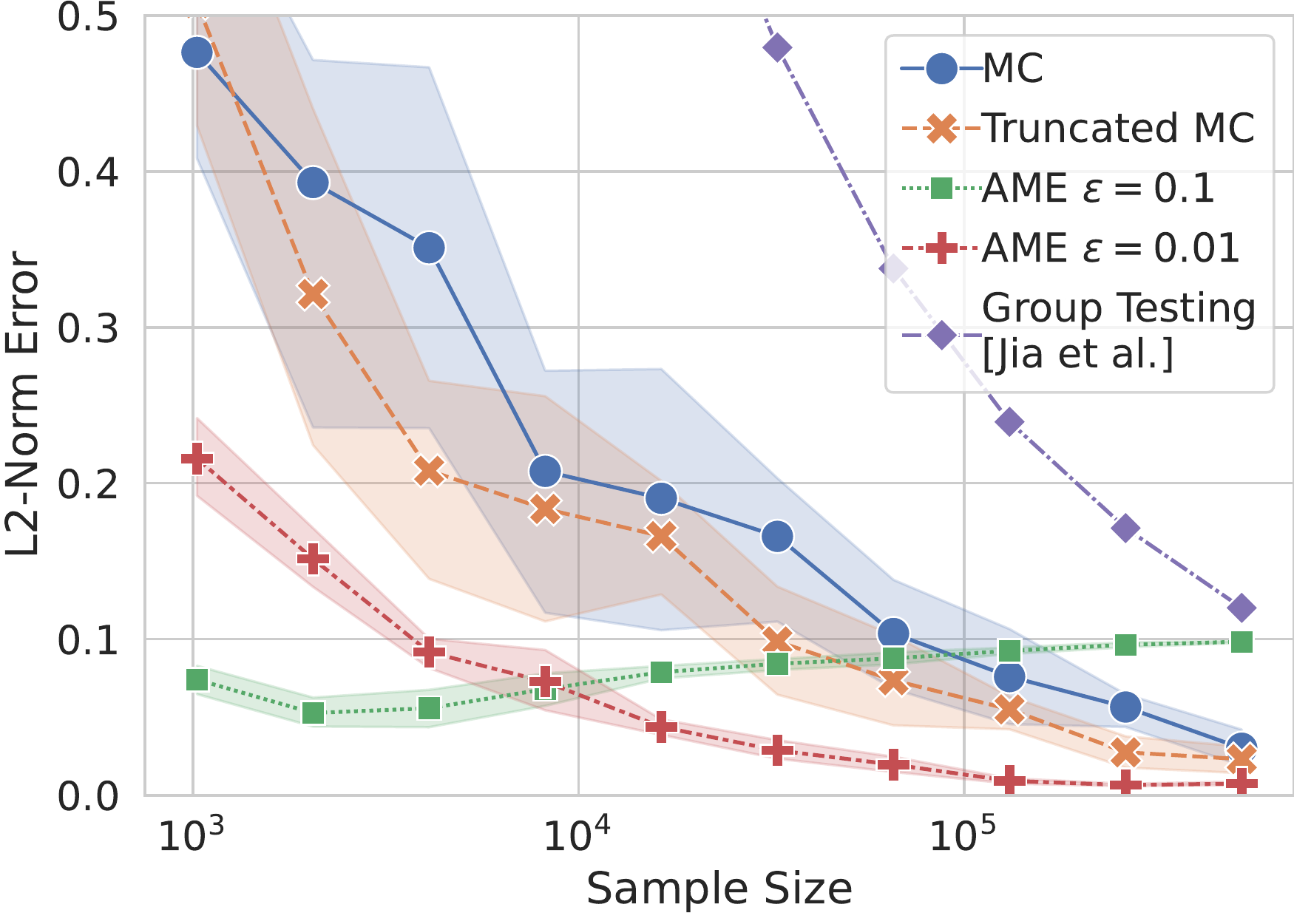}
    \caption{Simulated dataset.\vspace{\baselineskip}}
    \label{fig:est-sim-extra}
\end{subfigure}
\hfill
\begin{subfigure}{0.4\textwidth}
    \centering
    \includegraphics[width=1\textwidth]{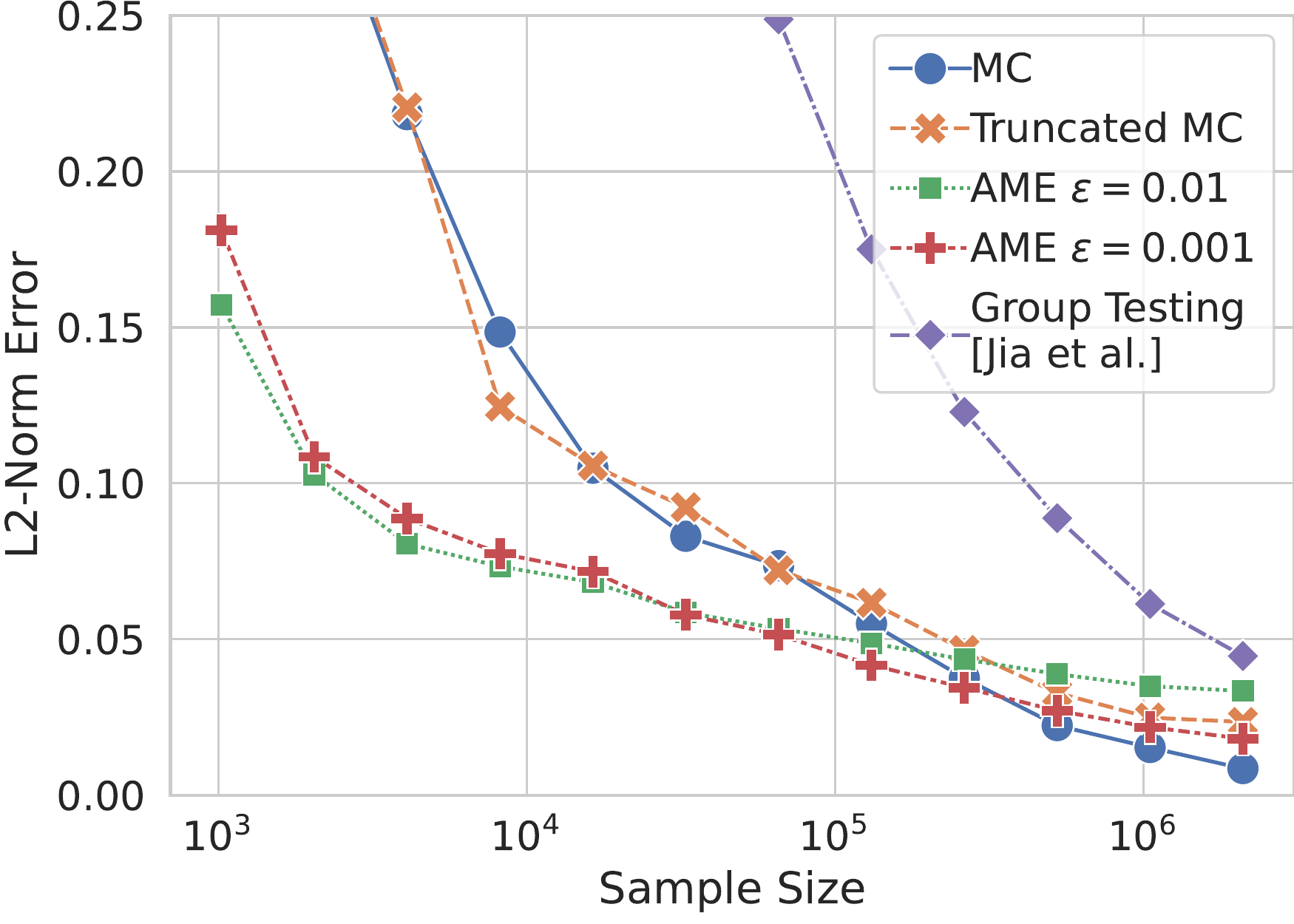}
    \caption{MNIST dataset. We use as ground truth the Monte Carlo (MC)'s result on a sample size of $2^{22}$.}
    \label{fig:est-mnist-extra}
\end{subfigure}
\hfill
\caption{Additional comparison against other baselines on SV estimation.}
\end{figure}

\begin{figure}[h]
    \centering

\hfill
\begin{subfigure}{0.4\textwidth}
    \centering
    \includegraphics[width=1\textwidth]{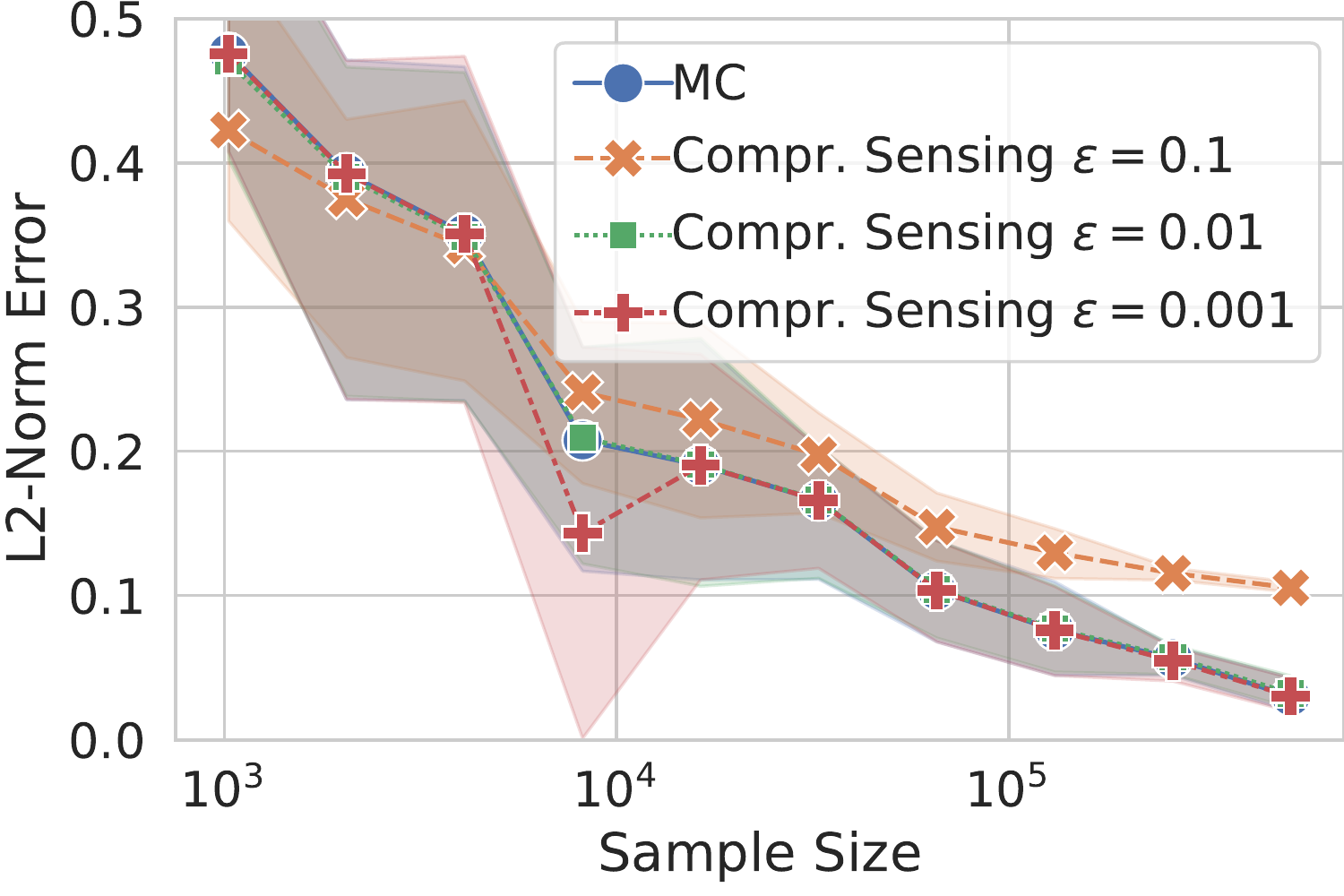}
    \caption{Simulated dataset.}
    \label{fig:est-sim-cs}
\end{subfigure}
\hfill
\begin{subfigure}{0.4\textwidth}
    \centering
    \includegraphics[width=1\textwidth]{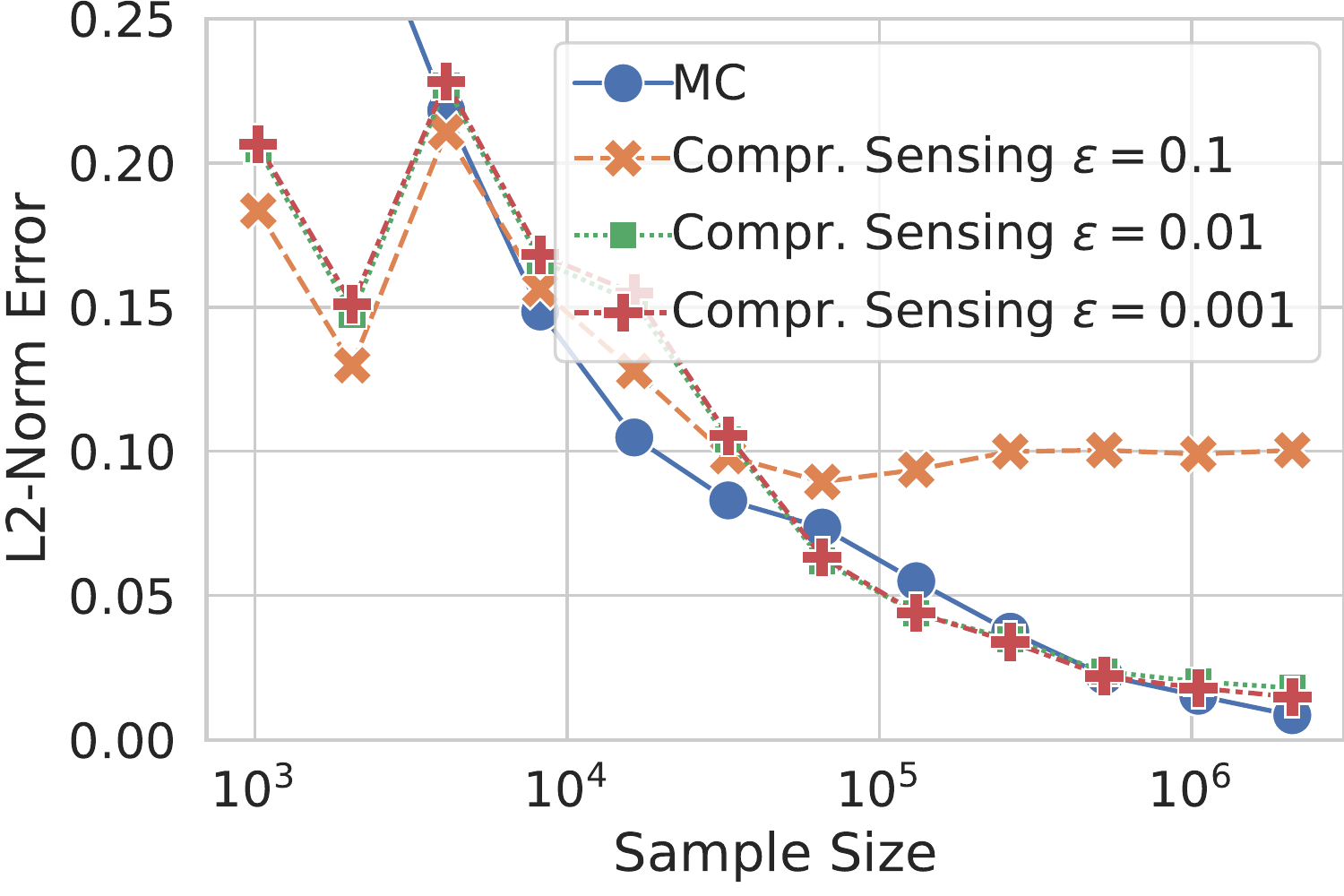}
    \caption{MNIST dataset.}
    \label{fig:est-mnist-cs}
\end{subfigure}
    \caption{Compressive Sensing with different choices of $\epsilon$.}
    \label{fig:est-cs}
\hfill
\end{figure}

\begin{figure}[h]
    \centering
\hfill
\begin{minipage}{0.4\textwidth}
    \centering
    \includegraphics[width=1\textwidth]{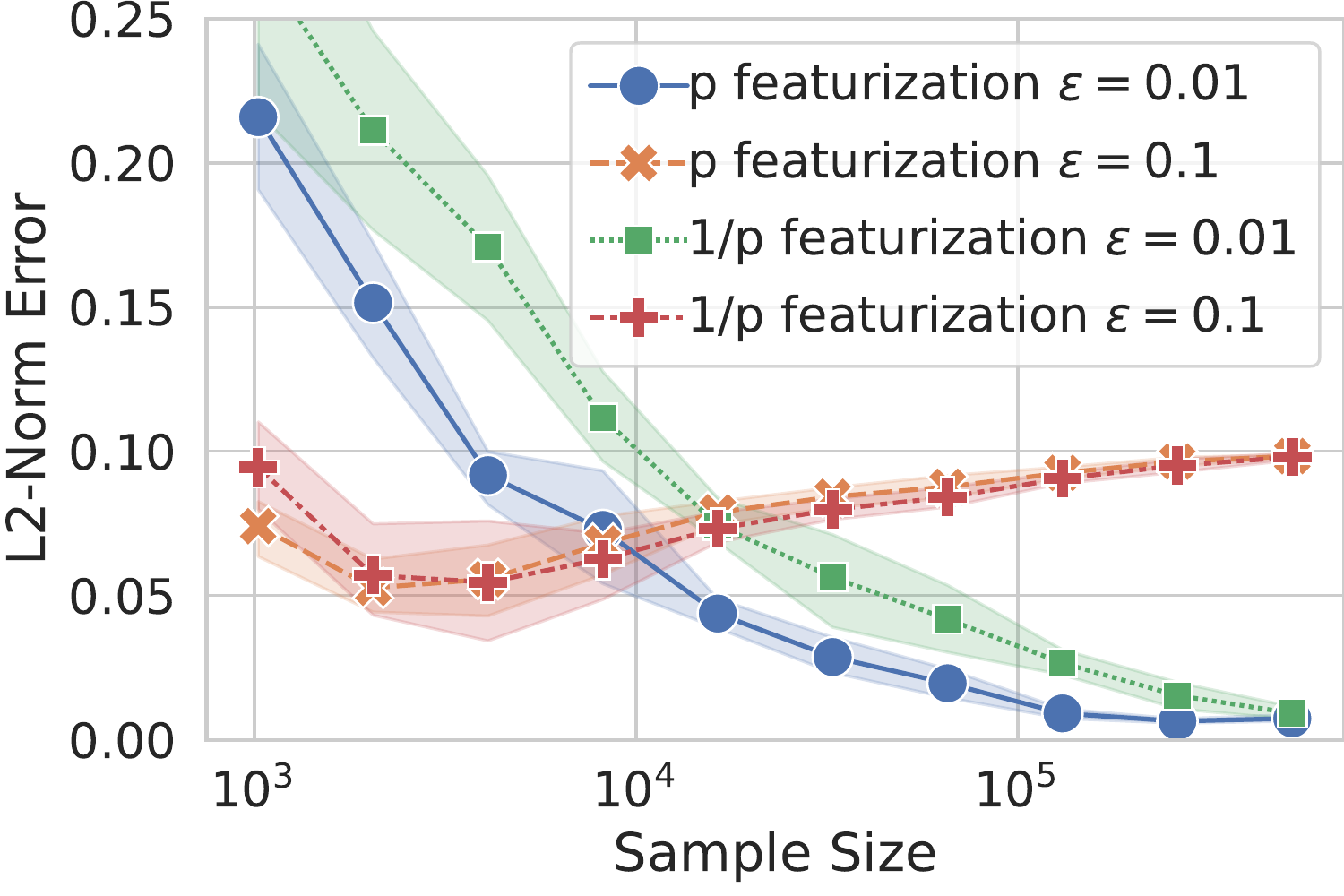}
    \caption{$p$-featurization vs $1/p$-featurization (\S\ref{appendix:sec-p-fea}). Both use truncated uniform distribution $Uni(\varepsilon, 1-\varepsilon)$.}
    \label{fig:est-fea-cmp}
\end{minipage}
\hfill
\begin{minipage}{0.4\textwidth}
    \centering
    \includegraphics[width=1\textwidth]{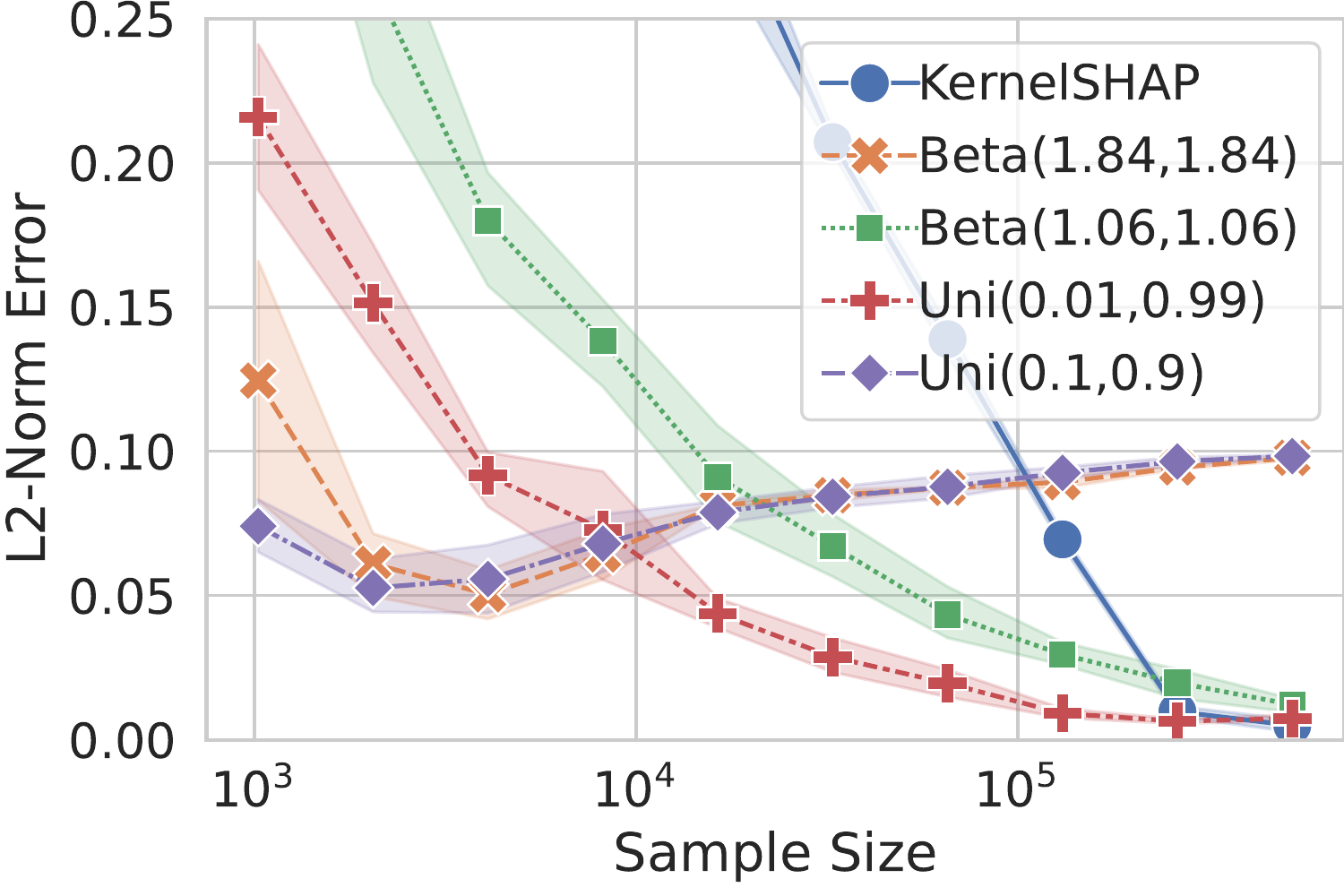}
    \caption{Beta vs truncated uniform distribution. Our approaches (Beta* and Uni*) all use $p$-featurization (\S\ref{appendix:sec-p-fea}). }
    \label{fig:est-pfea-beta}
\end{minipage}
\hfill
\end{figure}

We provide additional evaluations regarding the choice of featurization and distribution of our SV estimator from AME.
In Fig. \ref{fig:est-fea-cmp} we compare the impact of featurization and fix the distribution to truncated uniform. It shows that they both work well and $p$-featurization performs slightly better (\S\ref{appendix:sec-p-fea}); In Fig. \ref{fig:est-pfea-beta} we compare beta distribution with truncated uniform $Uni(\varepsilon, 1-\varepsilon)$. The parameters for beta distribution are chosen to match that of truncated uniform such that their AMEs evaluate to the same value. In other words, they both converge to the same estimation error on an infinite sample size. It shows that they both perform well, but truncated uniform performs slightly better.

\subsubsection{Poisoned MNIST dataset}
\label{appendix:subsub:sv-mnist}
\mypara{Experimental Setup} For non-simulated case, given the prohibitive cost of computing the true SV, we craft a tiny MNIST dataset by randomly subsampling 1000 datapoints from both the original training set and test set. To create sparsity, we further poison 10 training datapoints randomly by imposing a white square on the top-left corner and overwriting their labels to 0. We also craft a poison test set by imposing the same trigger on the clean test set and use the attack success rate as the utility function. We use logistic regression (with regularizaiton $0.02$) as the model for fast training. The resulting attack success rate of the full model is 65.7\%, and the test accuracy is 89.4\%. Different from the simulation case, it is computationally infeasible to compute the true SV. We instead use the estimation from the classic Monte Carlo method~\cite{ghorbani_data_nodate} as a proxy. We keep doubling the sample size until the estimation converges with a consecutive $L_2$ difference falling below $0.01$, and use the final estimation as the proxy.

We compare the same set of approaches as in the simulated experiment (except Paired Sampling given the computational cost and its almost identical performance to KernelSHAP on the simulated dataset). The configurations for these approaches mostly remain the same, except we only run one trial to save computational cost.

\mypara{Additional evaluations}
Fig.~\ref{fig:est-mnist-extra} shows the comparison against other baselines not shown in \S\ref{eval:sv} in the main body, with Monte Carlo, Truncated Monte Carlo and Group Testing added. 
The qualitative findings hold. 

\begin{figure*}[t]
    \centering
\begin{subfigure}{\textwidth}
    \includegraphics[width=\textwidth]{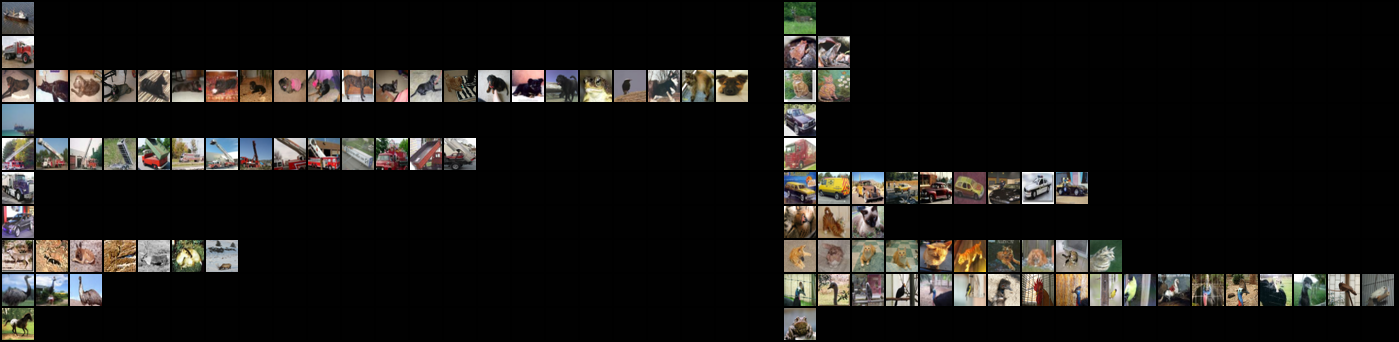}
\end{subfigure}
\hfill
\begin{subfigure}{\textwidth}
    \includegraphics[width=\textwidth]{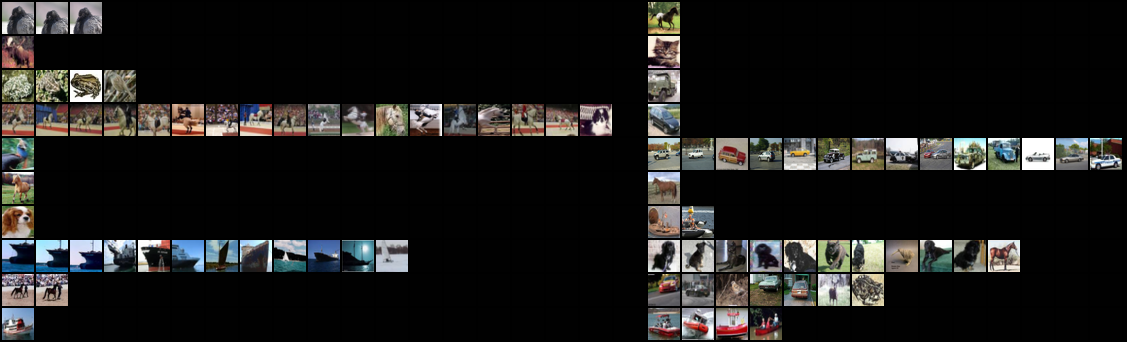}
\end{subfigure}
    \caption{Correct-prediction queries and their results, where each row shows two queries and each query starts with the query input followed by images selected.}
    \label{fig:all-queries-correct}
\end{figure*}
\begin{figure*}[t]
    \centering
\begin{subfigure}{\textwidth}
    \includegraphics[width=\textwidth]{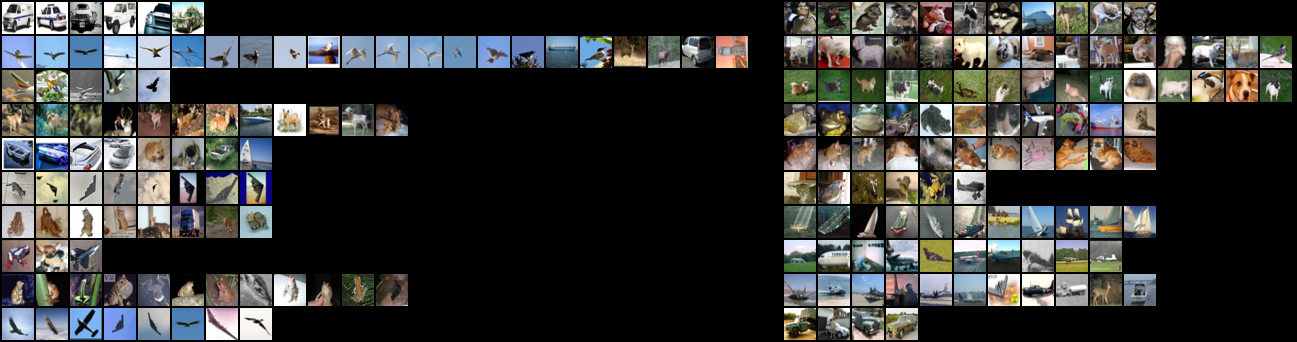}
\end{subfigure}
\hfill
\begin{subfigure}{\textwidth}
    \includegraphics[width=\textwidth]{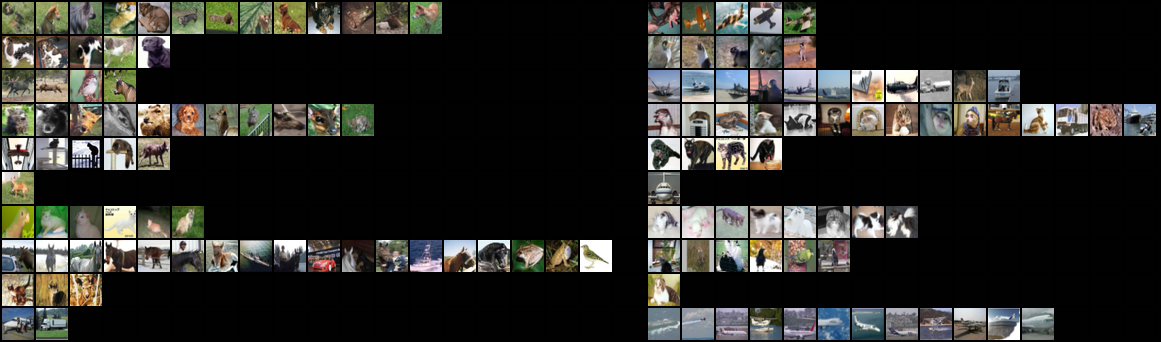}
\end{subfigure}
    \caption{Wrong-prediction queries and their results, where each row shows two queries and each query starts with the query input followed by images selected.}
    \label{fig:all-queries-wrong}
\end{figure*}

\ignore{
\section{Proof for $O(n\log n)$ finite-sample error bound of diff-in-means}
\label{appendix:diff-in-means}

\begin{proof}
Hoeffding's inequality: with probability $(1-\delta)$, for one index $i$, $|\widehat{AME}_i - AME_i| \leq \sqrt{\frac{1}{2m} \log(\frac{2}{\delta})}$.

We first need a union bound for it to hold over the $n$ dimensions, so we use the ``change of variable'' $\delta \leftarrow \frac{\delta}{n}$ to account for the union bound, and get a $\sqrt{\log(n)}$ term. That gives the following bound: with probability at least $1-\delta$, $\forall i, | \widehat{AME_i} - AME_i |  \leq \sqrt{\frac{1}{2m} \log(\frac{2n}{\delta})}$.

Over all coefficients, $\| \widehat{AME_i} - AME_i\|_2 \leq n^\frac{1}{2} \max_i |\widehat{AME_i} - AME_i| = O(n^\frac{1}{2} \sqrt{\frac{log(n)}{m}})$. By setting $m=O(n\log n)$, we can bound the L2-norm error with high probability.
\end{proof}
}
\section{Extended Related Work}
\label{appendix:relwork}

The closest related works, {\bf model explanation}, also aim to provide principled measures of training data impact on the performance of an ML model. These techniques include Shapley value, influence functions~\cite{koh_understanding_2017,koh_accuracy_2019} and Representer Point~\cite{repr}.
TracIn~\cite{pruthi_estimating_2020} and CosIn~\cite{hammoudehsimple} are two other recent proposals for measuring the influence of a training sample based on how it impacts the loss function during training. Existing model explanation approaches fall short. They either focus on marginal influence on the whole dataset~\cite{koh_understanding_2017,koh_accuracy_2019}; make strong assumptions (\eg convex loss functions) that disallow their use with DNNs~\cite{koh_understanding_2017,koh_accuracy_2019,basu_influence_2020}; cannot reason about data sources or sets of training samples~\cite{pruthi_estimating_2020,hammoudehsimple, chakarov2016debugging}; or subsample training data but focus on a single inclusion probability and thus cannot explain results in all scenarios\cite{feldman2020neural,ilyas2022datamodels,zhang2021counterfactual}. 

Of those, the works focused on efficiently estimating the \textbf{Shapley value (SV)} are closest. SV which was proposed in game theory~\cite{Shapley1953}, has been widely studied in recent years, with applications to data valuation in ML~\cite{jia_towards_2020,ghorbani_data_nodate} and feature selection/understanding~\cite{lipovetsky2001analysis,jethani2021fastshap, lundberg2017unified, covert2021improving}, as well as extensions such as D-Shapley~\cite{ghorbani2020distributional} which generalizes SV by modeling the dataset as a random variable.
The SV is notoriously costly to measure, and efficient estimators are the focus of much recent work since the early permutation algorithm with $L_2$ error bounds in $O(N^2 \log(N))$ samples~\cite{maleki2014bounding} under bounded utility.
Recently proposed estimators also reduce SV estimation to regression problems, although different than ours~\cite{lundberg2017unified,covert2021improving,jethani2021fastshap,kwon2021beta}. Beta-Shapley~\cite{kwon2021beta} is closest, as it generalizes data Shapley to consider different weightings based on subset size, which coincides with our AME under $p$-induced distribution (\eg Beta-Shapley is AME when $p\sim$ beta distribution). We also study this approach, and alternative distributions (including a truncated uniform which we found a bit better in practice).
None of these works study the sparse setting, or provide efficient $L_2$ bounds for estimators in this setting. This may stem from their focus on explaining {\em features}, in smaller settings than we consider for training data and in which sparsity may be less natural.
The most comparable work is that of Jia et.al~\cite{jia_towards_2020}, which provides multiple algorithms, including for the sparse, monotonous utility setting. Their approach uses compressive sensing, which is closely related to our LASSO based approach, and yields an $O(N\log\log N)$ rate. We significantly improve on this rate with an $O(k\log N)$ estimator, much more efficient in the sparse ($k \ll N$) regime.
Other estimation strategies, use-cases, and relaxations have been recently proposed for the \SV.
\cite{mitchell2022sampling} study efficient sampling of permutations. However, each ``permutation sample'' requires $N$ utility evaluations, and is thus incompatible with our setting.
\cite{frye2020asymmetric} focus on feature \SV's, and leverage the causal structure over features (on the data distribution) to decide value assignments. This is done by reweighing entire permutations of the features (\eg giving more weight to permutations where a given feature appears early). In contrast, the \AME reweighs the utility of different subsets of the data points, based on their size, which is orthogonal, and seems more amenable to sparse estimators.
\cite{harris2022joint} extend the \SV axioms to consider the joint effect of multiple players, prove the uniqueness of the solution, and derive its formulation. It measures the contribution of all subsets up to a specified subset size, which is however incompatible to our setup due to large $N$ (\eg there will be $O(N^2)\approx 2.5$ billion contribution terms just for subset size of two on our CIFAR-10 datasets). In contrast, the \AME studies the contribution for individual players, as does the regular \SV, and focuses on efficient estimation in the sparse regime. Our hierarchical setting considers fixed sources, and not all possible subsets.

\ignore{
Probability of Sufficiency (PS) also builds on causal inference theory, and measures the causal impact of switching data points' labels \mbox{\cite{chakarov2016debugging}}.
The proposed estimation techniques do not apply to complex ML models such as DNNs.
Probability of Sufficiency (PS)~\cite{chakarov2016debugging} measures the causal impact by switching data points' labels, but their approach does not apply to complex ML models such as DNNs.
In contrast, we propose a principled quantity that measures the marginal impact of a data point when added to different subsets of the data, and show that it captures individual contributions to group effects. 
}

Another related body of work is the work focused on defending against data poisoning attacks.
Reject On Negative Impact (RONI)~\cite{barreno_security_2010,baracaldo_mitigating_2017} describes an algorithm that measures the LOO effect of data points on subsets of the data. However, RONI is sample-inefficient and the paper does not prescribe a subset distribution to be used. As we explained previously, the choice of subset distribution can impact precision and recall.
Another line of work \cite{tran2018spectral,hayase2021spectre,chen2018detecting,tang2021demon,shen2016auror} uses outlier detection to identify poisoned data. These are not query aware and thus can select benign outliers. Other approaches~\cite{doan2020februus,tang2021demon,veldanda2020nnoculation} assume the availability of clean data, or make strong assumptions about the model, e.g., assuming that linear models~\cite{Jagielski2018ManipulatingML}, or the attack, e.g., assuming that the attack is a source-agnostic trigger attacks~\cite{gao2019strip}, a trigger attack with small norm or size~\cite{chou2018sentinet,Wang2019NeuralCI,udeshi2019model}, or a clean-label attack~\cite{peri2020deep}. None of these approaches can generalize across techniques.

Our work is also related to the existing literature on \textbf{data cleaning and management}. However, data cleaning approaches are not query driven, and must rely on other assumptions. As a result, many approaches depend on user provided integrity constraints \cite{chu2013holistic,chu2013discovering} or outlier detection~\cite{maletic2000data, hellerstein2008quantitative}. As a result these approaches cannot always identify poisoned data~\cite{koh2018stronger} and might also identify benign outliers. Recent approaches~\cite{krishnan2017boostclean,dolatshah2018cleaning}, have also used another downstream DNN for data cleaning. However, these approaches assume that corrupt data has an influence on test set performance, an assumption that may not hold in scenarios such as data poisoning. Finally, Rain~\cite{wu2020complaint} is a recent query-driven proposal that proposes using influence function to explain SQL query results. While Rain shares similar goals, we focus on DNNs, a different use case.


Finally, our work leverages, and builds on, a large body of existing work from different fields.
\ignore{
First is the {\bf causal inference} literature.
If we regard the inclusion of a data source as a treatment, our methodology is related to factorial experiments, from which our \quantity{} is inspired. 
For instance, ~\cite{kang2007demystifying, imbens_causal_2015} focuses on single treatment \quantity{} (i.e. only one source) and observational study rather than randomized experiment. 
Multiple treatments are introduced in \cite{egami_causal_2019,dasgupta_causal_2015,hainmueller_causal_2014}, though their quantity uses different population distribution than ours, and different estimation techniques.
In the computer science filed, Sunlight \cite{10.1145/2810103.2813614} uses a similar approach to study ad targeting. However, this paper uses only one sampling probability, and a holdout set for statistical confidence instead of our knockoff procedure.
}
First, our work is related to the {\bf causal inference} literature if we regard the inclusion of a data source as a treatment, from which \quantity{} is inspired.
For instance, \cite{kang2007demystifying, imbens_causal_2015} focuses on single treatment \quantity{} (i.e. only one source). 
Multiple treatments are introduced in \cite{egami_causal_2019,dasgupta_causal_2015,hainmueller_causal_2014}, though their quantity uses different population distribution than ours, and different estimation techniques.
In the computer science filed, Sunlight \cite{10.1145/2810103.2813614} uses a similar approach but with only one sampling probability and without knockoff procedure.
Second, {\bf sparse recovery} studies efficient algorithms to recover a sparse signal from high dimensional observations. We leverage LASSO \cite{lecue_regularization_2017} --with properties related to those of compressive sensing \cite{candes2006compressive}-- and knockoffs \cite{candes_panning_2017}.
Other approaches to the important factor selection problem exist, such as the analysis of variance (ANOVA) \cite{bondell_simultaneous_2009, egami_causal_2019, post_factor_2013} used in \cite{egami_causal_2019}, but we think LASSO is better suited to our use-case due to its scalability guarantees.
Third, our goal is related to {\bf group testing} \cite{noauthor_group_2021,group-survey} as discussed in \S\ref{sec:intro},
and studying if and how group testing ideas could improve our technique is an interesting avenue for future work.

\end{document}